\newcommand{\cmark}{\ding{51}}%
\newcommand{\xmark}{\ding{55}}%
\definecolor{boxframe}{RGB}{110,110,110}
\definecolor{boxhead}{RGB}{50,50,50}
\newtcolorbox{questiontemplate}[1][]{%
  enhanced,
  breakable,
  width=\linewidth,
  before skip=1.5ex, after skip=1.5ex,
  boxsep=1.5mm,
  left=3mm, right=3mm, top=1.5mm, bottom=1mm,
  colback=white,      
  colframe=boxframe,
  boxrule=0.8pt,
  arc=6mm, 
  title={\large\bfseries Question Template},
  halign title=center,
  colbacktitle=boxhead,    
  coltitle=white,    
  fonttitle=\sffamily,
  titlerule=0pt,
  title style={
    colback=boxhead,
    colframe=boxhead,
    left=4mm, right=4mm,
    top=2mm, bottom=2mm,
    arc=0mm,
    outer arc=6mm,            
  },
  #1
}
\theoremstyle{plain}
\newtheorem{theorem}{Theorem}[section]
\newtheorem{lemma}[theorem]{Lemma}
\newtheorem{corollary}[theorem]{Corollary}
\theoremstyle{definition}
\newtheorem{definition}[theorem]{Definition}
\newtheorem{assumption}[theorem]{Assumption}
\theoremstyle{remark}
\renewcommand{\paragraph}[1]{\vspace{1mm}\noindent\textbf{#1}.}
\icmltitlerunning{The Differences Between Direct Alignment Algorithms are a Blur
}
\begin{document}

\twocolumn[
  \icmltitle{The Differences Between Direct Alignment Algorithms are a Blur
}

  % It is OKAY to include author information, even for blind submissions: the
  % style file will automatically remove it for you unless you've provided
  % the [accepted] option to the icml2026 package.

  % List of affiliations: The first argument should be a (short) identifier you
  % will use later to specify author affiliations Academic affiliations
  % should list Department, University, City, Region, Country Industry
  % affiliations should list Company, City, Region, Country

  % You can specify symbols, otherwise they are numbered in order. Ideally, you
  % should not use this facility. Affiliations will be numbered in order of
  % appearance and this is the preferred way.
  \icmlsetsymbol{equal}{*}

  \begin{icmlauthorlist}
    \icmlauthor{Alexey Gorbatovski}{ttech}
    \icmlauthor{Boris Shaposhnikov}{ttech}
    \icmlauthor{Viacheslav Sinii}{ttech}
    \icmlauthor{Alexey Malakhov}{ttech}
    \icmlauthor{Daniil Gavrilov}{ttech}
  \end{icmlauthorlist}

  \icmlaffiliation{ttech}{T-Tech}

  \icmlcorrespondingauthor{Alexey Gorbatovski}{a.gorbatovskiy@t-tech.dev}

  % You may provide any keywords that you find helpful for describing your
  % paper; these are used to populate the "keywords" metadata in the PDF but
  % will not be shown in the document
  \icmlkeywords{Machine Learning, ICML}

  \vskip 0.3in
]

% this must go after the closing bracket ] following \twocolumn[ ...

% This command actually creates the footnote in the first column listing the
% affiliations and the copyright notice. The command takes one argument, which
% is text to display at the start of the footnote. The \icmlEqualContribution
% command is standard text for equal contribution. Remove it (just {}) if you
% do not need this facility.

% Use ONE of the following lines. DO NOT remove the command.
% If you have no special notice, KEEP empty braces:
\printAffiliationsAndNotice{}  % no special notice (required even if empty)
% Or, if applicable, use the standard equal contribution text:
% \printAffiliationsAndNotice{\icmlEqualContribution}

\begin{abstract}
Direct Alignment Algorithms (DAAs) simplify LLM alignment by directly optimizing policies, bypassing reward modeling and RL. While DAAs differ in their use of SFT (one-stage vs.\ two-stage) and the scalar score they optimize (likelihood vs.\ odds ratios), the key performance drivers remain underexplored. We present a systematic comparison and analyze a previously overlooked axis - the ranking objective (pairwise vs.\ pointwise). To isolate this factor, we propose a unified training framework across DAAs by (i) converting one-stage methods (ORPO, ASFT) into a two-stage pipeline with an explicit SFT phase and (ii) introducing a \(\beta\) parameter that places all methods in the same hyperparameter space and improves the quality of odds-ratio DAAs (ORPO, ASFT). Under this setup, the ranking objective emerges as the primary determinant of alignment quality, whereas the particular scalar score (policy–reference ratio vs.\ odds ratio) is secondary. We corroborate this on instruction-following tasks and further confirm it on math-reasoning benchmarks across model scales. Evidence suggests that this stems from how these objectives interact with prompt-specific biases, supported both by strictly controlled experiments and by observations on real data. Our findings underscore the need for nuanced evaluations in DAA research to avoid oversimplified claims of superiority.
\end{abstract}

\section{Introduction}
\label{sec:intro}

Direct Preference Optimization (DPO)~\citep{dpo}, rooted in RLHF \cite{rlhf, summarize}, has led to a proliferation of Direct Alignment Algorithms (DAAs)~\citep{simpo, ipo, nca}. These methods differ in design: most adopt DPO’s two-stage paradigm, modifying the loss function and retaining a policy, reference ratio and temperature parameter $\beta$~\citep{caldpo, apo}, while others, such as ORPO and ASFT~\citep{orpo, asft}, unify alignment and supervised fine-tuning (SFT) in a single stage using an odds-ratio objective without a reference policy. This variety has resulted in a fragmented literature, making it difficult to isolate which design choices actually drive improvements in alignment quality.

We focus on \emph{offline} alignment with \emph{static binary preference pairs}---excluding online sampling, listwise, or trajectory supervision---to isolate objective-level effects from pipeline confounders. We systematically analyze one-stage DAAs and provide a detailed motivation for converting them into a two-stage pipeline with an explicit SFT phase. Crucially, we show that introducing a \(\beta\) parameter, typically absent in one-stage odds-ratio methods, serves as an effective tempering mechanism and is essential for unlocking their full performance. By unifying all methods under this protocol, we place single- and two-stage DAAs in a common hyperparameter space and enable controlled comparison. Within this framework, we conduct comprehensive empirical studies on instruction-following and math-reasoning benchmarks using Llama~3 (3B, 8B), Mistral~7B, and Qwen~2.5 (7B, 14B) models, and systematically examine the data efficiency of DAAs with respect to SFT data volume.

\begin{figure*}[ht!]
% \vspace{-0.5em}
\centering
\includegraphics[width=0.99\textwidth]{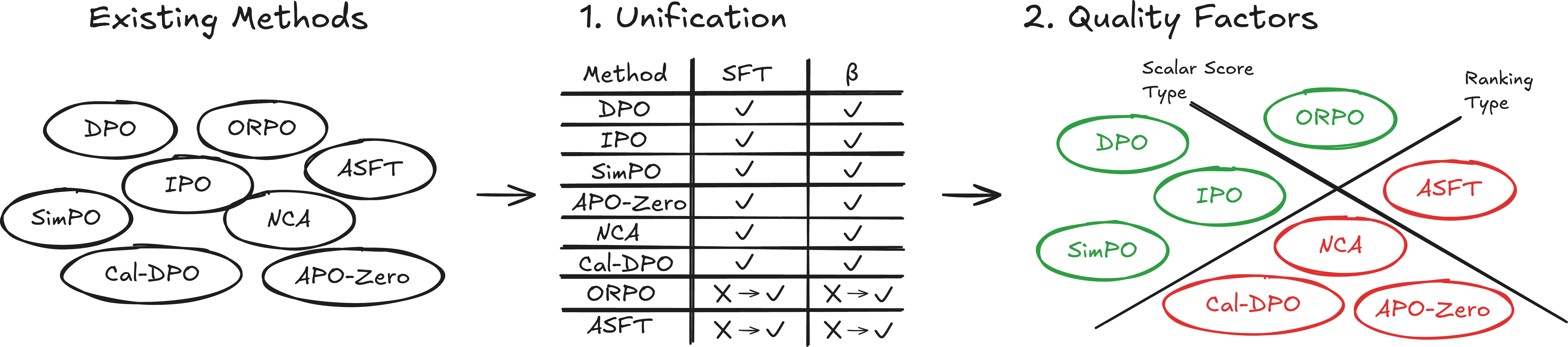}
\caption{\textbf{Overview of our work and main finding.} 
\textbf{Left:} Existing DAA methods differ in use of SFT and $\beta$.
\textbf{Center:} Our unified protocol makes SFT and $\beta$ explicit for all, bringing ORPO/ASFT into the same framework.
\textbf{Right:} We compare DAAs along two axes (\emph{scalar score type} and \emph{ranking type}) and find that \textbf{ranking type} (pairwise, green vs.\ pointwise, red) is the main determinant of alignment quality after unification.}
\label{img:tldr}
\vspace{-1.0em}
\end{figure*}

Our main contributions are: (i) We establish a unified training protocol for DAAs, demonstrating that moving the SFT term out of originally one-stage losses into a separate SFT phase and using an alignment-only loss with a proposed $\beta$ parameter is essential for maximizing performance, even for odds-ratio objectives. (ii) Within this unified setting, we find that previously reported advantages of various DAAs often disappear: after tuning, all methods perform similarly or worse than DPO. Our results indicate that \emph{ranking type} (pairwise vs. pointwise), rather than scalar-score choice or heuristic loss design, is the primary determinant of alignment quality, with both score types yielding comparable results. (iii) We provide evidence that observed performance gaps arise from the interaction between each objective and prompt-specific data biases, explaining why differences among DAAs emerge primarily at intermediate task difficulty; outside this regime, the distinctions between DAAs largely disappear.

Among our findings, we observe that most methods are highly data-efficient: with 5–10\% SFT, models reach \(\ge\)95\% of their full-data score. Our findings challenge claims of algorithmic superiority in the DAA literature and underscore the importance of systematic, controlled evaluation.

\section{Preliminaries}
\label{sec:preliminaries}

\subsection{Modeling Sequences}
\label{sec:model_seq}

Given a sequence \(y\) of length \(\lvert y\rvert\), the log-probability can be written as $\log p(y) = \sum_{i=1}^{\lvert y\rvert} \log p(y_i \mid y_{<i}),$
which may also be conditioned on another sequence \(x\). In practice, optimizing normalized log-probability
$\frac{1}{\lvert y\rvert} \log p(y) = \log\bigl(p(y)^{\frac{1}{\lvert y\rvert}}\bigr)$
often improves numerical stability and leads to better training. However, once normalized, the resulting quantity is no longer a strict probability measure. Throughout this paper, whenever we write \(p(y)\), we refer to this normalized version \(p(y)^{\frac{1}{\lvert y\rvert}}\). Whenever a method does not apply this normalization, we indicate it explicitly.

\citet{unlikelihood} introduced a log-unlikelihood term that reduces the probability of certain undesirable tokens:
\(\log \bigl(1 - p(c | y_{<i})\bigr)\) for \(c\in \mathcal{C}\). It can be extended to an entire sequence as
\(\log\bigl(1 - p(y)\bigr)\).

\subsection{Reinforcement Learning from Human Feedback}
Reinforcement Learning from Human Feedback (RLHF) \citep{rlhf, summarize} is a prominent approach to aligning language models. It generally has three stages:
% \vspace{-0.7em}
% \begin{itemize}[leftmargin=1.2em, itemsep=0.3em]

\paragraph{Supervised Fine-Tuning (SFT).} During the SFT stage, the model $\pi_\theta$ is trained to follow instructions by maximizing the probability of correct output $y$ given input $x$. For a single training pair $(x, y)$, we define the per-sample SFT loss as $\mathcal{L}_\mathrm{SFT}(\pi_\theta, x, y) = -\log \pi_\theta(y \mid x).$
During fine-tuning, we minimize the expectation of this per-sample loss over the training dataset $\mathcal{D}$: $\mathbb{E}_{(x, y) \,\sim\, \mathcal{D}} \Big[\mathcal{L}_\mathrm{SFT}(\pi_\theta, x, y)\Big].$
    
\paragraph{Reward Modeling (RM).} A reward model \(r_\psi(x, y)\) produces a satisfaction score. It is trained on preference pairs using the Bradley-Terry model \citep{bt}:
\(
\mathcal{L}_\mathrm{RM}(r_\psi) 
= - \mathbb{E}_{(x, y_w, y_l) \sim \mathcal{D}} \bigl[\log \sigma\bigl(r_\psi(x, y_w) - r_\psi(x, y_l)\bigr)\bigr],
 \)
where \(y_w\) is the preferred response and \(y_l\) is the less preferred one.

\paragraph{Reward Maximization.} The objective is to generate responses that maximize the learned reward, with a KL penalty to prevent reward hacking:
\(
\max_{\pi_\theta} \mathbb{E}_{x\sim \mathcal{D},\, y \sim \pi_\theta(y\mid x)}\bigl[r_\phi(x, y)\bigr] 
\;-\; \beta\,\mathbb{D}_{\text{KL}}\bigl[\pi_\theta(x, y)\,\|\,\pi_{\text{ref}}(x, y)\bigr].
\)
Reinforcement learning (RL) algorithms are commonly used to optimize this objective \citep{ppo, rlhf}.
% \end{itemize}

\subsection{Direct Alignment Algorithms}
\label{sec:daa}
Direct alignment algorithms replace the reward modeling and RL stages (but keep the SFT phase) with a single alignment step. Various preference-optimization loss functions have been proposed, employing these core components:

\vspace{-1.5em}
\noindent
\begin{itemize}[leftmargin=1.2em, itemsep=0.3em]
    \item \(r^\mathrm{ref}_\theta(y, x) = \log \bigl(\tfrac{\pi_\theta(y \mid x)}{\pi_\mathrm{ref}(y \mid x)}\bigr)\) from DPO \citep{dpo}, which acts as an implicit reward \(\beta\,r^\mathrm{ref}_\theta\). No length normalization is used.
    \item \(r^\mathrm{odds}_\theta(y, x) = \log \bigl(\tfrac{\pi_\theta(y \mid x)}{1 - \pi_\theta(y \mid x)}\bigr)\) utilized in ORPO \citep{orpo}, representing the odds of generating \(y\) versus not generating it. While not directly derived from an RL objective in the same way as \(r^\mathrm{ref}_\theta\), its empirical success in methods like ORPO and ASFT motivates its inclusion in our comparative analysis.
\end{itemize}
\vspace{-1.0em}

Several Direct Alignment Algorithms use these notations. Information on sequence probability normalization for these methods is presented in Appendix~\ref{app:probability_normalization}. 

\vspace{-1.0em}
\begin{itemize}[leftmargin=1.2em, itemsep=0.3em]
\item \textbf{Direct Preference Optimization (DPO)} \citep{dpo}:
    \(
    \mathcal{L}_\mathrm{DPO} 
    = - \log \sigma\bigl(\beta \,r^\mathrm{ref}_\theta(y_w, x) - \beta \,r^\mathrm{ref}_\theta(y_l, x)\bigr)
    \) (this method does not normalize probabilities by length);\footnote[2]{Unless otherwise noted, the expectation over \((x, y_w, y_l) \sim \mathcal{D}\) is taken.}
\item \textbf{Identity Preference Optimization (IPO)} \citep{ipo}:
    \(
    \mathcal{L}_\mathrm{IPO} 
    = \bigl(r^\mathrm{ref}_\theta(y_w, x) - r^\mathrm{ref}_\theta(y_l, x) - \tfrac{1}{2\beta}\bigr)^2;
    \)
\item \textbf{Simple Preference Optimization (SimPO)} \citep{simpo}:
    \(
    \mathcal{L}_\mathrm{SimPO} 
    = - \log \sigma\bigl(\beta\,\log \pi_\theta(y_w, x) - \beta\,\log \pi_\theta(y_l, x) - \gamma\bigr);
    \)
\item \textbf{Noise Contrastive Alignment (NCA)} \citep{nca}:
    \(
    \mathcal{L}_\mathrm{NCA} 
    = -\log \sigma \bigl(\beta\,r^\mathrm{ref}_\theta(y_w, x)\bigr) 
    - 0.5\,\log \sigma\bigl(-\beta\,r^\mathrm{ref}_\theta(y_w, x)\bigr) 
    - 0.5\,\log \sigma\bigl(-\beta\,r^\mathrm{ref}_\theta(y_l, x)\bigr);
    \)
\item \textbf{Calibrated Direct Preference Optimization (Cal-DPO)} \citep{caldpo}:
    \(
    \mathcal{L}_\mathrm{Cal-DPO} 
    = - \log \sigma\bigl(r^\mathrm{ref}_\theta(y_w, x) - r^\mathrm{ref}_\theta(y_l, x)\bigr) 
    + \bigl(r^\mathrm{ref}_\theta(y_w, x) - \tfrac{1}{2\beta}\bigr)^2 
    + \bigl(r^\mathrm{ref}_\theta(y_l, x) + \tfrac{1}{2\beta}\bigr)^2;
    \)
\item \textbf{Anchored Preference Optimization Zero (APO-Zero)} \citep{apo}:
    \(
    \mathcal{L}_\mathrm{APO-Zero} 
    = - \sigma\bigl(\beta\,r^\mathrm{ref}_\theta(y_w, x)\bigr) 
    + \sigma\bigl(\beta\,r^\mathrm{ref}_\theta(y_l, x)\bigr).
    \)
\end{itemize}

\subsection{One-Stage Alignment Methods}
\label{sec:single}
One-stage alignment (as a subset of DAA methods) merges SFT stage and direct alignment in one step by adding their losses:
\(
\mathcal{L}_{\mathrm{Single}}(\pi_\theta) 
= \mathbb{E}_{(x, y_w, y_l) \sim \mathcal{D}}
\bigl[\mathcal{L}_\mathrm{SFT}(\pi_\theta, x, y_w) + \lambda\,\mathcal{L}_\mathrm{Align}(\pi_\theta, x, y_w, y_l)\bigr],
\)
where \(\lambda\) is a hyperparameter, and no reference policy \(\pi_{\text{ref}}\) is required.

One-stage methods using odds ratios include:

\textbf{Odds Ratio Preference Optimization (ORPO)} \citep{orpo} is defined as: \(
\mathcal{L}_\mathrm{ORPO} 
= - \log \pi_\theta(y_w|x) 
- \lambda \underbrace{\log \sigma\bigl(r^\mathrm{odds}_\theta(y_w, x) - r^\mathrm{odds}_\theta(y_l, x)\bigr)}_{-\mathcal{L}_{\mathrm{ORPO}_\mathrm{Align}}}.
\)

\noindent
\textbf{Aligned Supervised Fine-Tuning (ASFT)} \citep{asft} is defined as: \(
\mathcal{L}_\mathrm{ASFT} 
= -\log \pi_\theta(y_w|x) 
- \lambda \Big(\underbrace{\log \sigma \bigl(r^\mathrm{odds}_\theta(y_w, x)\bigr) 
+  \log \sigma \bigl(- r^\mathrm{odds}_\theta(y_l, x)\bigr)}_{-\mathcal{L}_{\mathrm{ASFT}_\mathrm{Align}}}\Big).
\)

\section{Methodology}
\label{sec:method}

We compare DAAs under a standardized two-stage pipeline: (i) SFT on curated data to obtain $\pi_{\mathrm{SFT}}$, (ii) offline preference optimization on binary pairs starting from $\pi_{\mathrm{SFT}}$. Within this pipeline, DAAs differ along two axes: \textbf{(A) scalar score family} ($r^\mathrm{ref}_\theta$ vs.\ $r^\mathrm{odds}_\theta$), and \textbf{(B) ranking objective} (pairwise vs.\ pointwise). Most DAAs already fit this protocol; we now bring odds-ratio one-stage methods into this unified view.

\subsection{Bringing ORPO and ASFT into the Unified Protocol}
Our goal in this paper is to characterize the differences among various DAAs. Before proceeding, we summarize the objectives of ASFT and ORPO. These approaches are referred to as \textit{one-stage} methods because they perform alignment immediately after the base model is obtained, in contrast to methods that insert a separate SFT stage before alignment. Consequently, ASFT and ORPO omit the parameter \(\beta\); as one-stage methods, the distance to a reference policy is not required. At first glance, it may seem unnecessary to introduce \(\beta\) into one-stage methods, yet we will demonstrate that neither the one-stage design nor the absence of \(\beta\) is mandatory for ASFT and ORPO.

\subsubsection{ORPO and ASFT can operate without the SFT loss term and as two-stage methods}

First, note that \(\mathcal{L}_{\mathrm{ASFT}_\mathrm{Align}} = -\log \pi_\theta(y_w|x) - \log\!\big(1 - \pi_\theta(y_l|x)\big)\), and thus \(\mathcal{L}_\mathrm{ASFT} = -(1+\lambda)\log \pi_\theta(y_w|x) - \lambda \log\!\big(1 - \pi_\theta(y_l|x)\big)\); see Appendix~\ref{app:asft_bce} for a proof. Second, \(\mathcal{L}_\mathrm{ORPO} = \mathcal{L}_\mathrm{ASFT} + \lambda \log\!\big(\pi_\theta(y_w|x)(1 - \pi_\theta(y_l|x)) + \pi_\theta(y_l|x)(1 - \pi_\theta(y_w|x))\big)\); see Appendix~\ref{app:orpo_asft} for details.

From these equations it follows that \(\mathcal{L}_\mathrm{ORPO} \le \mathcal{L}_\mathrm{ASFT}\) and \(\mathcal{L}_{\mathrm{ORPO}_\mathrm{Align}} \le \mathcal{L}_{\mathrm{ASFT}_\mathrm{Align}}\) (see Appendix ~\ref{cor:orpoleasft}).

These results lead to three observations: (i) \(\mathcal{L}_\mathrm{ASFT}\) upper-bounds \(\mathcal{L}_\mathrm{ORPO}\); therefore, minimizing the former automatically minimizes the latter. (ii) \(\mathcal{L}_{\mathrm{ASFT}_\mathrm{Align}}\) can be regarded as the simplest DAA loss, mirroring the structure of BCE (see Appendix~\ref{thm:asft_bce}); (iii) Most importantly, the alignment terms of ORPO and ASFT already include the NLL component (\(-\log \pi_\theta(y_w|x)\)), making the additional \(\mathcal{L}_\mathrm{SFT}\) term in one-stage formulations potentially redundant. Thus, we hypothesize that removing the explicit \(\mathcal{L}_\mathrm{SFT}\) term and instead using a separate SFT stage followed by alignment will improve performance, motivating our \textbf{RQ1}: \textit{"Does converting ORPO and ASFT to a two-stage pipeline improve alignment quality?"} and experiments in Section~\ref{sec:res:base_vs_sft}, where we compare ASFT and ORPO both in their original one-stage form and in a two-stage variant that follows an explicit SFT phase.

\subsubsection{Tempering ASFT and ORPO}
\label{sec:beta_to_one_stage}

We now revisit the original one-stage methods from Section~\ref{sec:single} and examine how the alignment terms \(\mathcal{L}_{\mathrm{ORPO}_\mathrm{Align}}\) and \(\mathcal{L}_{\mathrm{ASFT}_\mathrm{Align}}\) compare. These terms optimize preferences and, depending on the coefficient \(\lambda\), can dominate or have a smaller impact on the final loss.

While \(\mathcal{L}_{\mathrm{ASFT}_\mathrm{Align}}\) and \(\mathcal{L}_{\mathrm{ORPO}_\mathrm{Align}}\) use \(r^\mathrm{odds}_\theta\), many DAAs incorporate a scaling parameter \(\beta\). To enable a unified comparison and investigate the role of \(\beta\), we introduce it to scale \(r^\mathrm{odds}_\theta\)\footnote{Note: some codebases use ``beta'' for loss weighting (our $\lambda$). Here, $\beta$ is a temperature scaling of the log-odds, not a weight.}:

% \vspace{-2.0em}
\noindent
\begin{equation*}
\small
% \begin{aligned}
\mathcal{L}^\beta_{\mathrm{ASFT}_\mathrm{Align}} 
= -\log \sigma(\beta r^\mathrm{odds}_\theta(y_w, x)) - \log \sigma(-\beta r^\mathrm{odds}_\theta(y_l, x)),
% \end{aligned}
\end{equation*}

\noindent
\begin{equation*}
\mathcal{L}^\beta_{\mathrm{ORPO}_\mathrm{Align}} 
= -\log \sigma(\beta r^\mathrm{odds}_\theta(y_w, x) - \beta r^\mathrm{odds}_\theta(y_l, x)).
\end{equation*}
Both \(\mathcal{L}^\beta_{\mathrm{ASFT}}\) and \(\mathcal{L}^\beta_{\mathrm{ORPO}}\) generalize their vanilla counterparts (recovering them when \(\beta = 1\)). As in DPO, \(\beta\) can be viewed as a \emph{temperature} or \emph{scaling} parameter that regulates the intensity of the preference for “good” odds. See Appendix ~\ref{sec:tempered_gradients} for gradient formulations and more details on these methods.

This unification (introduced not as a proposal of new standalone methods, but to enable consistent evaluation) raises \textbf{RQ2}: \textit{"Does the tempering factor enhance the alignment quality of ASFT and ORPO?"} in Section~\ref{sec:res:beta} and enables a direct comparison of all methods across different setups.

\subsection{On the Difference Between Direct Alignment Algorithms}
\label{sec:daa_differences}

With ORPO and ASFT now in the unified protocol, we can directly compare DAAs along the two axes introduced above. The first axis follows directly from how existing losses are written, but the second has rarely been highlighted in prior work despite being a fundamental design choice. The distinction between \(r^\mathrm{ref}_\theta\) and \(r^\mathrm{odds}_\theta\) is \emph{structural}: \(r^\mathrm{ref}_\theta\) originates in RLHF, whereas \(r^\mathrm{odds}_\theta\) is derived from the odds-ratio objective.\footnote{SimPO does not explicitly use a reference policy, but can be treated similarly if a uniform reference policy is assumed.} Empirical evidence comparing these scores in a standardized setting is, to our knowledge, still scarce. In contrast, the difference between pairwise and pointwise methods is \emph{functional}: pairwise methods (DPO, IPO, SimPO, ORPO) depend on relative reward differences, whereas pointwise methods (APO-Zero, NCA, Cal-DPO, ASFT) maximize the probability of chosen sequences and minimize that of rejected ones independently of their mutual gap. This echoes empirical findings in learning-to-rank \citep{learning_to_rank, ranknet, short_learning_to_rank, ranking_case_study}, where pairwise objectives often yield more robust ranking signals than pointwise ones, though the precise reasons and applicability to LLM alignment remain under active investigation.

The experiments reported in Section \ref{sec:res:pareto} examine our \textbf{RQ3}: \textit{"What factors of DAAs affect alignment quality?"} -- the scalar score (\(r^\mathrm{ref}_\theta\) vs.\ \(r^\mathrm{odds}_\theta\)) or the ranking type (pairwise vs.\ pointwise) -- has the greatest impact on DAA performance.

\section{Experimental Setup}
\label{sec:exp_setup}
We systematically compare and evaluate DAA methods using a standard training and instruction-following evaluation framework \cite{zephyr, simpo, trdpo}. Our main experiments use the Llama 3.1 8B model \cite{llama3modelcard}, trained on the UltraChat \cite{ultrachat} and UltraFeedback (UF) \cite{ultrafeedback} datasets, and evaluated on the AlpacaEval 2 \cite{alpaca_lc, alpaca_eval_git} and ArenaHard \cite{arenahard2024} benchmarks. For the Reddit TL;DR \cite{summarize} task, we employ the Llama 3.2 3B model, comparing it side by side with the “golden” validation split \cite{dpo, rafailov2024scaling} using the prompt in Appendix~\ref{app:gpt_prompts}.

\subsection{Base vs SFT-Initialized Models.}
\label{sec:base_vs_sft}
To investigate the impact of SFT and the applicability of one-stage loss \(\mathcal{L}_{\mathrm{Align}}\) component, we use the UF dataset for SFT (avoiding additional knowledge from UltraChat), and for pairwise preference optimization. We carefully tuned the hyperparameters to optimize each method's performance.

For the \emph{Base-initialized} setup, we perform a grid search over learning rates \(\{6\times10^{-6},\, 8\times10^{-6},\, 1\times10^{-5}\}\), inspired by values suggested in ORPO and ASFT, and explore \(\lambda \in \{0.1,\, 0.2,\, 0.5,\, 1.0\}\) for 1 and 2 training epochs keeping a similar budget to compare with the \emph{SFT-initialized} setup.

In the \emph{SFT-initialized} setup, we experiment with both \(\mathcal{L}_{\mathrm{ORPO}_{\mathrm{Align}}}\) and \(\mathcal{L}_{\mathrm{ASFT}_{\mathrm{Align}}}\) alone, as well as in combination with \(\mathcal{L}_{\mathrm{SFT}}\), following the original methods. We tune the learning rates \(\{5\times10^{-7},\, 7\times10^{-7},\, 1\times10^{-6}\}\) for one epoch, starting from an SFT model trained for 1 epoch at \(\ 6\times10^{-6}\).

\begin{figure*}[ht!]
% \vspace{-0.5em}
\centering
\includegraphics[width=0.90\textwidth]{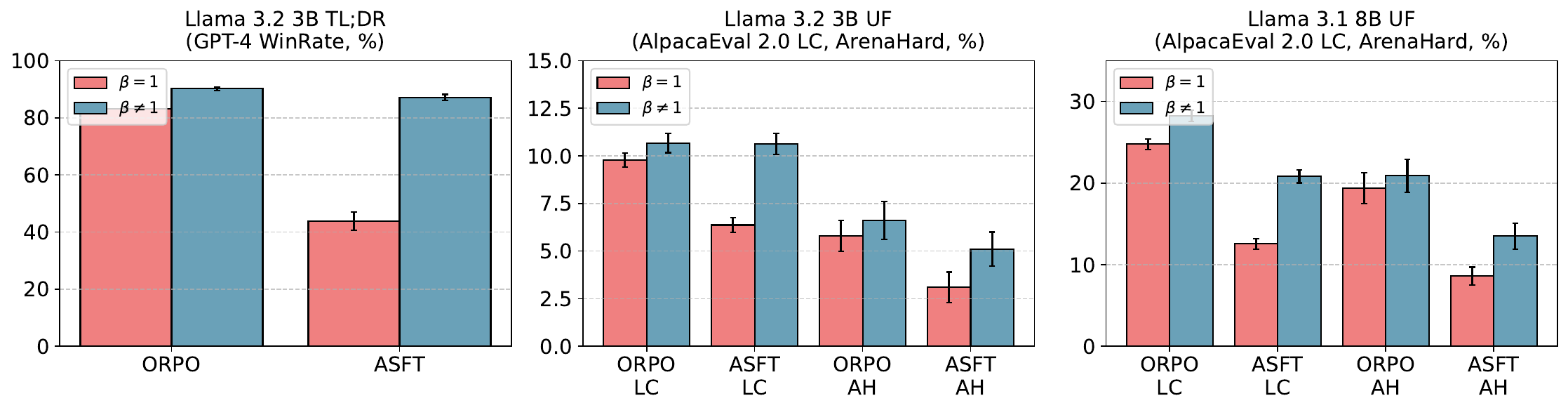}
\caption{
\textbf{Impact of the \(\beta\) Parameter on ASFT and ORPO Alignment Quality.} The plot shows how tuning \(\beta\) (Section~\ref{sec:beta_to_one_stage}) affects both ASFT and ORPO performance. Results are reported for GPT-4 Win Rate in the Llama 3.2 3B TL;DR setup and for AlpacaEval 2 LC Win Rate in the Llama 3.1 8B UF scenario. All other hyperparameters (e.g., learning rates) are selected via grid search, using each method's best configuration at \(\beta = 1\) as the baseline. See Section~\ref{sec:res:beta} for more details.}
\label{img:asft_orpo_beta}
\vspace{-1.0em}
\end{figure*}

\subsection{\(\beta\) Sensitivity.}

\label{sec:par:beta-sens}
Following the adaptation of ASFT and ORPO to include a \(\beta\) parameter (Section~\ref{sec:beta_to_one_stage}), all DAAs under consideration can now be compared on a more consistent basis. We conduct a comprehensive $\beta$-sensitivity analysis to (i) evaluate the impact of the $\beta$ parameter on the performance of ORPO and ASFT, and (ii) determine the peak alignment capabilities and relative performance of each method. We consider three scenarios:

% \vspace{-0.7em}
\textbf{Llama 3.2 3B TL;DR.} A relatively simpler Reddit TL;DR summarization task, evaluated via GPT side-by-side comparison on 500 samples from the “golden” validation split \cite{dpo, rafailov2024scaling}.
\noindent

\textbf{Llama 3.2 3B UF.} The UltraChat and UF datasets serve as more challenging alignment settings due to their coverage of diverse and complex tasks, including instruction following, code generation, creative writing, common sense reasoning, mathematical problem-solving, and general knowledge.
\noindent

\textbf{Llama 3.1 8B UF.} A larger, more capable model on the same UltraChat and UF datasets, allowing us to assess how increased model capacity influences \(\beta\)-sensitivity in these diverse tasks.

For the UF-based experiments, we measure quality using AlpacaEval~2 Length-Controlled (LC) Win-Rate and ArenaHard (AH) WR; for TL;DR, we rely on GPT-4o (2024-08-06) preference judgments. In each scenario, we sweep at least six \(\beta\) values and four learning rates \(\{1\times10^{-6},\, 7\times10^{-7},\, 5\times10^{-7},\, 3\times10^{-7}\}\) to determine peak alignment capabilities and relative performance. We additionally repeat the UF setup with Mistral 7B Base~\cite{mistral_7b}, using the same SFT and alignment pipeline, to test whether the Llama trend transfers to another model family. As an auxiliary diagnostic, we report KL divergence to a reference model and plot quality–KL Pareto fronts. For \textit{RQ3}, we also test DAA peak-performance generalization on math reasoning with Qwen~2.5 (7B/14B) \cite{qwen25} (Appendix~\ref{app:qwen-math-setup}). Further implementation details, including training procedures and generation hyperparameters, are provided in Appendix~\ref{app:impl_details}.

% \vspace{-0.5em}
\subsection{SFT Data Quantity.}
\label{sec:par:sft-quality}

Our findings in Section~\ref{sec:res:base_vs_sft} show that introducing an explicit SFT phase improves alignment quality - even for originally one-stage methods such as ORPO and ASFT. This enables a unified two-stage setup across all DAAs, where alignment begins from an SFT-initialized model. Given prior work on instruction tuning data efficiency \cite{lima} and distribution shift problem \cite{dpo_sup_ppo}, we prepare ablation study on sensitive different DAAs to SFT data volume.

We prepared seven SFT checkpoints by training Llama 3.1 8B Base on 1\%, 3\%, 5\%, 10\%, 25\%, 50\%, and 100\% of the UltraChat dataset (ranging from 2{,}079 to 207{,}865 records) using our \emph{SFT-initialized} setup. We then applied each alignment method -- using \emph{optimal hyperparameters} from our \(\beta\)-sensitivity experiments (Appendix Table~\ref{tab:best_hypers}) -- to these seven SFT checkpoints and the original base model. Finally, we used AlpacaEval 2 LC to assess how model performance varies with the amount of SFT data used.

% \vspace{-0.5em}

\begin{table}[!tb]
  \centering
  \footnotesize
  \renewcommand{\arraystretch}{1.2}
  \begin{tabular}{@{}r|c|cc|c@{}}
  \toprule
  \textbf{Init} & \textbf{Method} & \textbf{LC\% \scriptsize{(std)}} & \textbf{WR\% \scriptsize{(std)}} & \textbf{AH\% \scriptsize{(CI)}} \\
  \midrule
  Base & SFT & 6.7 \scriptsize(0.43) & 4.5 \scriptsize(0.63) & 3.5 \scriptsize(-0.7, 0.8) \\ \hline
  SFT & ORPO & \textbf{24.1} \scriptsize(0.84) & \underline{17.8} \scriptsize(1.17) & \underline{15.3} \scriptsize(-1.6, 1.8) \\
  SFT & ASFT & 16.4 \scriptsize(0.72) & 11.9 \scriptsize(0.99) & 10.6 \scriptsize(-1.2, 1.3) \\ \hline
  Base & ORPO$^\dag$ & 14.8 \scriptsize(0.71) & 10.3 \scriptsize(0.95) & 8.4 \scriptsize(-1.3, 1.3) \\
  Base & ASFT$^\dag$ & 14.5 \scriptsize(0.73) & 10.2 \scriptsize(0.94) & 7.5 \scriptsize(-1.1, 1.2) \\ \hline
  SFT & ORPO$^\dag$ & 13.4 \scriptsize(0.69) & 9.3 \scriptsize(0.91) & 7.7 \scriptsize(-0.9, 1.1) \\
  SFT & ASFT$^\dag$ & 11.4 \scriptsize(0.63) & 7.5 \scriptsize(0.83) & 7.5 \scriptsize(-1.1, 1.1) \\ \hline
  SFT & DPO & \underline{23.4} \scriptsize(0.85) & \textbf{20.0} \scriptsize(1.18) & \textbf{17.5} \scriptsize(-1.8, 1.8) \\
  \bottomrule
  \end{tabular}
  \vspace{0.5em}
  \caption{\textbf{Base and SFT-initialized alignment methods on the Llama 3.1 8B model with the UF dataset.} SFT-initialized methods demonstrate better performance compared to their traditional formulations without \(\mathcal{L}_{\mathrm{SFT}}\). Results marked with \( \dag \) correspond to training with \(\mathcal{L}_{\mathrm{SFT}}\), using the best hyperparameters: \(\text{lr}=1\times10^{-6}\) for ORPO and \(\text{lr}=7\times10^{-7}\) for ASFT. For other setups, the best hyperparameters are: \(\text{lr}=5\times10^{-7}\) for standard SFT ORPO/ASFT, and \(\text{lr}=1\times10^{-5}\)/\(6\times10^{-6}\) for Base ORPO/ASFT.}
  \label{tab:base_vs_sft}
  \vspace{-2.5\baselineskip}
  \end{table}

\begin{table*}[!t]
\centering
\scriptsize
\setlength{\tabcolsep}{2.2pt}
\resizebox{\textwidth}{!}{%
\begin{tabular}{c|ccc|ccc|ccc}
\toprule
\multirow{3}{*}{\textbf{Method}} & \multicolumn{3}{c}{\textbf{Llama 3.2 3B UF}} & \multicolumn{3}{c}{\textbf{Llama 3.1 8B UF}} & \multicolumn{3}{c}{\textbf{Mistral 7B UF}} \\
\cmidrule(lr){2-10}
& \multicolumn{2}{c}{\textbf{AlpacaEval 2}} & \textbf{ArenaHard} & \multicolumn{2}{c}{\textbf{AlpacaEval 2}} & \textbf{ArenaHard} & \multicolumn{2}{c}{\textbf{AlpacaEval 2}} & \textbf{ArenaHard} \\
\cmidrule(lr){2-3} \cmidrule(lr){4-4} \cmidrule(lr){5-6} \cmidrule(lr){7-7} \cmidrule(lr){8-9} \cmidrule(lr){10-10}
& \textbf{LC\% (std)} & \textbf{WR\% (std)} & \textbf{WR\% (CI)} & \textbf{LC\% (std)} & \textbf{WR\% (std)} & \textbf{WR\% (CI)} & \textbf{LC\% (std)} & \textbf{WR\% (std)} & \textbf{WR\% (CI)} \\
\midrule
SFT 
& 5.02 (0.34)  & 3.21 (0.55)  & 1.4 (-0.4, 0.4) 
& 10.27 (0.54) & 5.44 (0.70)  & 2.6 (-0.5, 0.6)
& 6.88 (0.32) & 3.58 (0.58) & 2.4 (-0.6, 0.7) \\ \midrule
DPO 
& \textbf{11.43} (0.58) & 11.79 (0.99) & \underline{6.8} (-1.0, 0.9)
& 26.82 (0.77) & 23.69 (1.25) & 19.0 (-1.9, 1.8)
& \underline{24.18} (0.08) & \underline{22.11} (1.28) & 15.7 (-1.9, 1.6) \\
IPO & \underline{11.24} (0.60) & 11.67 (1.01) & \textbf{6.8} (-1.0, 1.1)
& \underline{28.18} (0.83) & 24.43 (1.26) & 19.1 (-1.6, 1.5)
& \textbf{24.52} (0.06) & 21.80 (1.29) & 15.9 (-1.6, 2.3) \\
SimPO 
& 10.56 (0.44) & \underline{11.94} (0.95) & 6.4 (-1.0, 1.1)
& 27.65 (0.77) & \underline{25.62} (1.29) & \textbf{21.5} (-1.9, 1.9)
& 23.05 (0.10) & \textbf{23.25} (1.31) & \textbf{22.3} (-1.8, 1.9) \\
ORPO 
& 10.67 (0.50) & \textbf{12.23} (0.97) & 6.6 (-1.0, 1.1)
& \textbf{28.25} (0.71) & \textbf{28.59} (1.33) & \underline{20.9} (-2.0, 2.0)
& 23.20 (0.07) & 21.07 (1.27) & \underline{19.4} (-1.2, 1.7) \\
\midrule
APO Zero 
& 10.36 (0.53) & 11.22 (0.98) & 6.0 (-1.0, 0.9)
& 23.15 (0.76) & 19.03 (1.18) & 17.3 (-1.8, 1.8)
& 22.58 (0.11) & 18.98 (1.20) & 14.6 (-2.1, 1.4) \\
NCA 
& 10.33 (0.53) & 11.02 (0.97) & 5.1 (-0.7, 0.8)
& 23.21 (0.80) & 18.67 (1.17) & 15.1 (-1.5, 1.6)
& 21.47 (0.13) & 17.74 (1.17) & 12.0 (-1.2, 1.7) \\
Cal-DPO 
& 10.62 (0.57) & 10.15 (0.94) & 4.8 (-0.9, 0.9)
& 23.19 (0.82) & 18.85 (1.18) & 15.2 (-1.5, 1.6)
& 22.48 (0.15) & 18.41 (1.18) & 12.7 (-1.6, 1.5) \\
ASFT 
& 10.63 (0.55) & 9.21 (0.88)  & 5.1 (-0.9, 0.9)
& 20.82 (0.79) & 16.34 (1.13) & 13.5 (-1.6, 1.5)
& 21.46 (0.23) & 15.10 (1.09) & 13.6 (-1.3, 1.8) \\
\bottomrule
\end{tabular}
}
\vspace{0.5em}
\caption{\textbf{AlpacaEval 2 and ArenaHard Results for Llama 3.2 3B, Llama 3.1 8B, and Mistral 7B UF.} The SFT models were trained on UltraChat and aligned on UltraFeedback. The best hyperparameters for each method were selected according to Section~\ref{sec:par:beta-sens}. Bold values indicate the best performance for each benchmark, while underlined values represent the second-best performance. See Section \ref{sec:res:pareto} for more details.}
\label{tab:alplaca}
\vspace{-1.0em}
\end{table*}

\section{Results}
\label{sec:results}

\subsection{RQ1: Does converting ORPO and ASFT to a two-stage pipeline improve alignment quality?}
\label{sec:res:base_vs_sft}

As shown in Table~\ref{tab:base_vs_sft}, the performance of ORPO and ASFT methods improves significantly when the alignment loss \(\mathcal{L}_{\mathrm{Align}}\) is applied after a preceding SFT stage. In particular, ORPO achieves results comparable to classical DPO in both LC Win Rate and AH WR metrics. In contrast, ASFT shows notable gains in AH WR after the SFT stage, although it still underperforms compared to ORPO or DPO. This performance difference aligns with our theoretical insights (Corollary~\ref{cor:orpoleasft}), as optimizing the ASFT objective, an upper bound on ORPO, appears less effective.

For one-stage methods, the use of \(\lambda = 1\) provides the best results within the explored grid of \(\lambda \in \{0.1,\, 0.2,\, 0.5,\, 1.0\}\), especially after two epochs of training. However, combining \(\mathcal{L}_{\mathrm{SFT}}\) and \(\mathcal{L}_{\mathrm{Align}}\) in a one-stage setup leads to suboptimal results compared to explicitly separating these phases, even when starting from an SFT-trained model. Incorporating an explicit SFT stage improves overall performance for ORPO and ASFT methods. Therefore, all further experiments focus on applying the \(\mathcal{L}_{\mathrm{Align}}\) components of ORPO and ASFT on top of an SFT-trained model.

\subsection{RQ2: Does the tempering factor enhance the alignment quality of ASFT and ORPO?}
\label{sec:res:beta}

Figure~\ref{img:asft_orpo_beta} illustrates that introducing the \(\beta\) parameter (as described in Section~\ref{sec:beta_to_one_stage}) improves the performance of both ASFT and ORPO \(\mathcal{L}_{\mathrm{Align}}\) in our tested scenarios. For a fair comparison, we used the best-performing learning rate for each baseline (\(\mathcal{L}_{\mathrm{ASFT}_\mathrm{Align}}\) and \(\mathcal{L}_{\mathrm{ORPO}_\mathrm{Align}}\)) while fixing \(\beta = 1\). In the Llama 3.2 3B TL;DR experiment, these adjustments led to an improvement of +7.0 for ORPO and +43.4 for ASFT in GPT-4 WR. In the Llama 3.1 8B UF setup, tuning \(\beta\) provided additional gains of +3.46 for ORPO and +8.27 for ASFT on the AlpacaEval 2 LC WR.

\subsection{RQ3: What factors of DAAs affect alignment quality?}
\label{sec:res:pareto}

Following the setup and evaluation scenarios described in Section~\ref{sec:par:beta-sens}, we assess the peak performance and KL divergence of each DAA under consideration, including the unified \(\mathcal{L}^{\beta}_{\mathrm{ASFT}_\mathrm{Align}}\) and \(\mathcal{L}^{\beta}_{\mathrm{ORPO}_\mathrm{Align}}\), under a common hyperparameter search space and two-stage training setup. Our analysis emphasizes how differences in scalar score (\(r^\mathrm{ref}_\theta\) vs.\ \(r^\mathrm{odds}_\theta\)) and objective formulation (pairwise vs.\ pointwise) affect alignment quality.

\paragraph{Llama 3.2 3B TL;DR:} Table~\ref{fig:3b_sbs_tldr} presents a comparison of all methods on the Reddit TL;DR validation subset, using their best hyperparameters. Most methods achieve a GPT-4 Win Rate exceeding 90\%, indicating robust summarization performance on this relatively straightforward task. ASFT is slightly lower at 87.2\% Win Rate, but still demonstrates strong overall results.

\begin{table}[th!]
% \vspace{-2.0em}
\centering
\small
\footnotesize
\renewcommand{\arraystretch}{1.1}
\begin{tabular}{c c c c}
    & \textbf{Win \%} & \textbf{Tie \%} & \textbf{Lose \%} \\
    \hline
    SFT & 35.6 & 4.8 & \textbf{59.6} \\ \hline
    DPO & \textbf{91.2} & 1.0 & 7.8 \\ \hline
    IPO & \textbf{91.4} & 0.4 & 8.2 \\ \hline
    SimPO & \textbf{91.6} & 0.2 & 8.2 \\ \hline
    ORPO & \textbf{90.2} & 0.6 & 9.2 \\ \hline
    APO Zero & \textbf{92.6} & 0.6 & 6.8 \\ \hline
    NCA & \textbf{91.8} & 1.0 & 7.2 \\ \hline
    Cal-DPO & \textbf{91.4} & 0.4 & 8.2 \\ \hline
    ASFT & \textbf{87.2} & 1.0 & 11.8 \\
    \hline
\end{tabular}
\vspace{0.5em}
\caption{\textbf{GPT-4 Evaluation of Llama 3.2 3B TL;DR setup.} The comparison shows multiple alignment methods (rows) using their best hyperparameters. Most methods exceed 90\% Win Rate; ASFT achieves 87.2\%, maintaining robust summarization performance. See Section~\ref{sec:res:pareto} for more details.}
\label{fig:3b_sbs_tldr}
\vspace{-2.5em}
\end{table}

\paragraph{UF instruction-following setups:} Table~\ref{tab:alplaca} summarizes the results for Llama 3.2 3B, Llama 3.1 8B, and Mistral 7B on UF. For the smaller 3B model, the methods perform similarly on LC WR, with slight differences emerging on AH. Although these differences align with the pairwise vs. pointwise distinction (e.g., DPO, IPO, ORPO, SimPO vs. APO-Zero, NCA, Cal-DPO, ASFT), no single approach consistently dominates across metrics. The overlap in confidence intervals further indicates that the results for these methods are statistically similar in this setup, with no clear separation. One-sided permutation tests over all \(\binom{8}{4}=70\) group assignments, reported in Appendix~\ref{app:mtbench_stats}, formalize this pattern: at 8B all metrics reach \(p=0.014\), whereas the 3B results are mixed.

\begin{figure*}[h!]
  \centering
  \begin{subfigure}[t]{0.48\textwidth}
    \centering
    \includegraphics[width=\textwidth]{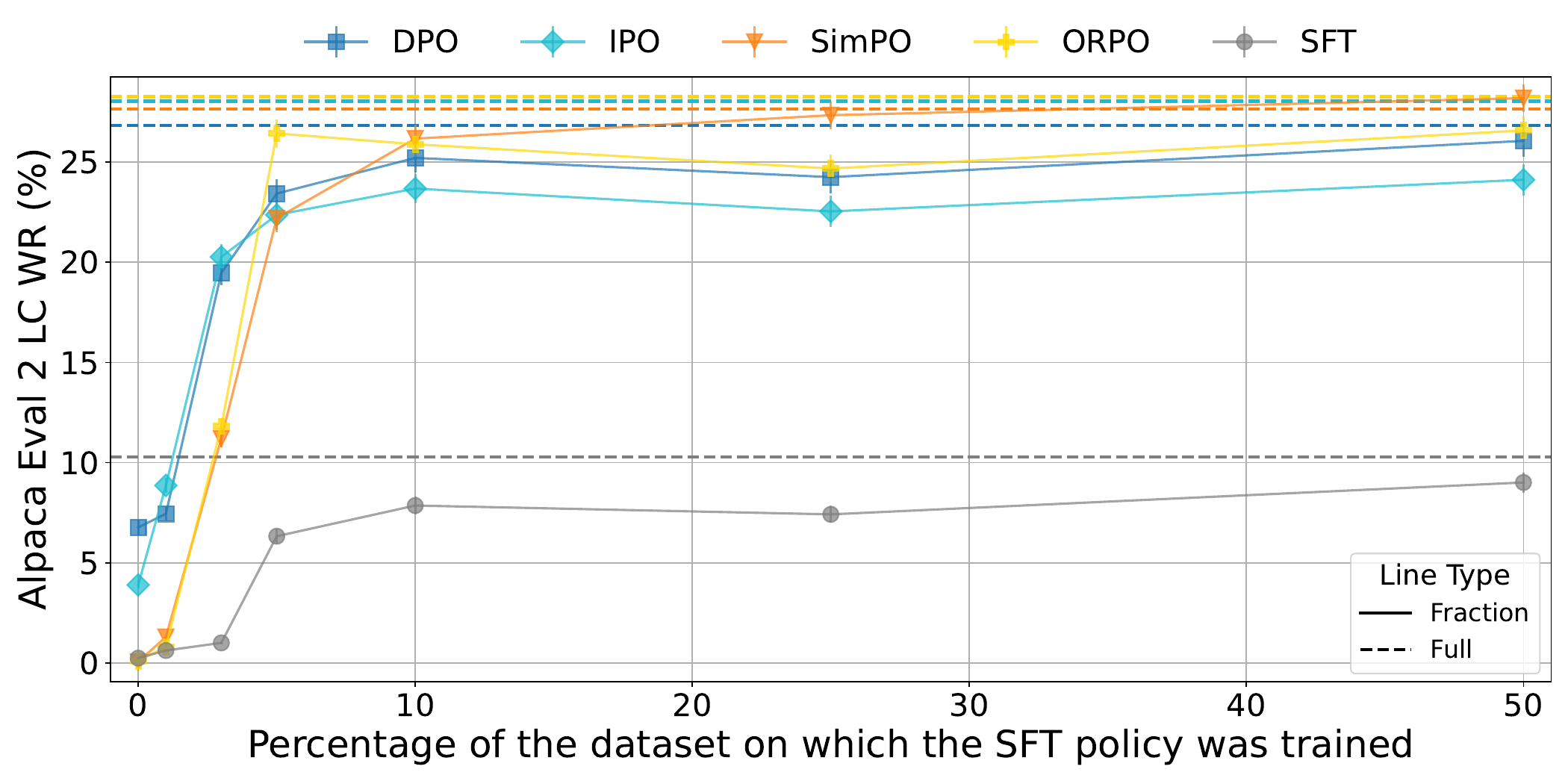}
    \caption{Pairwise}
    \label{fig:sft_pairwise}
  \end{subfigure}\hfill
  \begin{subfigure}[t]{0.48\textwidth}
    \centering
    \includegraphics[width=\textwidth]{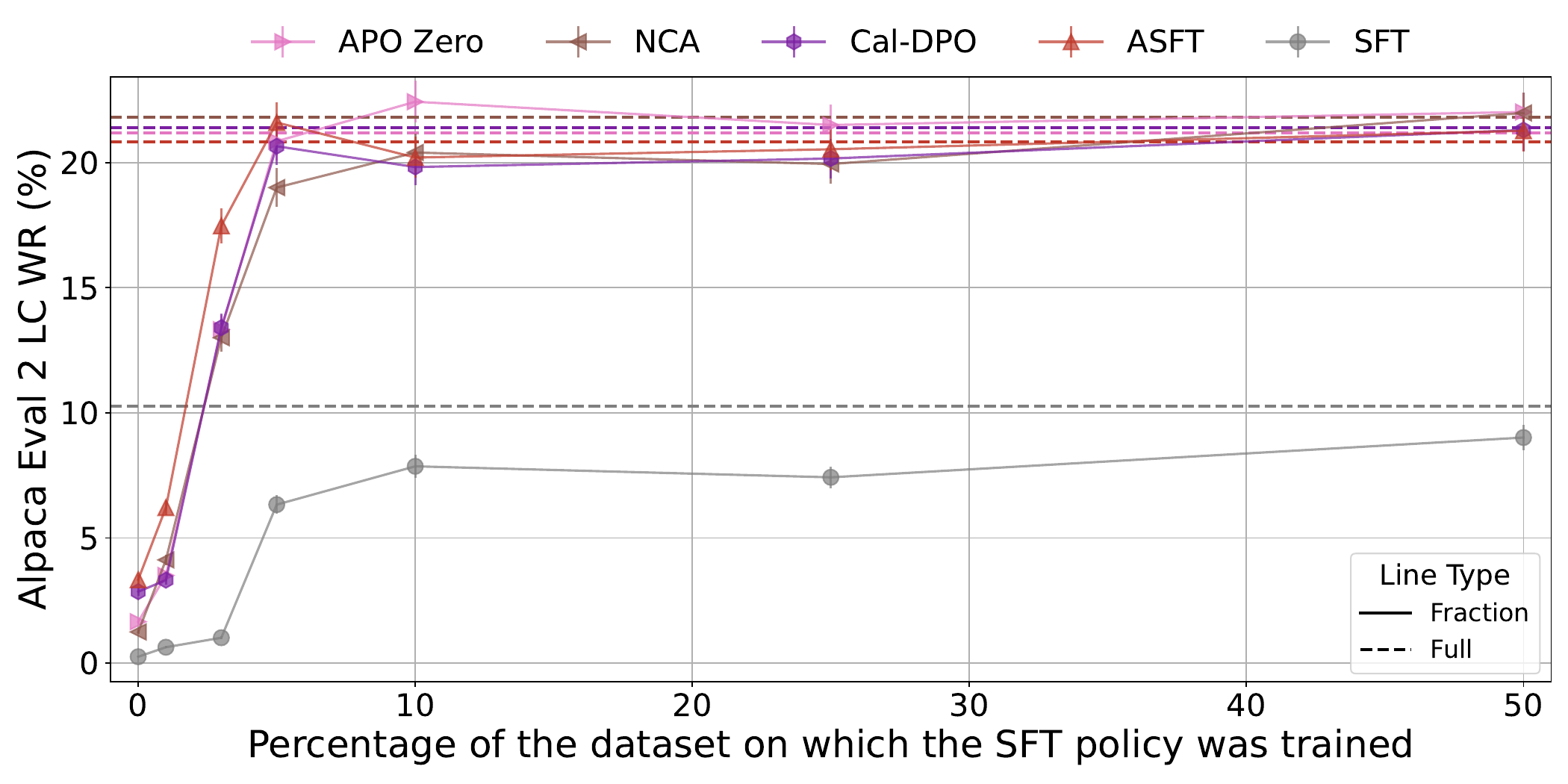}
    \caption{Pointwise}
    \label{fig:sft_pointwise}
  \end{subfigure}
  \caption{\textbf{Impact of SFT Dataset Size on Alignment Quality.} Performance of the pairwise (a) and pointwise (b) alignment methods on AlpacaEval 2 (LC WR metric) when the SFT policy is trained on different fractions of the UltraChat dataset. Even a small fraction of SFT data (e.g., 5-10\%) yields substantial gains over starting from the raw base model. See Section~\ref{sec:res:sft} for more details.}
  \label{fig:sft_results}
  % \vspace{-1.0em}
\end{figure*}

\begin{figure}[t!]
% \vspace{-1.0em}
    \centering
    \includegraphics[width=0.50\textwidth]{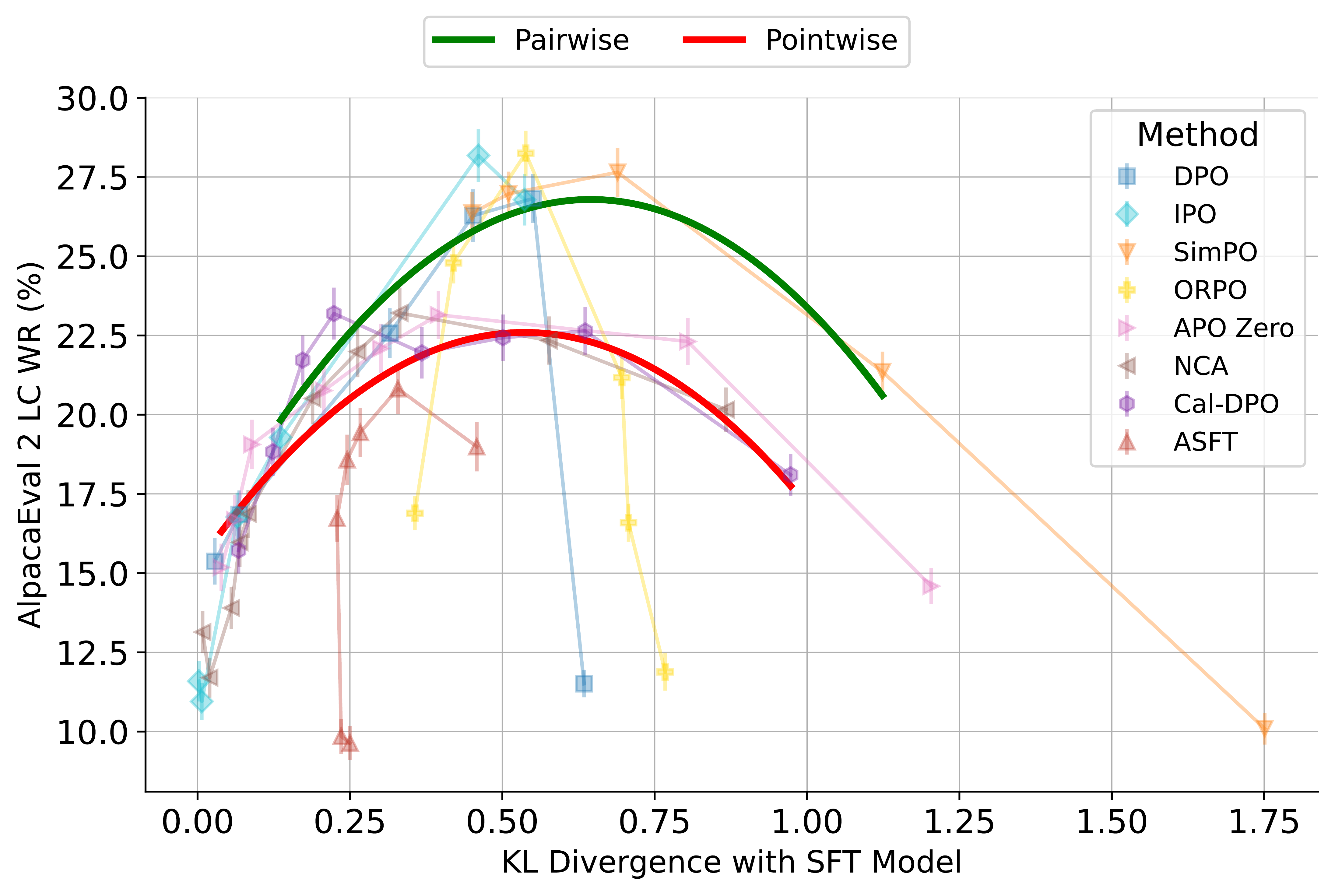}
    \caption{\textbf{Pareto front for alignment quality and KL divergence.} Results for Llama 3.1 8B UF on AlpacaEval 2 LC. Methods are grouped into pairwise and pointwise categories, with pairwise achieving higher LC values while remaining within overlapping confidence intervals.}
    \label{img:8b-pareto}
\vspace{-2.5em}
\end{figure}

In contrast, the 8B model more clearly differentiates performance by ranking type: pairwise methods generally achieve higher peak scores on AlpacaEval~2 and ArenaHard, with ORPO best overall. The Mistral 7B UF results show the same separation on the same benchmarks: every pairwise method outperforms every pointwise method on AlpacaEval~2 LC/WR and ArenaHard, with SimPO strongest on AH and IPO best on LC. On Qwen~2.5 7B/14B Math CoT, pairwise DAAs similarly match or exceed pointwise ones, a structural trend that we further validate on AlphaPO and f-DPO parameterizations (Appendix\ref{app:qwen-alpha-fkl}), while scalar score type (\(r^\mathrm{ref}_\theta\) vs.\ \(r^\mathrm{odds}_\theta\)) yields no consistent performance differences (Appendix~\ref{app:qwen-math-results}). Top-3 hyperparameter robustness further rules out a single lucky configuration: at 8B, every pairwise method's top-3 LC mean exceeds every pointwise method's top-3 LC mean (Table~\ref{tab:top3_hp_robustness}). Note that \(r^\mathrm{odds}_\theta\)-based methods do not start from KL~\(\approx 0\) at high \(\beta\) since there is no explicit constraint toward \(\pi_{\text{ref}}\); gradient scaling via \(\beta\) still implicitly limits update magnitude (see Appendix~\ref{sec:tempered_gradients}). Pareto fronts for the remaining setups are provided in Appendix~\ref{app:pareto}. For completeness, see Appendix~\ref{app:beta_lr} for results with varying $\text{lr}/\beta$ ratios.

\subsection{Ablation study on SFT data volume sensitivity}
\label{sec:res:sft}

Transforming ORPO and ASFT into two-stage methods enables a direct ablation on SFT data volume. Figures~\ref{fig:sft_pairwise} and \ref{fig:sft_pointwise} show that all methods tend to saturate around 10\% of UltraChat, reaching $\ge$ 95\% of their full-data performance. Pairwise methods generally achieve higher alignment quality than pointwise ones once the data exceeds 5\%.

In the low-data regime (1-5\%), DPO and IPO - both using a reference policy perform better. Interestingly, at 3\% SFT, ASFT surpasses all other pointwise methods and some pairwise ones (e.g., ORPO, SimPO), while remaining behind DPO and IPO. These trends suggest nuanced dynamics worth further investigation and research. Nonetheless, the overall conclusion is clear - all DAAs benefit from SFT and require only 5-10\% of the data to realize most of their alignment potential, regardless of their pairwise or pointwise formulation.

\section{Discussion}
\label{sec:discussion}

Having combined all the results, one key question remains: \textit{Why do pairwise objectives outperform pointwise ones?}  
First, assume that tasks may vary in difficulty depending on both the dataset and the model size. At the two extremes -- very easy (simple datasets and large models) or very hard (difficult datasets and small models) -- we observe (Llama 3.2 3B TL;DR/UF setups), little difference in quality between pointwise and pairwise approaches. For tasks of intermediate difficulty, however, pairwise methods consistently outperform pointwise ones (Llama 3.1 8B UF and Mistral 7B UF).

To understand why, observe that both $r_\theta^{\text{ref}}$ and $r_\theta^{\text{odds}}$ can be written using a single scoring function $r_\theta(x, y)$ defined over prompt-completion pairs. This allows us to define the marginalized score $\mathbb{E}_y[r_\theta(x, y)]$, which reflects how high or low will the average score be across all $y$ for a fixed $x$. Any dataset (and therefore any model trained on it) inherits a bias that mirrors $b_\theta(x):=\mathbb{E}_y[r_\theta(x, y)]$.

We hypothesise that observed performance gap stems from how each objective interacts with this bias, as formalized in Appendix~\ref{app:theory}. This analysis assumes offline, static preference data, consistent with the scope of our experiments, and our conclusions do not automatically carry over to online or iterative preference optimization (e.g., Online DPO) settings. Once a model has learned part of the ranking among continuations for a prompt $x$, further optimization can move along two qualitatively different directions: (i) improving the ranking on harder or mis-ranked examples by changing the gaps $r_\theta(x,y_w)-r_\theta(x,y_l)$, and (ii) shifting all scores for that prompt in roughly the same direction, thereby modifying the marginalized score $b_\theta(x)$ while largely preserving the order. In the notation of Appendix~\ref{app:theory}, pointwise objectives generically induce a non-zero total score gradient $G_\theta(x)$ and thus explicitly incentivize updates of type~(ii), whereas pairwise objectives satisfy $G_\theta(x)=0$ and are structurally indifferent to such uniform shifts.

Intuitively, pointwise methods keep pushing $r_\theta(x,y_w)$ upward and $r_\theta(x,y_l)$ downward even on already-easy pairs, implementing a form of bias \emph{unlearning} on $b_\theta(x)$ that consumes capacity which could otherwise be spent on harder examples.

Pairwise training, by contrast, only requires that $y_w$ score higher than $y_l$ with strength controlled by $\beta$. Compared to pointwise losses, such pairwise updates structurally offer fewer direct avenues for reshaping the marginalized score $\mathbb{E}_y[r_\theta(x, y)]$ itself. Refining previously learned examples therefore consumes little extra capacity, allowing the model to focus on harder cases. Thus, for \textit{hard} tasks there is insufficient capacity for the unlearning step, so both objectives perform similarly. For \textit{easy} tasks, unlearning does not exhaust capacity, enabling pointwise methods to catch up. In the \textit{intermediate} regime, capacity is sufficient to unlearn bias in pointwise methods, \emph{but not} address harder examples, leading to a misalignment that makes pointwise objectives less efficient.

We ran additional experiments to test this hypothesis; the results appear in Appendix~\ref{app:toy_exp}. Previously, distinctions between DAA objectives were unclear, but our findings show that \textbf{they differ in how they handle dataset-induced biases}; whether bias removal is beneficial remains an open question. Beyond the $r_\theta^{\text{ref}}$ and $r_\theta^{\text{odds}}$ parameterizations, we examine (Appendix~\ref{app:qwen-alpha-fkl}) two scalar score families from recent work, AlphaPO~\citep{alpha_po} ($r_\alpha$) and the Forward-KL variant of f-DPO~\citep{f_dpo}. In both cases, pairwise objectives again outperform their pointwise counterparts, further supporting our main findings in the static binary-preference regime of this paper.

\section{Related Work}
\label{sec:related}

\paragraph{Unifying frameworks for DAAs.} Recent works have developed increasingly general formulations of direct alignment objectives—spanning convex formulations~\citep{gpo}, $f$-divergences~\citep{fpo, f_dpo}, mutual information views~\citep{many_dpo_one}, and modular analyses of DPO variants via reward shaping or margins~\citep{reward_aware, rainbowpo, alpha_po, alpha_dpo, principled_po}. \citet{insights_into_alignment} also compare DAA variants, but cover fewer methods at a single model scale with fixed \(\beta\), omit ORPO/ASFT, and do not isolate ranking type. These advances focus primarily on the \emph{pairwise} DPO-style family, exploring alternative score parameterizations and divergence measures while keeping the ranking objective itself fixed. In contrast, we establish a common ground for comparing \emph{across} different algorithmic families (pairwise vs.\ pointwise; odds-ratio vs.\ policy–reference–ratio scores) that were previously incomparable due to disparate training protocols. Our unified perspective reveals that, across a broad range of score parameterizations (including AlphaPO-style rewards and the fKL variant of f-DPO), the choice of ranking objective is the main structural factor underlying DAA performance, with score parameterization playing only a secondary role.

\paragraph{Beyond binary pairwise preferences.} Other studies investigate specific directions beyond the standard pairwise setup. \citet{lipo} frame alignment as listwise ranking, while methods like TriplePO and TreePO~\citep{triple_po, tree_po} leverage richer supervision signals (e.g., gold trajectories or multi-branch preference trees). These approaches, though promising, require data formats beyond standard binary preferences and thus fall outside the scope of our controlled comparison of offline DAAs on static preference pairs.

\paragraph{Online vs offline optimization.} Another direction compares offline DPO and RLHF, exposing DAAs' limits \citep{dpo_sup_ppo, sft_mem_rl_gen, understanding_gap}, while \citet{calandriello2024human} study online preference optimization across contrastive vs. non-contrastive objectives Our work complements these by systematically isolating the impact of the ranking objective in the offline setting, a factor previously underexplored despite its fundamental role.

\section{Conclusion}
\label{sec:conclusion}

DAA research is fragmented, with many methods claiming superiority based on marginal differences. We provide the first unified framework that places all DAAs we study, including ORPO and ASFT, on equal footing by (i) reorganizing the explicit SFT term into a two-stage formulation and (ii) introducing a $\beta$ parameter. Within this setup, the previously under-explored ranking objective (pairwise vs. pointwise) emerges in our experiments as the primary driver of alignment quality, with differences in scalar score playing only a secondary role. Theoretical analysis, together with controlled experiments, links this effect to how objectives interact with prompt-specific bias, explaining why performance gaps appear mainly at intermediate task difficulty and model scale. Practically, we show that odds-ratio DAAs also benefit from SFT and $\beta$, and that most alignment gains can be achieved with only 5–10\% of SFT data. This finding clarifies why previous claims of "best" DAA \cite{simpo, caldpo, asft} often depend on underexplored details of setup and bias.

% \noindent
\paragraph{Limitations \& Future Work.}
Our analysis is intentionally focused on the off-policy, SFT-based alignment setting, in order to disentangle conflicting claims among DAAs under controlled conditions. While our instruction-following results rely on GPT-based evaluation, we mitigate this by validating findings on specific task with verifiable metrics up to the 14B scale. Extending the unified framework to on-policy preference optimization remains an important direction, complementing prior online studies~\citep{calandriello2024human, online-dpo}. Our bias–capacity trade-off is supported by both toy experiments and ICC analysis on real data. Future work could formalize this mechanism and study its predictive power in broader alignment settings.

\section*{Impact Statement}

This paper presents work whose goal is to advance the field of Machine Learning. There are many potential societal consequences of our work, none which we feel must be specifically highlighted here.

\bibliography{bibliography}

@misc{kimi,
      title={Kimi K2.5: Visual Agentic Intelligence}, 
      author={Kimi Team and Tongtong Bai and Yifan Bai and Yiping Bao and S. H. Cai and Yuan Cao and Y. Charles and H. S. Che and Cheng Chen and Guanduo Chen and Huarong Chen and Jia Chen and Jiahao Chen and Jianlong Chen and Jun Chen and Kefan Chen and Liang Chen and Ruijue Chen and Xinhao Chen and Yanru Chen and Yanxu Chen and Yicun Chen and Yimin Chen and Yingjiang Chen and Yuankun Chen and Yujie Chen and Yutian Chen and Zhirong Chen and Ziwei Chen and Dazhi Cheng and Minghan Chu and Jialei Cui and Jiaqi Deng and Muxi Diao and Hao Ding and Mengfan Dong and Mengnan Dong and Yuxin Dong and Yuhao Dong and Angang Du and Chenzhuang Du and Dikang Du and Lingxiao Du and Yulun Du and Yu Fan and Shengjun Fang and Qiulin Feng and Yichen Feng and Garimugai Fu and Kelin Fu and Hongcheng Gao and Tong Gao and Yuyao Ge and Shangyi Geng and Chengyang Gong and Xiaochen Gong and Zhuoma Gongque and Qizheng Gu and Xinran Gu and Yicheng Gu and Longyu Guan and Yuanying Guo and Xiaoru Hao and Weiran He and Wenyang He and Yunjia He and Chao Hong and Hao Hu and Jiaxi Hu and Yangyang Hu and Zhenxing Hu and Ke Huang and Ruiyuan Huang and Weixiao Huang and Zhiqi Huang and Tao Jiang and Zhejun Jiang and Xinyi Jin and Yu Jing and Guokun Lai and Aidi Li and C. Li and Cheng Li and Fang Li and Guanghe Li and Guanyu Li and Haitao Li and Haoyang Li and Jia Li and Jingwei Li and Junxiong Li and Lincan Li and Mo Li and Weihong Li and Wentao Li and Xinhang Li and Xinhao Li and Yang Li and Yanhao Li and Yiwei Li and Yuxiao Li and Zhaowei Li and Zheming Li and Weilong Liao and Jiawei Lin and Xiaohan Lin and Zhishan Lin and Zichao Lin and Cheng Liu and Chenyu Liu and Hongzhang Liu and Liang Liu and Shaowei Liu and Shudong Liu and Shuran Liu and Tianwei Liu and Tianyu Liu and Weizhou Liu and Xiangyan Liu and Yangyang Liu and Yanming Liu and Yibo Liu and Yuanxin Liu and Yue Liu and Zhengying Liu and Zhongnuo Liu and Enzhe Lu and Haoyu Lu and Zhiyuan Lu and Junyu Luo and Tongxu Luo and Yashuo Luo and Long Ma and Yingwei Ma and Shaoguang Mao and Yuan Mei and Xin Men and Fanqing Meng and Zhiyong Meng and Yibo Miao and Minqing Ni and Kun Ouyang and Siyuan Pan and Bo Pang and Yuchao Qian and Ruoyu Qin and Zeyu Qin and Jiezhong Qiu and Bowen Qu and Zeyu Shang and Youbo Shao and Tianxiao Shen and Zhennan Shen and Juanfeng Shi and Lidong Shi and Shengyuan Shi and Feifan Song and Pengwei Song and Tianhui Song and Xiaoxi Song and Hongjin Su and Jianlin Su and Zhaochen Su and Lin Sui and Jinsong Sun and Junyao Sun and Tongyu Sun and Flood Sung and Yunpeng Tai and Chuning Tang and Heyi Tang and Xiaojuan Tang and Zhengyang Tang and Jiawen Tao and Shiyuan Teng and Chaoran Tian and Pengfei Tian and Ao Wang and Bowen Wang and Chensi Wang and Chuang Wang and Congcong Wang and Dingkun Wang and Dinglu Wang and Dongliang Wang and Feng Wang and Hailong Wang and Haiming Wang and Hengzhi Wang and Huaqing Wang and Hui Wang and Jiahao Wang and Jinhong Wang and Jiuzheng Wang and Kaixin Wang and Linian Wang and Qibin Wang and Shengjie Wang and Shuyi Wang and Si Wang and Wei Wang and Xiaochen Wang and Xinyuan Wang and Yao Wang and Yejie Wang and Yipu Wang and Yiqin Wang and Yucheng Wang and Yuzhi Wang and Zhaoji Wang and Zhaowei Wang and Zhengtao Wang and Zhexu Wang and Zihan Wang and Zizhe Wang and Chu Wei and Ming Wei and Chuan Wen and Zichen Wen and Chengjie Wu and Haoning Wu and Junyan Wu and Rucong Wu and Wenhao Wu and Yuefeng Wu and Yuhao Wu and Yuxin Wu and Zijian Wu and Chenjun Xiao and Jin Xie and Xiaotong Xie and Yuchong Xie and Yifei Xin and Bowei Xing and Boyu Xu and Jianfan Xu and Jing Xu and Jinjing Xu and L. H. Xu and Lin Xu and Suting Xu and Weixin Xu and Xinbo Xu and Xinran Xu and Yangchuan Xu and Yichang Xu and Yuemeng Xu and Zelai Xu and Ziyao Xu and Junjie Yan and Yuzi Yan and Guangyao Yang and Hao Yang and Junwei Yang and Kai Yang and Ningyuan Yang and Ruihan Yang and Xiaofei Yang and Xinlong Yang and Ying Yang and Yi Yang and Yi Yang and Zhen Yang and Zhilin Yang and Zonghan Yang and Haotian Yao and Dan Ye and Wenjie Ye and Zhuorui Ye and Bohong Yin and Chengzhen Yu and Longhui Yu and Tao Yu and Tianxiang Yu and Enming Yuan and Mengjie Yuan and Xiaokun Yuan and Yang Yue and Weihao Zeng and Dunyuan Zha and Haobing Zhan and Dehao Zhang and Hao Zhang and Jin Zhang and Puqi Zhang and Qiao Zhang and Rui Zhang and Xiaobin Zhang and Y. Zhang and Yadong Zhang and Yangkun Zhang and Yichi Zhang and Yizhi Zhang and Yongting Zhang and Yu Zhang and Yushun Zhang and Yutao Zhang and Yutong Zhang and Zheng Zhang and Chenguang Zhao and Feifan Zhao and Jinxiang Zhao and Shuai Zhao and Xiangyu Zhao and Yikai Zhao and Zijia Zhao and Huabin Zheng and Ruihan Zheng and Shaojie Zheng and Tengyang Zheng and Junfeng Zhong and Longguang Zhong and Weiming Zhong and M. Zhou and Runjie Zhou and Xinyu Zhou and Zaida Zhou and Jinguo Zhu and Liya Zhu and Xinhao Zhu and Yuxuan Zhu and Zhen Zhu and Jingze Zhuang and Weiyu Zhuang and Ying Zou and Xinxing Zu},
      year={2026},
      eprint={2602.02276},
      archivePrefix={arXiv},
      primaryClass={cs.CL},
      url={https://arxiv.org/abs/2602.02276}, 
}

@inproceedings{insights_into_alignment,
  title={Insights into alignment: Evaluating dpo and its variants across multiple tasks},
  author={Saeidi, Amir and Verma, Shivanshu and Uddin, Md Nayem and Baral, Chitta},
  booktitle={Proceedings of the 63rd Annual Meeting of the Association for Computational Linguistics (Volume 4: Student Research Workshop)},
  pages={409--421},
  year={2025}
}

@misc{mistral_7b,
  title         = {Mistral 7B},
  author        = {Jiang, Albert Q. and Sablayrolles, Alexandre and Mensch, Arthur and Bamford, Chris and Chaplot, Devendra Singh and de las Casas, Diego and Bressand, Florian and Lengyel, Gianna and Lample, Guillaume and Saulnier, Lucile and Renard Lavaud, L{\'e}lio and Lachaux, Marie-Anne and Stock, Pierre and Le Scao, Teven and Lavril, Thibaut and Wang, Thomas and Lacroix, Timoth{\'e}e and El Sayed, William},
  year          = {2023},
  eprint        = {2310.06825},
  archivePrefix = {arXiv},
  primaryClass  = {cs.CL},
  doi           = {10.48550/arXiv.2310.06825},
  url           = {https://arxiv.org/abs/2310.06825}
}

@inproceedings{f_dpo,
  title={Beyond Reverse KL: Generalizing Direct Preference Optimization with Diverse Divergence Constraints},
  author={Wang, Chaoqi and Jiang, Yibo and Yang, Chenghao and Liu, Han and Chen, Yuxin},
  booktitle={The Twelfth International Conference on Learning Representations}
}

@article{principled_po,
  title={Principled Foundations for Preference Optimization},
  author={Zhou, Wenxuan and Zhang, Shujian and Magdalou, Brice and Lambert, John and Amid, Ehsan and Nock, Richard and Hard, Andrew},
  journal={arXiv preprint arXiv:2507.07855},
  year={2025}
}

@inproceedings{alpha_dpo,
  title={AlphaDPO: Adaptive Reward Margin for Direct Preference Optimization},
  author={Wu, Junkang and Wang, Xue and Yang, Zhengyi and Wu, Jiancan and Gao, Jinyang and Ding, Bolin and Wang, Xiang and He, Xiangnan},
  booktitle={Forty-second International Conference on Machine Learning}
}

@inproceedings{alpha_po,
  title={AlphaPO: Reward Shape Matters for LLM Alignment},
  author={Gupta, Aman and Tang, Shao and Song, Qingquan and Zhu, Sirou and Hong, Jiwoo and Saha, Ankan and Gupta, Viral and Lee, Noah and Kim, Eunki and Zhu, Siyu and others},
  booktitle={Forty-second International Conference on Machine Learning}
}

@article{tree_po,
  title={TPO: Aligning large language models with multi-branch \& multi-step preference trees},
  author={Liao, Weibin and Chu, Xu and Wang, Yasha},
  journal={arXiv preprint arXiv:2410.12854},
  year={2024}
}

@misc{triple_po,
      title={Triple Preference Optimization: Achieving Better Alignment using a Single Step Optimization}, 
      author={Amir Saeidi and Shivanshu Verma and Aswin RRV and Kashif Rasul and Chitta Baral},
      year={2025},
      eprint={2405.16681},
      archivePrefix={arXiv},
      primaryClass={cs.CL},
      url={https://arxiv.org/abs/2405.16681}, 
}

@article{online-dpo,
  title={Online-dpo-r1: Unlocking effective reasoning without the ppo overhead, 2025},
  author={Zhang, Hanning and Yao, Jiarui and Ye, Chenlu and Xiong, Wei and Zhang, Tong},
  journal={Notion Blog}
}

@article{understanding_gap,
  title={Understanding the performance gap between online and offline alignment algorithms},
  author={Tang, Yunhao and Guo, Daniel Zhaohan and Zheng, Zeyu and Calandriello, Daniele and Cao, Yuan and Tarassov, Eugene and Munos, R{\'e}mi and Pires, Bernardo {\'A}vila and Valko, Michal and Cheng, Yong and others},
  journal={arXiv preprint arXiv:2405.08448},
  year={2024}
}

@article{sft_mem_rl_gen,
  title={Sft memorizes, rl generalizes: A comparative study of foundation model post-training},
  author={Chu, Tianzhe and Zhai, Yuexiang and Yang, Jihan and Tong, Shengbang and Xie, Saining and Schuurmans, Dale and Le, Quoc V and Levine, Sergey and Ma, Yi},
  journal={arXiv preprint arXiv:2501.17161},
  year={2025}
}

@article{dpo_sup_ppo,
  title={Is dpo superior to ppo for llm alignment? a comprehensive study},
  author={Xu, Shusheng and Fu, Wei and Gao, Jiaxuan and Ye, Wenjie and Liu, Weilin and Mei, Zhiyu and Wang, Guangju and Yu, Chao and Wu, Yi},
  journal={arXiv preprint arXiv:2404.10719},
  year={2024}
}

@article{minervamath,
  title={Solving quantitative reasoning problems with language models},
  author={Lewkowycz, Aitor and Andreassen, Anders and Dohan, David and Dyer, Ethan and Michalewski, Henryk and Ramasesh, Vinay and Slone, Ambrose and Anil, Cem and Schlag, Imanol and Gutman-Solo, Theo and others},
  journal={Advances in neural information processing systems},
  volume={35},
  pages={3843--3857},
  year={2022}
}

@article{amc_aime,
  title={Numinamath: The largest public dataset in ai4maths with 860k pairs of competition math problems and solutions},
  author={Li, Jia and Beeching, Edward and Tunstall, Lewis and Lipkin, Ben and Soletskyi, Roman and Huang, Shengyi and Rasul, Kashif and Yu, Longhui and Jiang, Albert Q and Shen, Ziju and others},
  journal={Hugging Face repository},
  volume={13},
  number={9},
  pages={9},
  year={2024}
}

@inproceedings{math500,
  title={Let's verify step by step},
  author={Lightman, Hunter and Kosaraju, Vineet and Burda, Yuri and Edwards, Harrison and Baker, Bowen and Lee, Teddy and Leike, Jan and Schulman, John and Sutskever, Ilya and Cobbe, Karl},
  booktitle={The Twelfth International Conference on Learning Representations},
  year={2023}
}

@article{gsm8k,
  title={Training verifiers to solve math word problems},
  author={Cobbe, Karl and Kosaraju, Vineet and Bavarian, Mohammad and Chen, Mark and Jun, Heewoo and Kaiser, Lukasz and Plappert, Matthias and Tworek, Jerry and Hilton, Jacob and Nakano, Reiichiro and others},
  journal={arXiv preprint arXiv:2110.14168},
  year={2021}
}

@article{ultraInteract,
  title={Advancing llm reasoning generalists with preference trees},
  author={Yuan, Lifan and Cui, Ganqu and Wang, Hanbin and Ding, Ning and Wang, Xingyao and Deng, Jia and Shan, Boji and Chen, Huimin and Xie, Ruobing and Lin, Yankai and others},
  journal={arXiv preprint arXiv:2404.02078},
  year={2024}
}

@article{qwen25,
    title   = {Qwen2.5 Technical Report}, 
    author  = {An Yang and Baosong Yang and Beichen Zhang and Binyuan Hui and Bo Zheng and Bowen Yu and Chengyuan Li and Dayiheng Liu and Fei Huang and Haoran Wei and Huan Lin and Jian Yang and Jianhong Tu and Jianwei Zhang and Jianxin Yang and Jiaxi Yang and Jingren Zhou and Junyang Lin and Kai Dang and Keming Lu and Keqin Bao and Kexin Yang and Le Yu and Mei Li and Mingfeng Xue and Pei Zhang and Qin Zhu and Rui Men and Runji Lin and Tianhao Li and Tingyu Xia and Xingzhang Ren and Xuancheng Ren and Yang Fan and Yang Su and Yichang Zhang and Yu Wan and Yuqiong Liu and Zeyu Cui and Zhenru Zhang and Zihan Qiu},
    journal = {arXiv preprint arXiv:2412.15115},
    year    = {2024}
}

@article{calandriello2024human,
  title={Human alignment of large language models through online preference optimisation},
  author={Calandriello, Daniele and Guo, Daniel and Munos, Remi and Rowland, Mark and Tang, Yunhao and Pires, Bernardo Avila and Richemond, Pierre Harvey and Lan, Charline Le and Valko, Michal and Liu, Tianqi and others},
  journal={arXiv preprint arXiv:2403.08635},
  year={2024}
}

@book{searle1992variance,
  author    = {Searle, Shayle R. and Casella, George and McCulloch, Charles E.},
  title     = {Variance Components},
  year      = {1992},
  publisher = {Wiley},
  address   = {New York},
  series    = {Wiley Series in Probability and Mathematical Statistics},
  isbn      = {0471621625}
}

@article{mcgraw1996forming,
  title={Forming inferences about some intraclass correlation coefficients.},
  author={McGraw, Kenneth O and Wong, Seok P},
  journal={Psychological methods},
  volume={1},
  number={1},
  pages={30},
  year={1996},
  publisher={American Psychological Association}
}

@article{shrout1979intraclass,
  title={Intraclass correlations: uses in assessing rater reliability.},
  author={Shrout, Patrick E and Fleiss, Joseph L},
  journal={Psychological bulletin},
  volume={86},
  number={2},
  pages={420},
  year={1979},
  publisher={American Psychological Association}
}

@article{icc,
  title={The intraclass correlation coefficient as a measure of reliability},
  author={Bartko, John J},
  journal={Psychological reports},
  volume={19},
  number={1},
  pages={3--11},
  year={1966},
  publisher={SAGE Publications Sage CA: Los Angeles, CA}
}

@article{rainbowpo,
  title={Rainbowpo: A unified framework for combining improvements in preference optimization},
  author={Zhao, Hanyang and Winata, Genta Indra and Das, Anirban and Zhang, Shi-Xiong and Yao, David D and Tang, Wenpin and Sahu, Sambit},
  journal={arXiv preprint arXiv:2410.04203},
  year={2024}
}

@article{reward_aware,
  title={Reward-aware preference optimization: A unified mathematical framework for model alignment},
  author={Sun, Shengyang and Zhang, Yian and Bukharin, Alexander and Mosallanezhad, David and Zeng, Jiaqi and Singhal, Soumye and Shen, Gerald and Renduchintala, Adithya and Konuk, Tugrul and Dong, Yi and others},
  journal={arXiv preprint arXiv:2502.00203},
  year={2025}
}

@inproceedings{lipo,
  title={Lipo: Listwise preference optimization through learning-to-rank},
  author={Liu, Tianqi and Qin, Zhen and Wu, Junru and Shen, Jiaming and Khalman, Misha and Joshi, Rishabh and Zhao, Yao and Saleh, Mohammad and Baumgartner, Simon and Liu, Jialu and others},
  booktitle={Proceedings of the 2025 Conference of the Nations of the Americas Chapter of the Association for Computational Linguistics: Human Language Technologies (Volume 1: Long Papers)},
  pages={2404--2420},
  year={2025}
}

@article{fpo,
  title={$ f $-PO: Generalizing Preference Optimization with $ f $-divergence Minimization},
  author={Han, Jiaqi and Jiang, Mingjian and Song, Yuxuan and Ermon, Stefano and Xu, Minkai},
  journal={arXiv preprint arXiv:2410.21662},
  year={2024}
}

@article{many_dpo_one,
  title={Many of Your DPOs are Secretly One: Attempting Unification Through Mutual Information},
  author={Tutnov, Rasul and Grosnit, Antoine and Bou-Ammar, Haitham},
  journal={arXiv preprint arXiv:2501.01544},
  year={2025}
}

@article{gpo,
  title={Generalized preference optimization: A unified approach to offline alignment},
  author={Tang, Yunhao and Guo, Zhaohan Daniel and Zheng, Zeyu and Calandriello, Daniele and Munos, R{\'e}mi and Rowland, Mark and Richemond, Pierre Harvey and Valko, Michal and Pires, Bernardo {\'A}vila and Piot, Bilal},
  journal={arXiv preprint arXiv:2402.05749},
  year={2024}
}

@article{lima,
  title={Lima: Less is more for alignment},
  author={Zhou, Chunting and Liu, Pengfei and Xu, Puxin and Iyer, Srinivasan and Sun, Jiao and Mao, Yuning and Ma, Xuezhe and Efrat, Avia and Yu, Ping and Yu, Lili and others},
  journal={Advances in Neural Information Processing Systems},
  volume={36},
  year={2024}
}

@article{learning_to_rank,
  title={Learning to rank for information retrieval},
  author={Liu, Tie-Yan},
  journal={Foundations and Trends{\textregistered} in Information Retrieval},
  volume={3},
  number={3},
  pages={225--331},
  year={2009},
  publisher={Emerald Publishing Limited}
}

@article{ranking_case_study,
  title={Pairwise versus pointwise ranking: A case study},
  author={Melnikov, Vitalik and H{\"u}llermeier, Eyke and Kaimann, Daniel and Frick, Bernd and Gupta, Pritha},
  journal={Schedae Informaticae},
  pages={73--83},
  year={2016},
  publisher={Institute of Computer Science, Jagiellonian University}
}

@article{short_learning_to_rank,
  title={A short introduction to learning to rank},
  author={Li, Hang},
  journal={IEICE TRANSACTIONS on Information and Systems},
  volume={94},
  number={10},
  pages={1854--1862},
  year={2011},
  publisher={The Institute of Electronics, Information and Communication Engineers}
}

@inproceedings{ranknet,
  title={Learning to rank using gradient descent},
  author={Burges, Chris and Shaked, Tal and Renshaw, Erin and Lazier, Ari and Deeds, Matt and Hamilton, Nicole and Hullender, Greg},
  booktitle={Proceedings of the 22nd international conference on Machine learning},
  pages={89--96},
  year={2005}
}

@inproceedings{
    dpo,
    title={Direct Preference Optimization: Your Language Model is Secretly a Reward Model},
    author={Rafael Rafailov and Archit Sharma and Eric Mitchell and Christopher D Manning and Stefano Ermon and Chelsea Finn},
    booktitle={Thirty-seventh Conference on Neural Information Processing Systems},
    year={2023},
    url={https://arxiv.org/abs/2305.18290}
}

@article{rafailov2024scaling,
  title={Scaling laws for reward model overoptimization in direct alignment algorithms},
  author={Rafailov, Rafael and Chittepu, Yaswanth and Park, Ryan and Sikchi, Harshit Sushil and Hejna, Joey and Knox, Brad and Finn, Chelsea and Niekum, Scott},
  journal={Advances in Neural Information Processing Systems},
  volume={37},
  pages={126207--126242},
  year={2024}
}

@article{tldr_dataset,
  title={Training a Helpful and Harmless Assistant with Reinforcement Learning from Human Feedback},
  author={Yuntao Bai and Andy Jones and Kamal Ndousse and Amanda Askell and Anna Chen and Nova DasSarma and Dawn Drain and Stanislav Fort and Deep Ganguli and T. J. Henighan and Nicholas Joseph and Saurav Kadavath and John Kernion and Tom Conerly and Sheer El-Showk and Nelson Elhage and Zac Hatfield-Dodds and Danny Hernandez and Tristan Hume and Scott Johnston and Shauna Kravec and Liane Lovitt and Neel Nanda and Catherine Olsson and Dario Amodei and Tom B. Brown and Jack Clark and Sam McCandlish and Christopher Olah and Benjamin Mann and Jared Kaplan},
  journal={ArXiv},
  year={2022},
  volume={abs/2204.05862},
  url={https://api.semanticscholar.org/CorpusID:248118878}
}

@inproceedings{rasley2020deepspeed,
  title={Deepspeed: System optimizations enable training deep learning models with over 100 billion parameters},
  author={Rasley, Jeff and Rajbhandari, Samyam and Ruwase, Olatunji and He, Yuxiong},
  booktitle={Proceedings of the 26th ACM SIGKDD International Conference on Knowledge Discovery \& Data Mining},
  pages={3505--3506},
  year={2020}
}

@article{flashattention,
  title={Flashattention-2: Faster attention with better parallelism and work partitioning},
  author={Dao, Tri},
  journal={arXiv preprint arXiv:2307.08691},
  year={2023}
}

@article{adam,
  title={Adam: A Method for Stochastic Optimization},
  author={Diederik P. Kingma and Jimmy Ba},
  journal={CoRR},
  year={2014},
  volume={abs/1412.6980},
  url={https://api.semanticscholar.org/CorpusID:6628106}
}

@misc{arenahard2024,
      title={From Crowdsourced Data to High-Quality Benchmarks: Arena-Hard and BenchBuilder Pipeline}, 
      author={Tianle Li and Wei-Lin Chiang and Evan Frick and Lisa Dunlap and Tianhao Wu and Banghua Zhu and Joseph E. Gonzalez and Ion Stoica},
      year={2024},
      eprint={2406.11939},
      archivePrefix={arXiv},
      primaryClass={cs.LG}
}

@article{llama3modelcard,
  title={Llama 3 Model Card},
  author={AI@Meta},
  year={2024},
  url = {https://github.com/meta-llama/llama3/blob/main/MODEL_CARD.md}
}

@article{alpaca_lc,
  title={Length-Controlled AlpacaEval: A Simple Way to Debias Automatic Evaluators},
  author={Dubois, Yann and Galambosi, Bal{\'a}zs and Liang, Percy and Hashimoto, Tatsunori B},
  journal={arXiv preprint arXiv:2404.04475},
  year={2024}
}

@misc{alpaca_eval_git,
  author = {Xuechen Li and Tianyi Zhang and Yann Dubois and Rohan Taori and Ishaan Gulrajani and Carlos Guestrin and Percy Liang and Tatsunori B. Hashimoto },
  title = {AlpacaEval: An Automatic Evaluator of Instruction-following Models},
  year = {2023},
  month = {5},
  publisher = {GitHub},
  journal = {GitHub repository},
  howpublished = {\url{https://github.com/tatsu-lab/alpaca_eval}}
}

@inproceedings{mt_bench,
  title     = {Judging {LLM}-as-a-Judge with {MT}-Bench and Chatbot Arena},
  author    = {Zheng, Lianmin and Chiang, Wei-Lin and Sheng, Ying and Zhuang, Siyuan and Wu, Zhanghao and Zhuang, Yonghao and Lin, Zi and Li, Zhuohan and Li, Dacheng and Xing, Eric P. and Zhang, Hao and Gonzalez, Joseph E. and Stoica, Ion},
  booktitle = {Advances in Neural Information Processing Systems},
  volume    = {36},
  year      = {2023},
  url       = {https://proceedings.neurips.cc/paper_files/paper/2023/hash/91f18a1287b398d378ef22505bf41832-Abstract-Datasets_and_Benchmarks.html}
}

@misc{ultrafeedback,
      title={UltraFeedback: Boosting Language Models with High-quality Feedback}, 
      author={Ganqu Cui and Lifan Yuan and Ning Ding and Guanming Yao and Wei Zhu and Yuan Ni and Guotong Xie and Zhiyuan Liu and Maosong Sun},
      year={2023},
      eprint={2310.01377},
      archivePrefix={arXiv},
      primaryClass={cs.CL}
}

@inproceedings{ultrachat,
    title = "Enhancing Chat Language Models by Scaling High-quality Instructional Conversations",
    author = "Ding, Ning  and
      Chen, Yulin  and
      Xu, Bokai  and
      Qin, Yujia  and
      Hu, Shengding  and
      Liu, Zhiyuan  and
      Sun, Maosong  and
      Zhou, Bowen",
    editor = "Bouamor, Houda  and
      Pino, Juan  and
      Bali, Kalika",
    booktitle = "Proceedings of the 2023 Conference on Empirical Methods in Natural Language Processing",
    month = dec,
    year = "2023",
    address = "Singapore",
    publisher = "Association for Computational Linguistics",
    url = "https://aclanthology.org/2023.emnlp-main.183",
    doi = "10.18653/v1/2023.emnlp-main.183",
    pages = "3029--3051",
    abstract = "Fine-tuning on instruction data has been widely validated as an effective practice for implementing chat language models like ChatGPT. Scaling the diversity and quality of such data, although straightforward, stands a great chance of leading to improved performance. This paper aims to push the upper bound of open-source models further. We first provide a systematically designed, diverse, informative, large-scale dataset of instructional conversations, UltraChat, which does not involve human queries. Our objective is to capture the breadth of interactions between a human user and an AI assistant and employs a comprehensive framework to generate multi-turn conversation iteratively. UltraChat contains 1.5 million high-quality multi-turn dialogues and covers a wide range of topics and instructions. Our statistical analysis of UltraChat reveals its superiority in various key metrics, including scale, average length, diversity, coherence, etc., solidifying its position as a leading open-source dataset. Building upon UltraChat, we fine-tune a LLaMA model to create a powerful conversational model, UltraLM. Our evaluations indicate that UltraLM consistently outperforms other open-source models, including WizardLM and Vicuna, the previously recognized state-of-the-art open-source models.",
}

@inproceedings{trdpo,
  title={Learn Your Reference Model for Real Good Alignment},
  author={Gorbatovski, Alexey and Shaposhnikov, Boris and Malakhov, Alexey and Surnachev, Nikita and Aksenov, Yaroslav and Maksimov, Ian and Balagansky, Nikita and Gavrilov, Daniil},
  booktitle={The Thirteenth International Conference on Learning Representations},
  year={2025},
  url={https://openreview.net/forum?id=H0qIWXXLUR}
}

@article{zephyr,
  title={Zephyr: Direct distillation of lm alignment},
  author={Tunstall, Lewis and Beeching, Edward and Lambert, Nathan and Rajani, Nazneen and Rasul, Kashif and Belkada, Younes and Huang, Shengyi and von Werra, Leandro and Fourrier, Cl{\'e}mentine and Habib, Nathan and others},
  journal={arXiv preprint arXiv:2310.16944},
  year={2023}
}

@article{unlikelihood,
  title={Neural text generation with unlikelihood training},
  author={Welleck, Sean and Kulikov, Ilia and Roller, Stephen and Dinan, Emily and Cho, Kyunghyun and Weston, Jason},
  journal={arXiv preprint arXiv:1908.04319},
  year={2019}
}

@inproceedings{rlhf,
 author = {Ouyang, Long and Wu, Jeffrey and Jiang, Xu and Almeida, Diogo and Wainwright, Carroll and Mishkin, Pamela and Zhang, Chong and Agarwal, Sandhini and Slama, Katarina and Ray, Alex and Schulman, John and Hilton, Jacob and Kelton, Fraser and Miller, Luke and Simens, Maddie and Askell, Amanda and Welinder, Peter and Christiano, Paul F and Leike, Jan and Lowe, Ryan},
 booktitle = {Advances in Neural Information Processing Systems},
 editor = {S. Koyejo and S. Mohamed and A. Agarwal and D. Belgrave and K. Cho and A. Oh},
 pages = {27730--27744},
 publisher = {Curran Associates, Inc.},
 title = {Training language models to follow instructions with human feedback},
 volume = {35},
 year = {2022}
}

@inproceedings{summarize,
  author = {Nisan Stiennon and Long Ouyang and Jeff Wu and Daniel M. Ziegler and Ryan Lowe and Chelsea Voss and Alec Radford and Dario Amodei and Paul Christiano},
  title = {Learning to summarize from human feedback},
  booktitle = {NeurIPS},
  year = 2020,
}

@misc{orpo,
      title={ORPO: Monolithic Preference Optimization without Reference Model}, 
      author={Jiwoo Hong and Noah Lee and James Thorne},
      year={2024},
      eprint={2403.07691},
      archivePrefix={arXiv},
      primaryClass={cs.CL},
      url={https://arxiv.org/abs/2403.07691}, 
}

@misc{asft,
      title={ASFT: Aligned Supervised Fine-Tuning through Absolute Likelihood}, 
      author={Ruoyu Wang and Jiachen Sun and Shaowei Hua and Quan Fang},
      year={2024},
      eprint={2409.10571},
      archivePrefix={arXiv},
      primaryClass={cs.LG},
      url={https://arxiv.org/abs/2409.10571}, 
}

@article{bt,
    author = {Bradley, Ralph Allan and Terry, Milton E.},
    title = "{Rank Analysis of Inclomplete Block Design: The Method of Paired Comparisons}",
    journal = {Biometrika},
    volume = {39},
    number = {3-4},
    pages = {324-345},
    year = {1952},
    month = {12},
    issn = {0006-3444},
    doi = {10.1093/biomet/39.3-4.324},
    url = {https://doi.org/10.1093/biomet/39.3-4.324},
    eprint = {https://academic.oup.com/biomet/article-pdf/39/3-4/324/930466/39-3-4-324.pdf},
}

@article{ppo,
  title={Proximal policy optimization algorithms},
  author={Schulman, John and Wolski, Filip and Dhariwal, Prafulla and Radford, Alec and Klimov, Oleg},
  journal={arXiv preprint arXiv:1707.06347},
  year={2017}
}

@inproceedings{ipo,
  title={A general theoretical paradigm to understand learning from human preferences},
  author={Azar, Mohammad Gheshlaghi and Guo, Zhaohan Daniel and Piot, Bilal and Munos, Remi and Rowland, Mark and Valko, Michal and Calandriello, Daniele},
  booktitle={International Conference on Artificial Intelligence and Statistics},
  pages={4447--4455},
  year={2024},
  organization={PMLR}
}

@article{simpo,
  title={Simpo: Simple preference optimization with a reference-free reward},
  author={Meng, Yu and Xia, Mengzhou and Chen, Danqi},
  journal={Advances in Neural Information Processing Systems},
  volume={37},
  pages={124198--124235},
  year={2024}
}

@article{nca,
  title={Noise contrastive alignment of language models with explicit rewards},
  author={Chen, Huayu and He, Guande and Yuan, Lifan and Cui, Ganqu and Su, Hang and Zhu, Jun},
  journal={Advances in Neural Information Processing Systems},
  volume={37},
  pages={117784--117812},
  year={2024}
}

@article{caldpo,
  title={Cal-dpo: Calibrated direct preference optimization for language model alignment},
  author={Xiao, Teng and Yuan, Yige and Zhu, Huaisheng and Li, Mingxiao and Honavar, Vasant G},
  journal={Advances in Neural Information Processing Systems},
  volume={37},
  pages={114289--114320},
  year={2024}
}

@article{apo,
  title={Anchored preference optimization and contrastive revisions: Addressing underspecification in alignment},
  author={D'Oosterlinck, Karel and Xu, Winnie and Develder, Chris and Demeester, Thomas and Singh, Amanpreet and Potts, Christopher and Kiela, Douwe and Mehri, Shikib},
  journal={Transactions of the Association for Computational Linguistics},
  volume={13},
  pages={442--460},
  year={2025},
  publisher={MIT Press 255 Main Street, 9th Floor, Cambridge, Massachusetts 02142, USA~…}
}
\bibliographystyle{icml2026}

%%%%%%%%%%%%%%%%%%%%%%%%%%%%%%%%%%%%%%%%%%%%%%%%%%%%%%%%%%%%%%%%%%%%%%%%%%%%%%%
%%%%%%%%%%%%%%%%%%%%%%%%%%%%%%%%%%%%%%%%%%%%%%%%%%%%%%%%%%%%%%%%%%%%%%%%%%%%%%%
% APPENDIX
%%%%%%%%%%%%%%%%%%%%%%%%%%%%%%%%%%%%%%%%%%%%%%%%%%%%%%%%%%%%%%%%%%%%%%%%%%%%%%%
%%%%%%%%%%%%%%%%%%%%%%%%%%%%%%%%%%%%%%%%%%%%%%%%%%%%%%%%%%%%%%%%%%%%%%%%%%%%%%%
\newpage
\appendix
\onecolumn

\section*{Appendix Contents}
\begin{itemize}[leftmargin=1.5em, itemsep=0.2em]
    \item \hyperref[app:impl_details]{Appendix~\ref*{app:impl_details}: Implementation Details}
    \item \hyperref[app:qwen-math]{Appendix~\ref*{app:qwen-math}: Math Reasoning Experiments with Qwen2.5}
    \item \hyperref[app:asft_bce]{Appendix~\ref*{app:asft_bce}: Equivalence of \(\mathcal{L}_{\mathrm{ASFT}_\mathrm{Align}}\) and Binary Cross-Entropy Loss}
    \item \hyperref[app:orpo_asft]{Appendix~\ref*{app:orpo_asft}: Relationship Between ORPO and ASFT Loss Functions}
    \item \hyperref[sec:tempered_gradients]{Appendix~\ref*{sec:tempered_gradients}: Understanding Tempered ASFT and ORPO}
    \item \hyperref[app:toy_exp]{Appendix~\ref*{app:toy_exp}: Experiment on Prompt Bias}
    \item \hyperref[app:theory]{Appendix~\ref*{app:theory}: Theoretical Analysis of Prompt-Specific Bias and Ranking Objectives}
    \item \hyperref[app:pareto]{Appendix~\ref*{app:pareto}: Pareto Fronts for Llama 3.2 Setups}
    \item \hyperref[app:mtbench_stats]{Appendix~\ref*{app:mtbench_stats}: Llama MT-Bench Results and Permutation Tests}
    \item \hyperref[app:beta_lr]{Appendix~\ref*{app:beta_lr}: Learning-rate--to--\(\beta\) Ratio for Different Model Sizes}
    \item \hyperref[app:icc]{Appendix~\ref*{app:icc}: Intraclass Correlation Coefficient (\(\text{ICC}_1\)) in the Toy Experiment}
    \item \hyperref[app:gpt_prompts]{Appendix~\ref*{app:gpt_prompts}: GPT-4 Side-By-Side Evaluation Prompt}
\end{itemize}

\section{Implementation Details}
\label{app:impl_details}

\subsection{Probability Normalization}
\label{app:probability_normalization}

\begin{table*}[htbp]
\centering
\begin{tabular}{lc}
\toprule
\textbf{Method} & \textbf{Use normalization} \\
\midrule
DPO \citep{dpo} & \xmark \\
IPO \citep{ipo} & \xmark \\
SimPO \citep{simpo} & \cmark \\
NCA \citep{nca} & \xmark \\
Cal-DPO \citep{caldpo} & \xmark \\
APO-Zero \citep{apo} & \xmark \\
ORPO \citep{orpo} & \cmark \\
ASFT \citep{asft} & \cmark \\
\bottomrule
\end{tabular}
\vspace{0.5em}
\caption{Methods that include (\cmark) or omit (\xmark) length-based probability normalization in their original formulation.}
\label{tab:prob_norm}
\end{table*}

As discussed in Section~\ref{sec:model_seq}, not all DAAs incorporate length-based probability normalization by default. In this paper, however, we apply such normalization only in cases where it was used in the original methods involving probabilities. This choice avoids introducing extra notation and reduces the cognitive load on the reader. Table~\ref{tab:prob_norm} summarizes the methods that originally include length-based normalization.

\subsection{Training Details}

Our experiments were conducted using the Llama 3.2 3B and Llama 3.1 8B Base models \cite{llama3modelcard}, and the Mistral 7B Base model~\cite{mistral_7b}. The training setup, datasets, and hyperparameters were designed to ensure reproducibility and consistency. Unless otherwise noted, the hyperparameters in Table~\ref{tab:training_hyperparameters} were used across all experiments.

Training was performed on 8 NVIDIA A100 GPUs with 80GB memory each. Depending on the number of epochs, training for each configuration took between 3 to 6 hours. The total compute used across all experiments amounted to approximately 651 GPU-days.

\begin{table*}[htbp]
\centering
\begin{tabular}{ll}
\toprule
\textbf{Hyperparameter} & \textbf{Value} \\
\midrule
Max Tokens Length & 1024 (TL;DR setup), 4096 (UF setup) \\
Epochs & 1 \textit{(or 2 when specified)} \\
Learning Rate (SFT) & \(6.0 \times 10^{-6}\) \\
Learning Rate (Base Init.) & \(\{6.0 \times 10^{-6},\, 8.0 \times 10^{-6},\, 1.0 \times 10^{-5}\}\) \\
Learning Rate (Alignment) & \(\{3.0 \times 10^{-7},\, 5.0 \times 10^{-7},\, 7.0 \times 10^{-7},\, 1.0 \times 10^{-6}\}\) \\
Optimizer & Adam \citep{adam} \\
Adam \(\beta_1\) & 0.9 \\
Adam \(\beta_2\) & 0.95 \\
Batch Size & 128 \\
Learning Schedule & Linear Decay \\
Warm-up Ratio & 0.03 \\
Max Gradient Norm & 2 \\
Memory Optimization & DeepSpeed \citep{rasley2020deepspeed} \\
Attention Mechanism & Flash Attention 2 \citep{flashattention} \\
\bottomrule
\end{tabular}
\vspace{0.5em}
\caption{Representative training hyperparameters for Llama 3.2 3B, Llama 3.1 8B, and Mistral 7B models.}
\label{tab:training_hyperparameters}
\end{table*}

\subsubsection{Datasets.}
\begin{table*}[htbp]
\centering
\begin{tabular}{lcc}
\toprule
\textbf{Dataset} & \textbf{Training Examples} & \textbf{Validation Examples} \\
\midrule
UltraChat & 207,865 & 23,110 \\
UltraFeedback & 61,135 & 2,000 \\
Reddit TL;DR (SFT) & 41,947 & 11,941  \\
Reddit TL;DR (Preference) & 73,396 & 21,198  \\
\bottomrule
\end{tabular}
\vspace{0.5em}
\caption{Summary of dataset sizes used for training and validation.}
\label{tab:dataset_summary}
\end{table*}

We used the following datasets:
\begin{itemize}[leftmargin=1.2em, itemsep=0.3em]
    \item \textbf{Reddit TL;DR}~\citep{tldr_dataset}: used to train the initial SFT model in \(\beta\)-sensitivity experiments with Llama 3.2 3B model.
    \item \textbf{UltraChat}~\citep{ultrachat}: used to train the initial SFT model in \(\beta\)-sensitivity experiments with Llama 3.2 3B, Llama 3.1 8B, and Mistral 7B models.
    \item \textbf{UltraFeedback}~\citep{ultrafeedback}: used for both SFT (in the \textit{Base vs.\ SFT-initialized} comparison, where we selected chosen subset from preference pairs) and for pairwise preference optimization in all DAA methods.
\end{itemize}
Dataset sizes are summarized in Table~\ref{tab:dataset_summary}. For \emph{Base} vs.\ \emph{SFT-initialized} setups, only UltraFeedback was used. For the \(\beta\)-sensitivity experiments, the models were first trained on UltraChat for SFT and subsequently fine-tuned on UltraFeedback. The Reddit TL;DR dataset was deduplicated, retaining only uniquely preferred summaries for SFT.

\subsubsection{\(\beta\)-Sensitivity Experiments.}
We conducted a comprehensive analysis of the sensitivity of DAA methods to \(\beta\), examining peak performance and the trade-offs between model quality and regularization strength (as reflected in KL divergence). Each method was trained with six or more distinct \(\beta\) values to identify configurations that achieve stable and effective performance. The specific \(\beta\) values tested for each method are shown in Table~\ref{tab:beta_sensitivity}.

\begin{table*}[htbp]
\centering
\small
\resizebox{\textwidth}{!}{%
\begin{tabular}{lll}
\toprule
\textbf{Method} & \textbf{Llama Setups: \(\beta\) Values Tested} & \textbf{Mistral 7B UF: \(\beta\) Values Tested} \\
\midrule
DPO & \(\{0.001, 0.003, 0.005, 0.01, 0.05, 0.1\}\) & \(\{0.001, 0.003, 0.005, 0.01, 0.05, 0.1\}\) \\
IPO & \(\{0.0007, 0.001, 0.005, 0.01, 0.05, 0.1\}\) & \(\{0.0007, 0.001, 0.005, 0.01, 0.05, 0.1\}\) \\
SimPO & \(\{0.05, 0.1, 0.2, 0.5, 1.0, 2.0, 5.0\}\) & \(\{0.05, 0.1, 0.2, 0.5, 1.0, 2.0, 5.0, 10.0\}\) \\
ORPO & \(\{0.05, 0.1, 0.2, 0.5, 1.0, 2.0\}\) & \(\{0.03, 0.05, 0.1, 0.2, 0.5, 0.6, 0.7, 0.8, 0.9, 1.0, 2.0\}\) \\
ASFT & \(\{0.05, 0.1, 0.2, 0.5, 1.0, 2.0\}\) & \(\{0.01, 0.05, 0.1, 0.2, 0.3, 0.4, 0.5, 1.0, 2.0\}\) \\
APO-Zero & \(\{0.001, 0.003, 0.005, 0.01, 0.05, 0.1, 0.2\}\) & \(\{0.001, 0.003, 0.005, 0.01, 0.05, 0.1\}\) \\
Cal-DPO & \(\{0.00005, 0.0001, 0.0003, 0.0005, 0.001, 0.003\}\) & \(\{0.00005, 0.0001, 0.0003, 0.0005, 0.001, 0.003, 0.005\}\) \\
NCA & \(\{0.0001, 0.0003, 0.0005, 0.001, 0.005, 0.007, 0.01, 0.03, 0.05\}\) & \(\{0.0001, 0.0003, 0.0005, 0.001, 0.005, 0.007, 0.01, 0.03, 0.05\}\) \\
\bottomrule
\end{tabular}
}
\vspace{0.5em}
\caption{Range of \(\beta\) values tested for each DAA method on Llama and Mistral setups.}
\label{tab:beta_sensitivity}
\end{table*}

For each \(\beta\), we tested four learning rates (\(3.0 \times 10^{-7},\, 5.0 \times 10^{-7},\, 7.0 \times 10^{-7},\, 1.0 \times 10^{-6}\)), training on the UltraFeedback dataset. All runs began from an SFT-initialized model trained on UltraChat (\(\text{lr} = 6.0 \times 10^{-6}\), 1 epoch). The best-performing learning rate for each \(\beta\) was selected to construct Pareto fronts, balancing quality (measured via AlpacaEval 2 LC Win-Rate) and KL divergence.  

For SimPO in the Llama 3.1 8B UF setup, the ratio \(\frac{\gamma}{\beta} = 0.5\) was kept fixed as recommended by \citet{simpo}. Additionally, a single learning rate (\(\text{lr} = 6.0 \times 10^{-7}\)) was tested across all \(\beta\) values for this method, as the same datasets and model scale were used. For Llama 3.2 TL;DR and UF setups, we tested four learning rates similar to other DAAs. For Mistral 7B UF, SimPO was additionally swept over \(\gamma \in \{0.05, 0.1, 0.2, 0.3, 0.5, 0.8, 1.0\}\), with the best configuration using \(\gamma / \beta = 0.5\).

For DPO and IPO in the Llama sweeps, the $3.0 \times 10^{-7}$ learning rate was not considered, as performance consistently deteriorated from $1.0 \times 10^{-6}$ to $5.0 \times 10^{-7}$, indicating that lower learning rates were unlikely to yield improvements.

Beyond the standard \(\beta\) values described in Table~\ref{tab:beta_sensitivity}, additional values were explored for specific configurations to reach the extreme points of the Pareto front. For example:  
- \(\{0.00001, 0.00003\}\) for Cal-DPO in Llama 3.2 3B TL;DR and UF setups,  
- \(\{0.00001, 0.00003, 0.00005\}\) for NCA in Llama 3.2 3B TL;DR,  
- \(\{0.0003, 0.0005\}\) for APO-Zero in Llama 3.2 3B TL;DR,  
- \(\{0.0003, 0.0005, 0.001, 0.003, 0.005\}\) for ASFT in Llama 3.2 3B TL;DR.

The hyperparameters resulting in the best performance are presented in Table~\ref{tab:best_hypers}.

\begin{table*}[htbp]
\centering
\scriptsize
\resizebox{\textwidth}{!}{%
\begin{tabular}{c c c c c c c c c}
\toprule
\multirow{3}{*}{\textbf{Method}} & \multicolumn{2}{c}{\textbf{Llama 3.2 3B TL;DR}} & \multicolumn{2}{c}{\textbf{Llama 3.2 3B UF}} & \multicolumn{2}{c}{\textbf{Llama 3.1 8B UF}} & \multicolumn{2}{c}{\textbf{Mistral 7B UF}} \\
\cmidrule(lr){2-3} \cmidrule(lr){4-5} \cmidrule(lr){6-7} \cmidrule(lr){8-9}
& \textbf{Learning Rate} & \(\beta\) & \textbf{Learning Rate} & \(\beta\) & \textbf{Learning Rate} & \(\beta\) & \textbf{Learning Rate} & \(\beta\) \\
\midrule
DPO & $7.0 \times 10^{-7}$ & 0.05 & $1.0 \times 10^{-6}$ & 0.01 & $1.0 \times 10^{-6}$ & 0.003 & $3.0 \times 10^{-7}$ & 0.005 \\
\midrule
IPO & $1.0 \times 10^{-6}$ & 0.005 & $7.0 \times 10^{-7}$ & 0.001 & $1.0 \times 10^{-6}$ & 0.001 & $3.0 \times 10^{-7}$ & 0.001 \\
\midrule
SimPO & $3.0 \times 10^{-7}$ & 0.5 & $7.0 \times 10^{-7}$ & 1.0 & $6.0 \times 10^{-7}$ & 1.0 & $3.0 \times 10^{-7}$ & 2.0 \\
\midrule
ORPO & $3.0 \times 10^{-7}$ & 0.5 & $5.0 \times 10^{-7}$ & 0.2 & $5.0 \times 10^{-7}$ & 0.5 & $3.0 \times 10^{-7}$ & 0.7 \\
\midrule
ASFT & $3.0 \times 10^{-7}$ & 0.001 & $1.0 \times 10^{-6}$ & 0.2 & $7.0 \times 10^{-7}$ & 0.1 & $5.0 \times 10^{-7}$ & 0.3 \\
\midrule
APO Zero & $3.0 \times 10^{-7}$ & 0.001 & $3.0 \times 10^{-7}$ & 0.005 & $3.0 \times 10^{-7}$ & 0.003 & $3.0 \times 10^{-7}$ & 0.01 \\
\midrule
NCA & $3.0 \times 10^{-7}$ & 0.0001 & $3.0 \times 10^{-7}$ & 0.0005 & $3.0 \times 10^{-7}$ & 0.0003 & $7.0 \times 10^{-7}$ & 0.001 \\
\midrule
Cal-DPO & $3.0 \times 10^{-7}$ & 0.00003 & $5.0 \times 10^{-7}$ & 0.0003 & $3.0 \times 10^{-7}$ & 0.0003 & $3.0 \times 10^{-7}$ & 0.0005 \\
\bottomrule
\end{tabular}
}
\vspace{0.5em}
\caption{Best hyperparameters for each DAA method across Llama setups and Mistral 7B UF.}
\label{tab:best_hypers}
\end{table*}

\subsection{Generation Details}

We evaluated model performance on AlpacaEval 2 and ArenaHard for UltraFeedback setups, while for the Reddit TL;DR setup, we used side-by-side comparisons with GPT-4o on a curated golden validation subset of 500 samples. Additionally, KL divergence was measured on the validation subset for all setups using the generation hyperparameters listed in Table~\ref{tab:generation_hyperparameters}. For ArenaHard, the temperature was set to 0 to adhere to the original benchmark configuration.

\begin{table*}[htbp]
\centering
\begin{tabular}{ll}
\toprule
\textbf{Hyperparameter} & \textbf{Value} \\
\midrule
Temperature & 0.9 \\
Top-k & 40 \\
Top-p & 1.0 \\
Max New Tokens & 256 (TL;DR setup), 4096 (UF setup) \\
\bottomrule
\end{tabular}
\vspace{0.5em}
\caption{Generation hyperparameters for Llama 3.1 8B, Llama 3.2 3B, and Mistral 7B models.}
\label{tab:generation_hyperparameters}
\end{table*}

\section{Math Reasoning Experiments with Qwen2.5}
\label{app:qwen-math}

To evaluate the generality of our findings for \textbf{RQ3}, we additionally consider mathematical reasoning tasks and a different model family (Qwen2.5), providing a judge-free evaluation environment that we assess at two scales, 7B and 14B.

\subsection{Setup Details}
\label{app:qwen-math-setup}

\paragraph{Dataset.} We use the Math\_CoT subset of the UltraInteract dataset \cite{ultraInteract}, also employed in the NCA work \cite{nca}. The training split contains 78,080 examples with 266 validation samples for SFT, and 52,864 preference pairs with 279 validation pairs for alignment. Following standard practice, we prepend the following system prompt to all inputs: \texttt{"Please reason step by step, and put your final answer within \textbackslash boxed\{\}."}

\paragraph{Models.} Experiments are conducted with Qwen2.5-7B and Qwen2.5-14B \cite{qwen25}.

\begin{table*}[htbp]
\centering
\begin{tabular}{ll}
\toprule
\textbf{Method} & \textbf{\(\beta\) Values Tested} \\
\midrule
DPO       & \(\{0.001, 0.005, 0.01, 0.05, 0.1, 0.2\}\) \\
IPO       & \(\{0.0005, 0.001, 0.005, 0.01, 0.05, 0.1, 0.2\}\) \\
SimPO     & \(\{0.1, 0.2, 0.3, 0.5, 0.7, 1.0, 2.0, 5.0, 10.0, 15.0, 20.0\}\) \\
ORPO      & \(\{0.0005, 0.001, 0.005, 0.01, 0.05, 0.1, 0.2, 0.5, 1.0, 1.2, 2.0, 5.0, 10.0\}\) \\
ASFT      & \(\{0.01, 0.05, 0.1, 0.2, 0.5, 1.0, 2.0, 3.0, 5.0\}\) \\
APO-Zero  & \(\{0.0001, 0.0005, 0.001, 0.003, 0.01, 0.05, 0.1\}\) \\
Cal-DPO   & \(\{0.00001, 0.00003, 0.00005, 0.0001, 0.0003, 0.0005, 0.001, 0.003, 0.01, 0.05, 0.1\}\) \\
NCA       & \(\{0.0001, 0.0005, 0.001, 0.005, 0.01, 0.05, 0.1\}\) \\
\bottomrule
\end{tabular}
\vspace{0.5em}
\caption{Range of \(\beta\) values tested for each DAA method on Qwen2.5 setups.}
\label{tab:qwen-beta-sweep}
\end{table*}

\paragraph{Evaluation.} Performance is measured exclusively with verifiable metrics to avoid judge-model bias: GSM8K \cite{gsm8k}, MATH500 \cite{math500}, AMC23 \cite{amc_aime}, MinervaMath \cite{minervamath}, and AIME24/25 \cite{amc_aime}. We report average success rate @k (avg@k) across 4 random seeds, with decoding hyperparameters \texttt{max\_new\_tokens=4096} and \texttt{temperature=1.0}.

\paragraph{Training configuration.} The training protocol mirrors Section~\ref{app:impl_details}, with four candidate learning rates 
\(\{3.0 \times 10^{-7}, 5.0 \times 10^{-7}, 7.0 \times 10^{-7}, 1.0 \times 10^{-6}\}\).
For both Qwen2.5-7B and Qwen2.5-14B, we sweep these learning rates together with the method-specific $\beta$ values. The ranges are reported in Table~\ref{tab:qwen-beta-sweep}. For SimPO, we additionally swept the \(\frac{\gamma}{\beta}\) ratio in \(\{0.1, 0.2, 0.3, 0.5, 0.8, 1.0\}\). At 14B scale, an extended sweep including smaller values \(\{0.001, 0.01, 0.03, 0.05\}\) was also performed.

\subsection{Best Hyperparameters}

Table~\ref{tab:qwen-best-hypers} summarizes the best learning rates and $\beta$ values identified for each method.

\begin{table*}[htbp]
\centering
\small
\begin{tabular}{c|cc|cc}
\toprule
\multirow{2}{*}{\textbf{Method}} & \multicolumn{2}{c|}{\textbf{Qwen2.5-7B}} & \multicolumn{2}{c}{\textbf{Qwen2.5-14B}} \\
\cmidrule(lr){2-5}
& Learning Rate & $\beta$ & Learning Rate & $\beta$ \\
\midrule
DPO       & $5.0 \times 10^{-7}$ & 0.1   & $3.0 \times 10^{-7}$ & 0.05  \\ \midrule
IPO       & $1.0 \times 10^{-6}$ & 0.05  & $3.0 \times 10^{-7}$ & 0.005 \\ \midrule
ORPO      & $3.0 \times 10^{-7}$ & 0.001 & $3.0 \times 10^{-7}$ & 0.005 \\ \midrule
SimPO     & $3.0 \times 10^{-7}$ & 20.0  & $3.0 \times 10^{-7}$ & 20.0  \\ \midrule
APO-Zero  & $1.0 \times 10^{-6}$ & 0.01  & $3.0 \times 10^{-7}$ & 0.0005 \\ \midrule
Cal-DPO   & $1.0 \times 10^{-6}$ & 0.0003& $3.0 \times 10^{-7}$ & 0.00001 \\ \midrule
NCA       & $1.0 \times 10^{-6}$ & 0.001 & $3.0 \times 10^{-7}$ & 0.0001 \\ \midrule
ASFT      & $3.0 \times 10^{-7}$ & 0.1   & $7.0 \times 10^{-7}$ & 0.5   \\
\bottomrule
\end{tabular}
\vspace{0.5em}
\caption{Best hyperparameters for Qwen2.5-7B and Qwen2.5-14B on Math\_CoT (SimPO: best \(\frac{\gamma}{\beta}\) was 0.8 for 7B and 0.05 for 14B).}
\label{tab:qwen-best-hypers}
\end{table*}

\subsection{Results}
\label{app:qwen-math-results}

Tables~\ref{tab:qwen-7b-results} and \ref{tab:qwen-14b-results} present the math benchmark results for Qwen2.5-7B and Qwen2.5-14B, respectively. For all datasets except AIME24 and AIME25, we report avg@8; for AIME24 and AIME25, we report avg@32 to better reflect performance due to their small question size. All results are averaged over 4 random seeds, with the corresponding standard deviations (computed across seeds) shown in parentheses.

At the 7B scale, pairwise and pointwise objectives perform comparably. At the 14B scale, however, a clear separation emerges: pairwise methods consistently outperform pointwise ones, whereas the scalar score axis (\(r^\mathrm{ref}_\theta\) vs.\ \(r^\mathrm{odds}_\theta\)) yields no systematic differences. This replication of scale-dependent performance across instruction-following and mathematical reasoning strongly supports the generality of our main findings.

\subsection{Additional Scalar Score Families: AlphaPO and f-DPO (Forward-KL)}
\label{app:qwen-alpha-fkl}

To test the robustness of our ``ranking dominates'' conclusion to changes in the scalar score family, we experimented with two alternative parameterizations drawn from recent work: the AlphaPO~\citep{alpha_po} reward $r_\alpha$ and the Forward-KL (fKL) score from f-DPO~\citep{f_dpo}, within the same static binary-preference setup.

\paragraph{AlphaPO Scalar Score $r_\alpha$.}
We adopt the scalar score
\[
r_\alpha(y;x) \;=\; \frac{\beta}{\alpha}\Bigl( 1 - \pi_\theta(y\mid x)^{-\alpha/|y|} \Bigr),
\]
exactly as defined in AlphaPO, and plug it into two objectives:
(i) a pairwise Bradley–Terry loss
\[
L_{\text{pair}} = -\log \sigma\big(r_\alpha(x,y_w) - r_\alpha(x,y_l)\big),
\]
and (ii) a pointwise ASFT-style loss
\[
L_{\text{point}} = -\log \sigma\big(r_\alpha(x,y_w)\big) - \log \sigma\big(-r_\alpha(x,y_l)\big).
\]
In both cases, the scalar score $r_\alpha$ is identical; only the way it enters the loss (difference vs.\ separate terms) is changed.

\begin{table*}[htbp]
\centering
\small
\begin{tabular}{l|ccccccc}
\toprule
& \textbf{Mean Avg@K} & \textbf{GSM8K} & \textbf{MATH500} & \textbf{AMC23} & \textbf{Minerva} & \textbf{AIME24} & \textbf{AIME25} \\
\midrule
Base     & \makecell{0.1344 \\ {\scriptsize (0.0024)}} & \makecell{0.2233 \\ {\scriptsize (0.0012)}} & \makecell{0.2876 \\ {\scriptsize (0.0067)}} & \makecell{0.1805 \\ {\scriptsize (0.0248)}} & \makecell{0.0679 \\ {\scriptsize (0.0045)}} & \makecell{0.0292 \\ {\scriptsize (0.0035)}} & \makecell{0.0177 \\ {\scriptsize (0.0043)}} \\ \midrule
SFT      & \makecell{0.2216 \\ {\scriptsize (0.0019)}} & \makecell{0.6677 \\ {\scriptsize (0.0012)}} & \makecell{0.3671 \\ {\scriptsize (0.0026)}} & \makecell{0.1711 \\ {\scriptsize (0.0143)}} & \makecell{0.1135 \\ {\scriptsize (0.0055)}} & \makecell{0.0076 \\ {\scriptsize (0.0016)}} & \makecell{0.0029 \\ {\scriptsize (0.0010)}} \\ \midrule
DPO      & \makecell{\textbf{0.3017} \\ {\scriptsize (0.0033)}} & \makecell{0.8295 \\ {\scriptsize (0.0015)}} & \makecell{0.5011 \\ {\scriptsize (0.0080)}} & \makecell{0.2641 \\ {\scriptsize (0.0174)}} & \makecell{0.1966 \\ {\scriptsize (0.0050)}} & \makecell{0.0107 \\ {\scriptsize (0.0033)}} & \makecell{0.0081 \\ {\scriptsize (0.0025)}} \\
IPO      & \makecell{\textit{0.2996} \\ {\scriptsize (0.0035)}} & \makecell{0.8236 \\ {\scriptsize (0.0020)}} & \makecell{0.4866 \\ {\scriptsize (0.0055)}} & \makecell{0.2664 \\ {\scriptsize (0.0121)}} & \makecell{0.2007 \\ {\scriptsize (0.0027)}} & \makecell{0.0128 \\ {\scriptsize (0.0035)}} & \makecell{0.0078 \\ {\scriptsize (0.0013)}} \\
SimPO    & \makecell{0.2892 \\ {\scriptsize (0.0039)}} & \makecell{0.7893 \\ {\scriptsize (0.0026)}} & \makecell{0.4837 \\ {\scriptsize (0.0037)}} & \makecell{0.2523 \\ {\scriptsize (0.0190)}} & \makecell{0.1898 \\ {\scriptsize (0.0014)}} & \makecell{0.0125 \\ {\scriptsize (0.0037)}} & \makecell{0.0073 \\ {\scriptsize (0.0026)}} \\
ORPO     & \makecell{0.2966 \\ {\scriptsize (0.0010)}} & \makecell{0.8303 \\ {\scriptsize (0.0026)}} & \makecell{0.4882 \\ {\scriptsize (0.0046)}} & \makecell{0.2641 \\ {\scriptsize (0.0116)}} & \makecell{0.1759 \\ {\scriptsize (0.0047)}} & \makecell{0.0117 \\ {\scriptsize (0.0064)}} & \makecell{0.0094 \\ {\scriptsize (0.0039)}} \\
\addlinespace[2pt] \hdashline \addlinespace[2pt]
AlphaPO Pair
    & \makecell{0.2945 \\ {\scriptsize (0.0020)}} 
    & \makecell{0.8147 \\ {\scriptsize (0.0014)}} 
    & \makecell{0.4872 \\ {\scriptsize (0.0027)}} 
    & \makecell{0.2578 \\ {\scriptsize (0.0141)}} 
    & \makecell{0.1849 \\ {\scriptsize (0.0010)}} 
    & \makecell{0.0143 \\ {\scriptsize (0.0021)}} 
    & \makecell{0.0083 \\ {\scriptsize (0.0015)}} 
\\ \midrule
APO-Zero & \makecell{0.2837 \\ {\scriptsize (0.0020)}} & \makecell{0.8071 \\ {\scriptsize (0.0030)}} & \makecell{0.4586 \\ {\scriptsize (0.0050)}} & \makecell{0.2352 \\ {\scriptsize (0.0069)}} & \makecell{0.1807 \\ {\scriptsize (0.0102)}} & \makecell{0.0115 \\ {\scriptsize (0.0031)}} & \makecell{0.0091 \\ {\scriptsize (0.0034)}} \\
NCA      & \makecell{0.2861 \\ {\scriptsize (0.0027)}} & \makecell{0.7909 \\ {\scriptsize (0.0015)}} & \makecell{0.4715 \\ {\scriptsize (0.0026)}} & \makecell{0.2461 \\ {\scriptsize (0.0145)}} & \makecell{0.1922 \\ {\scriptsize (0.0017)}} & \makecell{0.0109 \\ {\scriptsize (0.0051)}} & \makecell{0.0047 \\ {\scriptsize (0.0010)}} \\
Cal-DPO  & \makecell{0.2936 \\ {\scriptsize (0.0044)}} & \makecell{0.8272 \\ {\scriptsize (0.0014)}} & \makecell{0.4694 \\ {\scriptsize (0.0053)}} & \makecell{0.2531 \\ {\scriptsize (0.0209)}} & \makecell{0.1931 \\ {\scriptsize (0.0070)}} & \makecell{0.0109 \\ {\scriptsize (0.0028)}} & \makecell{0.0081 \\ {\scriptsize (0.0018)}} \\
ASFT     & \makecell{0.2903 \\ {\scriptsize (0.0033)}} & \makecell{0.8132 \\ {\scriptsize (0.0020)}} & \makecell{0.4785 \\ {\scriptsize (0.0064)}} & \makecell{0.2437 \\ {\scriptsize (0.0149)}} & \makecell{0.1876 \\ {\scriptsize (0.0009)}} & \makecell{0.0130 \\ {\scriptsize (0.0028)}} & \makecell{0.0057 \\ {\scriptsize (0.0013)}} \\
\addlinespace[2pt] \hdashline \addlinespace[2pt]

AlphaPO Point
    & \makecell{0.2922 \\ {\scriptsize (0.0027)}} 
    & \makecell{0.8233 \\ {\scriptsize (0.0012)}} 
    & \makecell{0.4753 \\ {\scriptsize (0.0085)}} 
    & \makecell{0.2539 \\ {\scriptsize (0.0074)}} 
    & \makecell{0.1820 \\ {\scriptsize (0.0074)}} 
    & \makecell{0.0122 \\ {\scriptsize (0.0021)}} 
    & \makecell{0.0063 \\ {\scriptsize (0.0009)}}
\\ \midrule

fKL Pair
    & \makecell{\textbf{0.2673} \\ {\scriptsize (0.0041)}} 
    & \makecell{0.7825 \\ {\scriptsize (0.0031)}}
    & \makecell{0.4403 \\ {\scriptsize (0.0059)}} 
    & \makecell{0.2141 \\ {\scriptsize (0.0195)}} 
    & \makecell{0.1535 \\ {\scriptsize (0.0067)}} 
    & \makecell{0.0073 \\ {\scriptsize (0.0017)}} 
    & \makecell{0.0060 \\ {\scriptsize (0.0010)}} 
\\
fKL Point
    & \makecell{0.2643 \\ {\scriptsize (0.0020)}}
    & \makecell{0.7750 \\ {\scriptsize (0.0014)}} 
    & \makecell{0.4363 \\ {\scriptsize (0.0032)}} 
    & \makecell{0.2094 \\ {\scriptsize (0.0068)}} 
    & \makecell{0.1521 \\ {\scriptsize (0.0033)}} 
    & \makecell{0.0078 \\ {\scriptsize (0.0028)}} 
    & \makecell{0.0052 \\ {\scriptsize (0.0019)}} 
\\
\bottomrule
\end{tabular}
\vspace{0.5em}
\caption{Math benchmark results for Qwen2.5-7B. Reported values are avg@8 (except AIME24/25: avg@32), averaged across 4 seeds; standard deviation across seeds is shown in parentheses.}
\label{tab:qwen-7b-results}
\end{table*}

We reuse the Math-CoT setup from Section~\ref{app:qwen-math-setup}. For Qwen2.5–7B we perform a grid search over $\beta \in \{0.1, 0.5, 0.7, 1.0, 2.5, 5.0, 10.0, 20.0, 25.0\}$ and $\alpha \in \{-0.5, -0.1, 0.0, 0.1, 0.25, 0.5, 1.0, 2.0\}$, and then refine $\alpha \in \{2.5, 3.0, 5.0\}$ around the best $\beta \in \{0.5, 0.7, 1.0\}$. For Qwen2.5–14B we sweep $\alpha \in \{0.5, 1.0, 1.5, 2.0, 2.5\}$ for the same three $\beta$ values. All AlphaPO runs use learning rate $3\!\times\!10^{-7}$. The best configurations are $(\beta=0.5,\alpha=2.0)$ and $(\beta=0.5,\alpha=1.0)$ for pairwise/pointwise on Qwen2.5–7B, and $(\beta=0.5,\alpha=2.0)$ and $(\beta=0.5,\alpha=0.5)$ for pairwise/pointwise on Qwen2.5–14B.

The corresponding results are included in Tables~\ref{tab:qwen-7b-results} and~\ref{tab:qwen-14b-results}. On both 7B and 14B, the AlphaPO-Pair variant lies in the same performance band as the other pairwise DAAs (DPO, IPO, SimPO, ORPO), while AlphaPO-Point tracks the pointwise cluster and is consistently slightly weaker than its pairwise counterpart. This mirrors the behavior observed for $r_\theta^{\text{ref}}$ and $r_\theta^{\text{odds}}$.

\paragraph{f-DPO Forward-KL score.}
For the f-DPO~\citep{f_dpo} family we focus on the Forward-KL endpoint, which is known to underperform DPO overall, and study it under our unified protocol. We use the scalar score
\[
r^{\mathrm{fKL}}_\theta(x,y) \;=\; -\,\beta\,\frac{\pi_{\mathrm{ref}}(y\mid x)}{\pi_\theta(y\mid x)},
\]
corresponding to the $\alpha\!=\!1$ case in the $\alpha$-divergence parameterization of f-DPO. As with $r_\alpha$, we keep $r^{\mathrm{fKL}}_\theta$ fixed and compare a pairwise Bradley–Terry loss to a pointwise ASFT-style loss:
\[
L_{\text{pair}} = -\log \sigma\big(r^{\mathrm{fKL}}_\theta(x,y_w) - r^{\mathrm{fKL}}_\theta(x,y_l)\big),\quad
L_{\text{point}} = -\log \sigma\big(r^{\mathrm{fKL}}_\theta(x,y_w)\big) - \log \sigma\big(-r^{\mathrm{fKL}}_\theta(x,y_l)\big).
\]

\begin{table*}[htbp]
\centering
\small
\begin{tabular}{l|ccccccc}
\toprule
& \textbf{Mean Avg@K} & \textbf{GSM8K} & \textbf{MATH500} & \textbf{AMC23} & \textbf{Minerva} & \textbf{AIME24} & \textbf{AIME25} \\
\midrule
Base     & \makecell{0.1937 \\ {\scriptsize (0.0065)}} & \makecell{0.5343 \\ {\scriptsize (0.0064)}} & \makecell{0.3177 \\ {\scriptsize (0.0048)}} & \makecell{0.1773 \\ {\scriptsize (0.0259)}} & \makecell{0.1002 \\ {\scriptsize (0.0069)}} & \makecell{0.0188 \\ {\scriptsize (0.0043)}} & \makecell{0.0138 \\ {\scriptsize (0.0054)}} \\ \midrule
SFT      & \makecell{0.2399 \\ {\scriptsize (0.0013)}} & \makecell{0.7162 \\ {\scriptsize (0.0018)}} & \makecell{0.4006 \\ {\scriptsize (0.0105)}} & \makecell{0.1719 \\ {\scriptsize (0.0186)}} & \makecell{0.1351 \\ {\scriptsize (0.0052)}} & \makecell{0.0115 \\ {\scriptsize (0.0049)}} & \makecell{0.0039 \\ {\scriptsize (0.0013)}} \\ \midrule
DPO      & \makecell{\textbf{0.3278} \\ {\scriptsize (0.0035)}} & \makecell{0.8733 \\ {\scriptsize (0.0006)}} & \makecell{0.5472 \\ {\scriptsize (0.0036)}} & \makecell{0.2758 \\ {\scriptsize (0.0186)}} & \makecell{0.2367 \\ {\scriptsize (0.0055)}} & \makecell{0.0224 \\ {\scriptsize (0.0032)}} & \makecell{0.0112 \\ {\scriptsize (0.0033)}} \\
IPO      & \makecell{0.3227 \\ {\scriptsize (0.0025)}} & \makecell{0.8657 \\ {\scriptsize (0.0017)}} & \makecell{0.5416 \\ {\scriptsize (0.0055)}} & \makecell{0.2625 \\ {\scriptsize (0.0155)}} & \makecell{0.2353 \\ {\scriptsize (0.0098)}} & \makecell{0.0250 \\ {\scriptsize (0.0044)}} & \makecell{0.0063 \\ {\scriptsize (0.0031)}} \\
SimPO    & \makecell{0.3148 \\ {\scriptsize (0.0008)}} & \makecell{0.8585 \\ {\scriptsize (0.0035)}} & \makecell{0.5446 \\ {\scriptsize (0.0039)}} & \makecell{0.2773 \\ {\scriptsize (0.0053)}} & \makecell{0.1823 \\ {\scriptsize (0.0015)}} & \makecell{0.0161 \\ {\scriptsize (0.0047)}} & \makecell{0.0099 \\ {\scriptsize (0.0036)}} \\
ORPO     & \makecell{\textit{0.3277} \\ {\scriptsize (0.0066)}} & \makecell{0.8690 \\ {\scriptsize (0.0009)}} & \makecell{0.5503 \\ {\scriptsize (0.0049)}} & \makecell{0.2883 \\ {\scriptsize (0.0273)}} & \makecell{0.2232 \\ {\scriptsize (0.0062)}} & \makecell{0.0219 \\ {\scriptsize (0.0050)}} & \makecell{0.0135 \\ {\scriptsize (0.0054)}} \\ 
\addlinespace[2pt] \hdashline \addlinespace[2pt]
AlphaPO Pair
    & \makecell{0.3158 \\ {\scriptsize (0.0019)}}
    & \makecell{0.8693 \\ {\scriptsize (0.0010)}} 
    & \makecell{0.5277 \\ {\scriptsize (0.0128)}} 
    & \makecell{0.2695 \\ {\scriptsize (0.0064)}} 
    & \makecell{0.1930 \\ {\scriptsize (0.0034)}} 
    & \makecell{0.0234 \\ {\scriptsize (0.0025)}} 
    & \makecell{0.0117 \\ {\scriptsize (0.0037)}} 
\\ \midrule
APO-Zero & \makecell{0.3081 \\ {\scriptsize (0.0012)}} & \makecell{0.8717 \\ {\scriptsize (0.0018)}} & \makecell{0.5052 \\ {\scriptsize (0.0028)}} & \makecell{0.2461 \\ {\scriptsize (0.0053)}} & \makecell{0.1965 \\ {\scriptsize (0.0069)}} & \makecell{0.0211 \\ {\scriptsize (0.0032)}} & \makecell{0.0078 \\ {\scriptsize (0.0044)}} \\
NCA      & \makecell{0.3079 \\ {\scriptsize (0.0017)}} & \makecell{0.8693 \\ {\scriptsize (0.0027)}} & \makecell{0.5031 \\ {\scriptsize (0.0046)}} & \makecell{0.2547 \\ {\scriptsize (0.0074)}} & \makecell{0.1932 \\ {\scriptsize (0.0039)}} & \makecell{0.0187 \\ {\scriptsize (0.0019)}} & \makecell{0.0081 \\ {\scriptsize (0.0034)}} \\
Cal-DPO  & \makecell{0.3097 \\ {\scriptsize (0.0031)}} & \makecell{0.8761 \\ {\scriptsize (0.0009)}} & \makecell{0.5121 \\ {\scriptsize (0.0042)}} & \makecell{0.2383 \\ {\scriptsize (0.0183)}} & \makecell{0.2054 \\ {\scriptsize (0.0004)}} & \makecell{0.0193 \\ {\scriptsize (0.0039)}} & \makecell{0.0068 \\ {\scriptsize (0.0020)}} \\
ASFT     & \makecell{0.3078 \\ {\scriptsize (0.0011)}} & \makecell{0.8386 \\ {\scriptsize (0.0013)}} & \makecell{0.5121 \\ {\scriptsize (0.0045)}} & \makecell{0.2398 \\ {\scriptsize (0.0016)}} & \makecell{0.2271 \\ {\scriptsize (0.0073)}} & \makecell{0.0216 \\ {\scriptsize (0.0027)}} & \makecell{0.0073 \\ {\scriptsize (0.0029)}} \\
\addlinespace[2pt] \hdashline \addlinespace[2pt]
AlphaPO Point 
    & \makecell{0.2923 \\ {\scriptsize (0.0018)}}
    & \makecell{0.8172 \\ {\scriptsize (0.0017)}} 
    & \makecell{0.5032 \\ {\scriptsize (0.0079)}} 
    & \makecell{0.2336 \\ {\scriptsize (0.0208)}} 
    & \makecell{0.1766 \\ {\scriptsize (0.0056)}} 
    & \makecell{0.0180 \\ {\scriptsize (0.0035)}} 
    & \makecell{0.0049 \\ {\scriptsize (0.0021)}}
\\ \midrule
fKL Pair       
    & \makecell{\textbf{0.2839} \\ {\scriptsize (0.0029)}}
    & \makecell{0.8215 \\ {\scriptsize (0.0012)}}
    & \makecell{0.4724 \\ {\scriptsize (0.0027)}} 
    & \makecell{0.2125 \\ {\scriptsize (0.0133)}} 
    & \makecell{0.1759 \\ {\scriptsize (0.0064)}} 
    & \makecell{0.0169 \\ {\scriptsize (0.0032)}} 
    & \makecell{0.0039 \\ {\scriptsize (0.0029)}} 
\\

fKL Point
    & \makecell{0.2750 \\ {\scriptsize (0.0036)}}
    & \makecell{0.8125 \\ {\scriptsize (0.0017)}} 
    & \makecell{0.4663 \\ {\scriptsize (0.0017)}} 
    & \makecell{0.1852 \\ {\scriptsize (0.0121)}} 
    & \makecell{0.1638 \\ {\scriptsize (0.0067)}} 
    & \makecell{0.0164 \\ {\scriptsize (0.0005)}} 
    & \makecell{0.0055 \\ {\scriptsize (0.0010)}} 
\\
\bottomrule
\end{tabular}
\vspace{0.5em}
\caption{Math benchmark results for Qwen2.5-14B. Reported values are avg@8 (except AIME24/25: avg@32), averaged across 4 seeds; standard deviation across seeds is shown in parentheses.}
\label{tab:qwen-14b-results}
\end{table*}

On Qwen2.5–7B we sweep learning rates $\{3\!\times\!10^{-7}, 5\!\times\!10^{-7}\}$ and $\beta \in \{0.001, 0.005, 0.01, 0.05, 0.1, 0.2\}$ for both pairwise and pointwise formulations. On Qwen2.5–14B we use learning rate $5\!\times\!10^{-7}$ for the f-DPO comparison and sweep the same $\beta$ values. The best $\beta$ values are $0.005$ and $0.001$ (pairwise/pointwise) for Qwen2.5–7B, and $0.005$ for both pairwise and pointwise on Qwen2.5–14B. The resulting scores are reported alongside the other methods in Tables~\ref{tab:qwen-7b-results} and~\ref{tab:qwen-14b-results}.

As expected from the original f-DPO results, the Forward-KL parameterization yields lower absolute performance than DPO / reverse-KL. However, the structural pattern is unchanged: in both 7B and 14B settings the fKL pairwise objective consistently outperforms its fKL pointwise counterpart, and the gap becomes more pronounced at 14B. This behavior matches what we observe for the other scalar score families considered in this paper and further supports our conclusion that, in the static binary-preference regime we study, the pairwise vs.\ pointwise ranking structure is the dominant driver of performance.

\section{Equivalence of $\mathcal{L}_{\mathrm{ASFT}_\mathrm{Align}}$ and Binary Cross-Entropy Loss}
\label{app:asft_bce}

\begin{lemma}
\label{lem:oddsloglike}
$$\log \sigma(r^\mathrm{odds}_\theta(y, x)) = \log \pi_\theta(y|x)$$
\end{lemma}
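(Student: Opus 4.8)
The plan is to unfold the definition of $r^\mathrm{odds}_\theta$ and exploit the elementary fact that the sigmoid $\sigma$ and the logit map are mutual inverses. Recall from Section~\ref{sec:daa} that $r^\mathrm{odds}_\theta(y, x) = \log\bigl(\tfrac{\pi_\theta(y \mid x)}{1 - \pi_\theta(y \mid x)}\bigr)$, which is precisely $\operatorname{logit}\bigl(\pi_\theta(y\mid x)\bigr)$, and that $\sigma(z) = \tfrac{1}{1+e^{-z}}$. The whole statement is then a one-line substitution once the inverse relationship is made explicit.

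Concretely, I would first compute $\sigma\bigl(r^\mathrm{odds}_\theta(y,x)\bigr)$ by writing, with $p := \pi_\theta(y\mid x)$,
\[
\sigma\!\left(\log\tfrac{p}{1-p}\right)
= \frac{1}{1 + e^{-\log\frac{p}{1-p}}}
= \frac{1}{1 + \frac{1-p}{p}}
= \frac{p}{p + (1-p)}
= p .
\]
Then taking logarithms of both sides gives $\log\sigma\bigl(r^\mathrm{odds}_\theta(y,x)\bigr) = \log p = \log\pi_\theta(y\mid x)$, which is the claim.

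The only point requiring a word of care is well-definedness: the chain of equalities uses $p\in(0,1)$ so that $1-p\neq 0$, the ratio $\tfrac{p}{1-p}$ is a positive real, and $\log p$ is finite. This holds because, per Section~\ref{sec:model_seq}, $\pi_\theta(y\mid x)$ denotes the length-normalized probability $\pi_\theta(y\mid x)^{1/|y|}$, which lies strictly in $(0,1)$ for any non-degenerate autoregressive model. I do not anticipate any genuine obstacle here — the lemma is essentially a restatement of ``$\sigma = \operatorname{logit}^{-1}$'' — so the proof is short; its role is purely to serve as the key substitution in the proof of Theorem~\ref{thm:asft_bce}.
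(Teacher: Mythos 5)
Your proof is correct and takes essentially the same route as the paper's: unfold $\sigma$, substitute the definition of $r^\mathrm{odds}_\theta$, and simplify the resulting fraction. The only cosmetic difference is that you compute $\sigma(r^\mathrm{odds}_\theta) = \pi_\theta(y\mid x)$ first and then take logarithms, whereas the paper carries the $\log$ through the whole chain; your added remark that $\sigma$ and the logit are mutual inverses is a clean way to see why the cancellation must happen.
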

\begin{proof}
\begin{align*}
&\log \sigma(r^\mathrm{odds}_\theta(y, x)) = \log \sigma(\log \frac{\pi_\theta(y | x)}{1 - \pi_\theta(y | x)}) = \log \frac{1}{1 + e ^ {\log(1 - \pi_\theta(y | x)) - \log(\pi_\theta(y | x))}} \\&= \log \frac{1}{1 + \frac{1 - \pi_\theta(y | x)}{\pi_\theta(y | x)}} = -\log\Big( 1 +\frac{1 - \pi_\theta(y | x)}{\pi_\theta(y | x)}\Big) = -\log \frac{\pi_\theta(y | x) + 1 - \pi_\theta(y | x)}{\pi_\theta(y | x)}= \log \pi_\theta(y | x).
\end{align*}
\end{proof}

\begin{lemma}
\label{lem:oddslogunlike}
$$\log \sigma(-r^\mathrm{odds}_\theta(y, x)) = \log \big(1 - \pi_\theta(y|x)\big)$$
\end{lemma}
\begin{proof}
\begin{align*}
&\log \sigma(-r^\mathrm{odds}_\theta(y, x)) = \log \sigma(-\log \frac{\pi_\theta(y | x)}{1 - \pi_\theta(y | x)}) = \log \frac{1}{1 + e ^ {\log(\pi_\theta(y | x)) - \log(1 - \pi_\theta(y | x))}} = \\&\log \frac{1}{1 + \frac{\pi_\theta(y | x) }{1 - \pi_\theta(y | x)}} = -\log\Big(1 + \frac{\pi_\theta(y | x) }{1 - \pi_\theta(y | x)}\Big) =  -\log \frac{1 -\pi_\theta(y | x)  + \pi_\theta(y | x) }{1 - \pi_\theta(y | x) }  = \log (1 - \pi_\theta(y | x) ).
\end{align*}
\end{proof}

\begin{theorem}
\label{thm:asft_bce}
\(\mathcal{L}_{\mathrm{ASFT}_\mathrm{Align}}\) decomposes into likelihood and unlikelihood terms, corresponding exactly to the sum of binary cross-entropy (BCE) losses evaluated independently on the positive and negative samples:
\[
\mathcal{L}_{\mathrm{ASFT}_\mathrm{Align}} = -\log \pi_\theta(y_w|x) - \log \big(1 - \pi_\theta(y_l|x)\big).
\]
\end{theorem}

\begin{proof}
To explicitly demonstrate this decomposition, we start from the definition of the ASFT loss:
\begin{align*}
\mathcal{L}_\mathrm{ASFT} 
&= -\log \pi_\theta(y_w|x) - \lambda \log \sigma(r^\mathrm{odds}_\theta(y_w, x)) - \lambda \log \sigma(-r^\mathrm{odds}_\theta(y_l, x)),
\end{align*}
where the odds ratio is defined as:
\[
r^\mathrm{odds}_\theta(y, x) = \log\frac{\pi_\theta(y|x)}{1 - \pi_\theta(y|x)}.
\]

Applying Lemma \ref{lem:oddsloglike} and Lemma \ref{lem:oddslogunlike}, we rewrite this as:
\begin{align*}
\mathcal{L}_{\mathrm{ASFT}_\mathrm{Align}} 
&= -\log \pi_\theta(y_w|x) - \log\big(1 - \pi_\theta(y_l|x)\big), \\
\mathcal{L}_\mathrm{ASFT} 
&= -(1+\lambda)\log\pi_\theta(y_w|x) - \lambda\log\big(1 - \pi_\theta(y_l|x)\big).
\end{align*}

To illustrate the connection with the binary cross-entropy (BCE) loss explicitly, consider the BCE defined for an example $(x,y)$ with binary label $z\in\{0,1\}$:
\[
\mathcal{L}_\mathrm{BCE}(y,z|x) = - z \log \pi_\theta(y|x) - (1 - z)\log(1 - \pi_\theta(y|x)).
\]

Evaluating BCE independently at the chosen example \(y_w\) (positive, $z=1$) and rejected example \(y_l\) (negative, $z=0$), we have:
\begin{align*}
\mathcal{L}_\mathrm{BCE}(y_w,1|x) &= -\log\pi_\theta(y_w|x), \\[4pt]
\mathcal{L}_\mathrm{BCE}(y_l,0|x) &= -\log\big(1 - \pi_\theta(y_l|x)\big).
\end{align*}

Summing these two BCE terms yields exactly:
\[
\mathcal{L}_\mathrm{BCE}(y_w,1|x) + \mathcal{L}_\mathrm{BCE}(y_l,0|x) 
= -\log \pi_\theta(y_w|x) - \log\big(1 - \pi_\theta(y_l|x)\big),
\]

which matches precisely the alignment loss $\mathcal{L}_{\mathrm{ASFT}_\mathrm{Align}}$.

Thus $\mathcal{L}_{\mathrm{ASFT}_\mathrm{Align}}$ decomposes into two independent BCE terms, each representing likelihood and unlikelihood modeling separately.
\end{proof}

\section{Relationship Between ORPO and ASFT Loss Functions}
\label{app:orpo_asft}

\begin{theorem}
\(\mathcal{L}_\mathrm{ORPO}\) can be expressed as:
\begin{align*}
\mathcal{L}_\mathrm{ORPO} = \mathcal{L}_\mathrm{ASFT} + \lambda \log\big(\pi_\theta(y_w|x)(1 - \pi_\theta(y_l|x)) + \pi_\theta(y_l|x)(1 - \pi_\theta(y_w|x))\big).
\end{align*}

\end{theorem}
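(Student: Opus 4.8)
The plan is to collapse both sides of the claimed identity into explicit closed-form expressions in the two scalars $p_w := \pi_\theta(y_w \mid x)$ and $p_l := \pi_\theta(y_l \mid x)$, and then compare term by term. First I would rewrite the odds-ratio difference inside the sigmoid of $\mathcal{L}_{\mathrm{ORPO}_\mathrm{Align}}$ as a single logarithm,
\[
r^\mathrm{odds}_\theta(y_w, x) - r^\mathrm{odds}_\theta(y_l, x) = \log\frac{p_w}{1 - p_w} - \log\frac{p_l}{1 - p_l} = \log\frac{p_w(1 - p_l)}{p_l(1 - p_w)},
\]
and then apply the elementary identity $\sigma(\log t) = t/(1 + t)$ (the same manipulation carried out in Lemmas~\ref{lem:oddsloglike} and~\ref{lem:oddslogunlike}) with $t = p_w(1 - p_l)/\bigl(p_l(1 - p_w)\bigr)$ to obtain
\[
\log\sigma\bigl(r^\mathrm{odds}_\theta(y_w, x) - r^\mathrm{odds}_\theta(y_l, x)\bigr) = \log\bigl(p_w(1 - p_l)\bigr) - \log\bigl(p_w(1 - p_l) + p_l(1 - p_w)\bigr).
\]

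Next I would substitute this into $\mathcal{L}_\mathrm{ORPO} = -\log p_w - \lambda\log\sigma(\cdots)$, split $\log\bigl(p_w(1-p_l)\bigr) = \log p_w + \log(1 - p_l)$, and collect terms, giving
\[
\mathcal{L}_\mathrm{ORPO} = -(1 + \lambda)\log p_w - \lambda\log(1 - p_l) + \lambda\log\bigl(p_w(1 - p_l) + p_l(1 - p_w)\bigr).
\]
By Theorem~\ref{thm:asft_bce}, the first two terms are precisely $\mathcal{L}_\mathrm{ASFT}$, which yields the stated decomposition. To finish, I would note that the extra term is unchanged under the swap $y_w \leftrightarrow y_l$, since $p_w(1 - p_l) + p_l(1 - p_w)$ is manifestly symmetric, establishing the symmetry claim.

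The argument is entirely elementary, so there is no genuine conceptual obstacle; the only point requiring care is the sigmoid-of-a-log-ratio step, where one must keep the numerator $p_w(1 - p_l)$ distinct from the symmetric denominator $p_w(1 - p_l) + p_l(1 - p_w)$, and where the implicit hypotheses $p_w, p_l \in (0, 1)$ are what make the division and all logarithms well defined. Leaning on the algebra already packaged in Lemmas~\ref{lem:oddsloglike}–\ref{lem:oddslogunlike} keeps this bookkeeping transparent.
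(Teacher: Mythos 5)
Your proof is correct and follows essentially the same route as the paper's: both rewrite the $-\log\sigma$ term of $\mathcal{L}_{\mathrm{ORPO}_\mathrm{Align}}$ into logs of the raw probabilities and then recognize the residual $-(1+\lambda)\log p_w - \lambda\log(1-p_l)$ as $\mathcal{L}_\mathrm{ASFT}$ via Theorem~\ref{thm:asft_bce}. The only cosmetic difference is that you first collapse the odds-ratio difference into a single logarithm and apply $\sigma(\log t) = t/(1+t)$, whereas the paper applies $\log\sigma(x) = x - \log(e^x+1)$ directly to the difference of logs and then simplifies $e^x$ — these are the same algebraic manipulation in slightly different packaging.
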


\begin{proof}
We start by defining the ORPO loss:
\begin{align*}
\mathcal{L}_{\mathrm{ORPO}} = -\log \pi_\theta(y_w|x)-\lambda \log \sigma\bigg(\log \frac{\pi(y_w|x)}{1 - \pi(y_w | x)} - \log \frac{\pi(y_l|x)}{1 - \pi(y_l | x)}\bigg).
\end{align*}

Expanding the second term using the identity \(\log \sigma(x) = x - \log(e^x + 1)\), we get:
\begin{align*}
 &- \log \sigma\bigg(\log \frac{\pi_\theta(y_w|x)}{1 - \pi_\theta(y_w | x)} - \log \frac{\pi_\theta(y_l|x)}{1 - \pi_\theta(y_l | x)}\bigg) \\&= \log \frac{1 - \pi_\theta(y_w|x)}{\pi_\theta(y_w|x)} + \log \frac{\pi_\theta(y_l|x)}{1 - \pi_\theta(y_l|x)} + \log \left( \frac{\pi_\theta(y_w|x)(1 - \pi_\theta(y_l|x))}{\pi_\theta(y_l|x)(1 - \pi_\theta(y_w|x))} + 1 \right) \\
&= \log \frac{1 - \pi_\theta(y_w|x)}{\pi_\theta(y_w|x)} + \log \frac{\pi_\theta(y_l|x)}{1 - \pi_\theta(y_l|x)} + \log \left( \frac{\pi_\theta(y_w|x) - 2 \pi_\theta(y_w|x) \pi_\theta(y_l|x) + \pi_\theta(y_l|x)}{\pi_\theta(y_l|x)(1 - \pi_\theta(y_w|x))} \right) \\
&= \underbrace{-\log \pi_{\theta}(y_w | x) - \log (1 - \pi_{\theta}(y_l | x)) + \log \left( \pi_\theta(y_w|x) - 2 \pi_\theta(y_w|x) \pi_\theta(y_l|x) + \pi_\theta(y_l|x) \right)}_{\mathrm{ORPO}_\mathrm{Align}} .
\end{align*}

Combining all terms, we obtain:
\begin{align*}
\mathcal{L}_\mathrm{ORPO} = &-(1+\lambda)\log \pi_\theta(y_w|x) - \lambda \log (1 - \pi_\theta(y_l|x)) + 
\\
&\lambda \log\big(\pi_\theta(y_w|x)(1 - \pi_\theta(y_l|x)) + \pi_\theta(y_l|x)(1 - \pi_\theta(y_w|x))\big) \\
=&\mathcal{L}_\mathrm{ASFT} + \lambda \log\big(\pi_\theta(y_w|x)(1 - \pi_\theta(y_l|x)) + \pi_\theta(y_l|x)(1 - \pi_\theta(y_w|x))\big)
\end{align*}

\begin{corollary}
\label{cor:orpoleasft}
\(\mathcal{L}_\mathrm{ORPO} \le \mathcal{L}_\mathrm{ASFT}\) and \(\mathcal{L}_{\mathrm{ORPO}_\mathrm{Align}} \le \mathcal{L}_{\mathrm{ASFT}_\mathrm{Align}}\).
\end{corollary}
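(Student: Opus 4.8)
The plan is to read the result straight off Theorem~\ref{thm:orpo_asft}. That theorem gives
$\mathcal{L}_\mathrm{ORPO} = \mathcal{L}_\mathrm{ASFT} + \lambda\,\log\big(\pi_\theta(y_w|x)(1-\pi_\theta(y_l|x)) + \pi_\theta(y_l|x)(1-\pi_\theta(y_w|x))\big)$,
so (with $\lambda \ge 0$, as in every instantiation considered) it suffices to show that the argument of the logarithm never exceeds $1$, which forces the correction term to be nonpositive.

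To that end I would abbreviate $a = \pi_\theta(y_w \mid x)$ and $b = \pi_\theta(y_l \mid x)$, both lying in $[0,1]$ since the length-normalized $\pi_\theta(y\mid x)$ is a geometric mean of per-token probabilities and hence still bounded by $1$. The argument of the log is then $a(1-b) + b(1-a) = a + b - 2ab$, and the one elementary fact needed is the identity $1 - \big(a + b - 2ab\big) = (1-a)(1-b) + ab \ge 0$, valid because $a,b \in [0,1]$. Thus $a+b-2ab \le 1$, the logarithm is $\le 0$, and multiplying by $\lambda \ge 0$ yields $\mathcal{L}_\mathrm{ORPO} \le \mathcal{L}_\mathrm{ASFT}$.

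For the alignment version I would use the explicit formulas for $\mathcal{L}_{\mathrm{ASFT}_\mathrm{Align}}$ and $\mathcal{L}_{\mathrm{ORPO}_\mathrm{Align}}$ recorded right after Theorems~\ref{thm:asft_bce} and~\ref{thm:orpo_asft}. Subtracting them, the $-\log a$ and $-\log(1-b)$ terms cancel and one is left with $\mathcal{L}_{\mathrm{ORPO}_\mathrm{Align}} - \mathcal{L}_{\mathrm{ASFT}_\mathrm{Align}} = \log\big(a + b - 2ab\big) \le 0$ by the same bound (no $\lambda$ factor is even needed here), giving $\mathcal{L}_{\mathrm{ORPO}_\mathrm{Align}} \le \mathcal{L}_{\mathrm{ASFT}_\mathrm{Align}}$.

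There is essentially no hard step: the only point worth stating carefully is the bound $a + b - 2ab \le 1$ on $[0,1]^2$, together with the reminder that length normalization keeps $\pi_\theta(y\mid x) \le 1$ so that the logarithm arguments stay in range. I would also note that the inequalities are strict away from the degenerate corners where $ab = 0$ and $(1-a)(1-b) = 0$, which never occur for genuine model probabilities in $(0,1)$.
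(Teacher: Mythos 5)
Your proof is correct and follows essentially the same route as the paper's: both read the result off Theorem~\ref{thm:orpo_asft} by showing the argument of the added logarithm cannot exceed $1$. There is one small but genuine improvement worth flagging. The paper's stated justification invokes the extra hypothesis $\pi_\theta(y_w\mid x) + \pi_\theta(y_l\mid x) \le 1$ in addition to $\pi_\theta(\cdot)\in[0,1]$. That condition is not guaranteed here, since the quantities in question are length-normalized (geometric-mean) scores rather than probabilities of disjoint events, so there is no reason their sum should be bounded by $1$. Your identity $1-(a+b-2ab)=(1-a)(1-b)+ab\ge 0$ dispenses with that assumption entirely and establishes $a+b-2ab\le 1$ on all of $[0,1]^2$, which is what the corollary actually needs. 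So while the decomposition and the key observation (nonpositivity of the correction term, $\lambda\ge 0$) are the same, your version closes a small gap in the paper's stated hypotheses.
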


\noindent
This follows from the fact that the additional term in \(\mathcal{L}_\mathrm{ORPO}\) is non-positive when \(\pi_\theta(y_w|x)\) and \(\pi_\theta(y_l|x)\) lie in \([0,1]\), and \(\pi_\theta(y_w|x) + \pi_\theta(y_l|x) \le 1\).

\end{proof}

\section{Understanding Tempered ASFT and ORPO}
\label{sec:tempered_gradients}

Consider gradients of \(\nabla_\theta\mathcal{L}^\beta_{\mathrm{ASFT}_\mathrm{Align}}\) and \(\nabla_\theta\mathcal{L}^\beta_{\mathrm{ORPO}_\mathrm{Align}}\):

\begin{equation*}
\nabla_\theta\mathcal{L}^\beta_{\mathrm{ASFT}_\mathrm{Align}} = -\beta\Big[
\big(1 - \sigma(\beta r^\mathrm{odds}_\theta(y_w, x))\big)\nabla_\theta r^\mathrm{odds}_\theta(y_w, x) + \sigma(\beta r^\mathrm{odds}_\theta(y_l, x))\nabla_\theta r^\mathrm{odds}_\theta(y_l, x)
\Big],
\end{equation*}
\noindent
\begin{equation*}
\nabla_\theta\mathcal{L}^\beta_{\mathrm{ORPO}_\mathrm{Align}} = -\beta\Big[\big(\nabla_\theta r^\mathrm{odds}_\theta(y_w, x) - \nabla_\theta r^\mathrm{odds}_\theta(y_l, x)\big) \times \Big(1 - \sigma\big(\beta r^\mathrm{odds}_\theta(y_w, x) - \beta r^\mathrm{odds}_\theta(y_l, x)\big)\Big)
\Big],
\end{equation*}
where \(\nabla_\theta r^\mathrm{odds}_\theta(y, x) = \frac{\nabla_\theta\log\pi_\theta(y|x)}{1 - \pi_\theta(y|x)}\). 

\begin{figure*}[ht!]
  \centering
  \begin{minipage}[t]{\textwidth}
    \centering
    \includegraphics[width=0.48\textwidth]{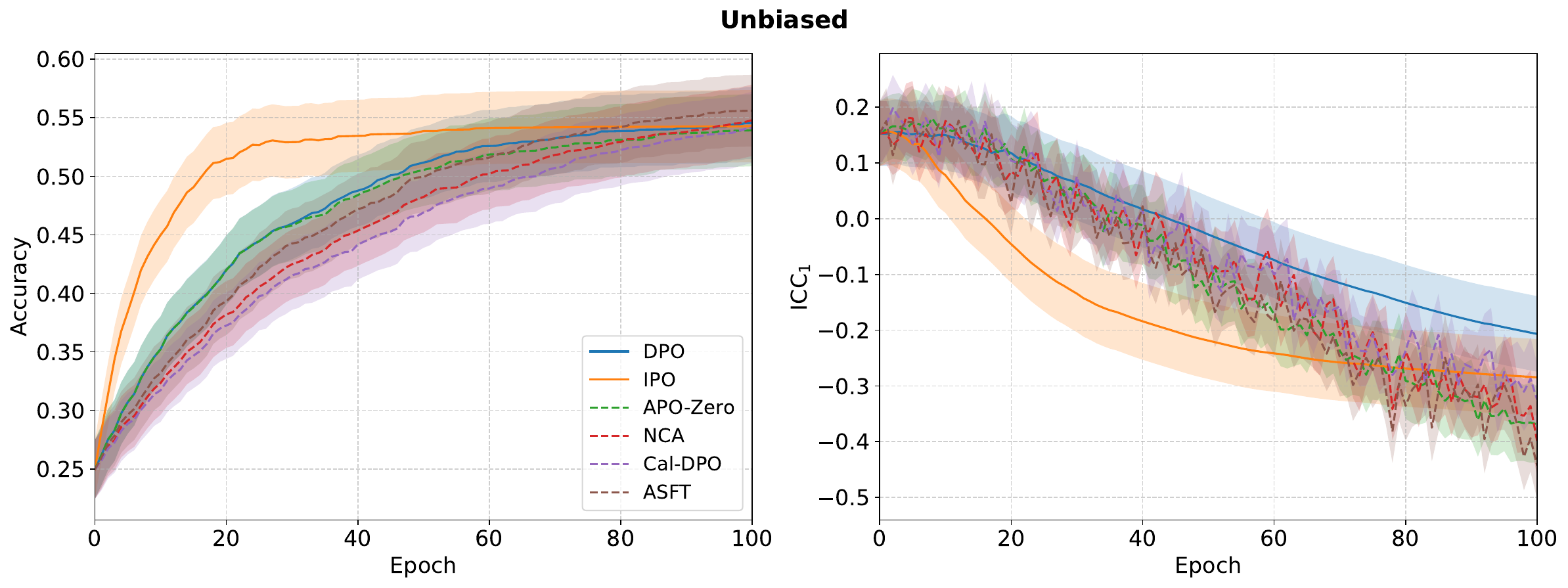}\hfill
    \includegraphics[width=0.48\textwidth]{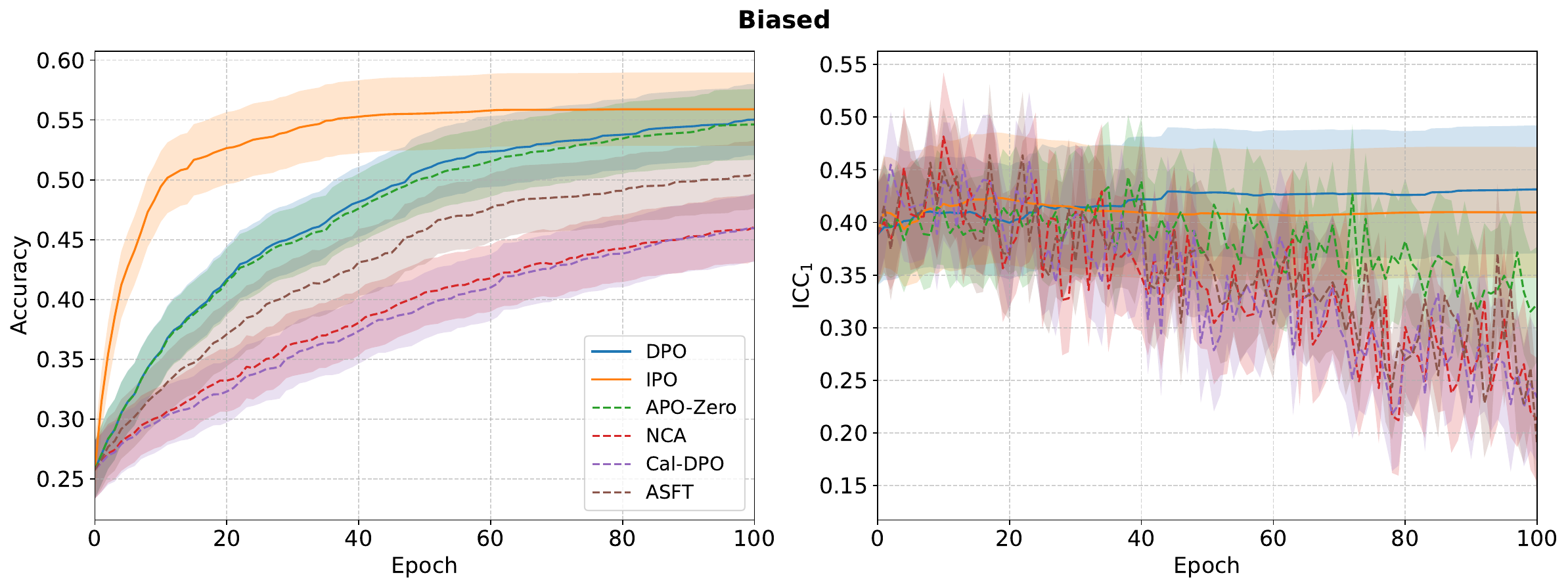}
    \caption*{(a) Hidden size = 1}
  \end{minipage}
  \vspace{0.5em}
  \begin{minipage}[t]{\textwidth}
    \centering
    \includegraphics[width=0.48\textwidth]{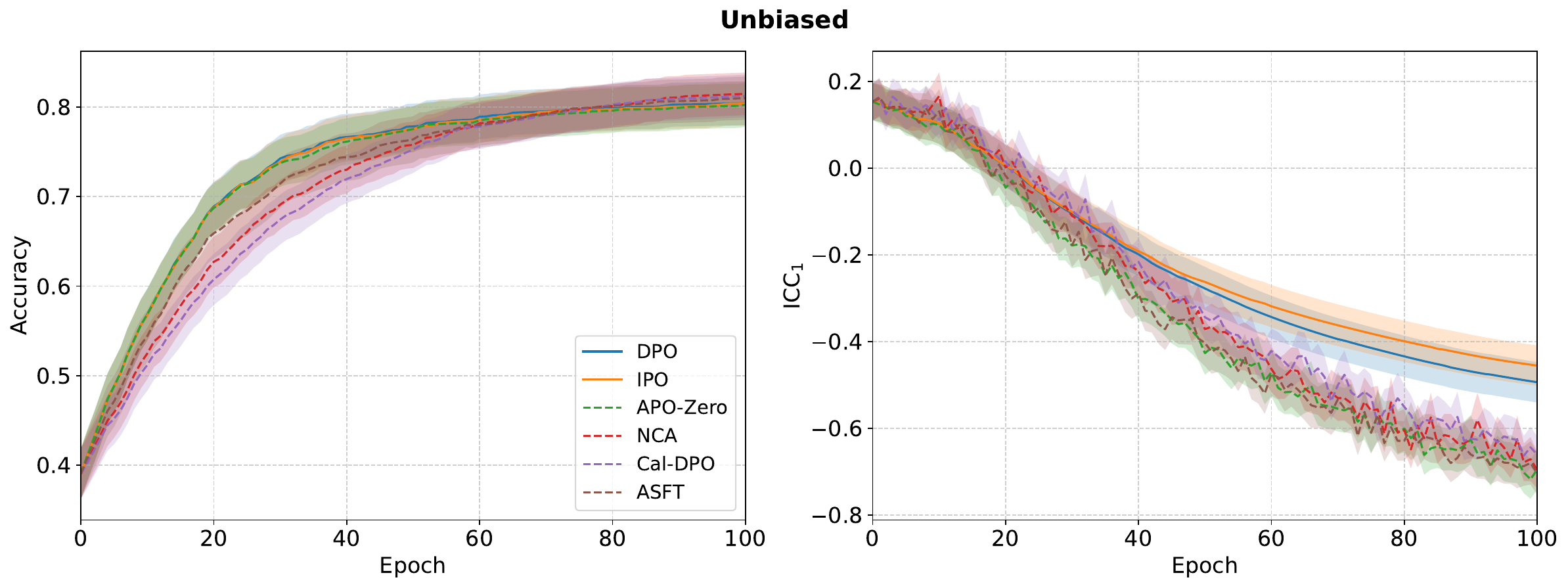}\hfill
    \includegraphics[width=0.48\textwidth]{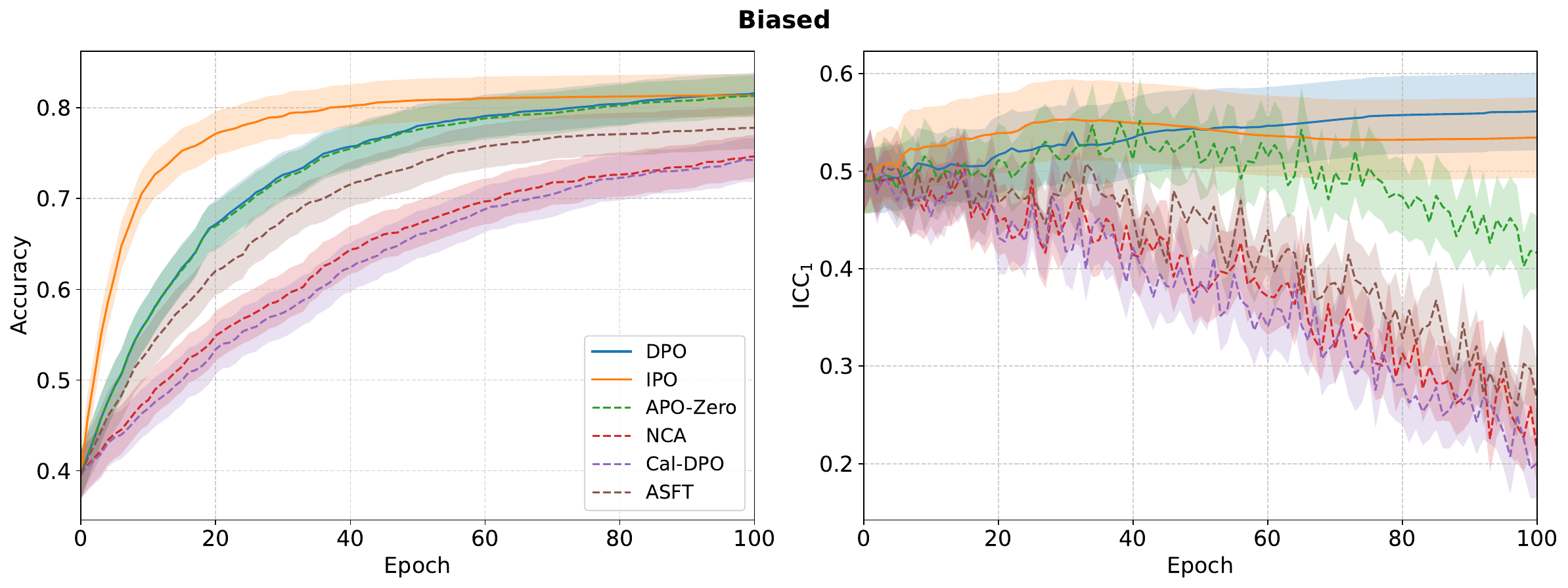}
    \caption*{(b) Hidden size = 2}
  \end{minipage}
  \vspace{0.5em}
  \begin{minipage}[t]{\textwidth}
    \centering
    \includegraphics[width=0.48\textwidth]{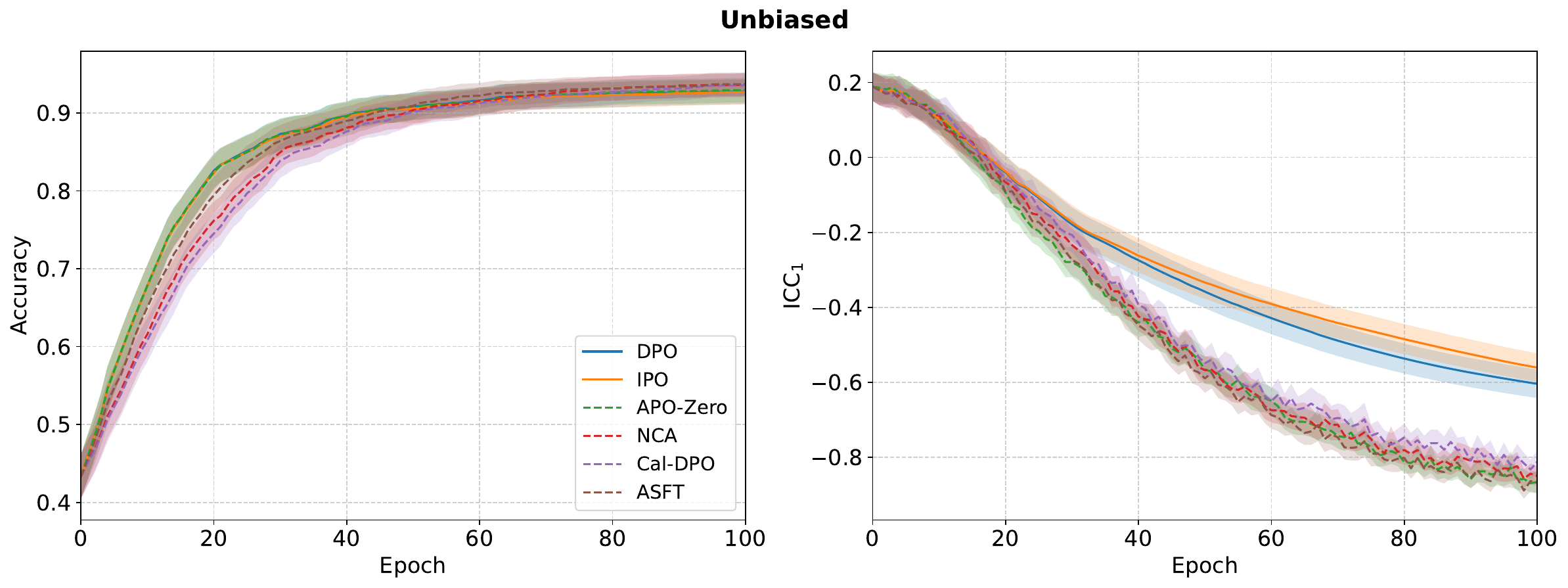}\hfill
    \includegraphics[width=0.48\textwidth]{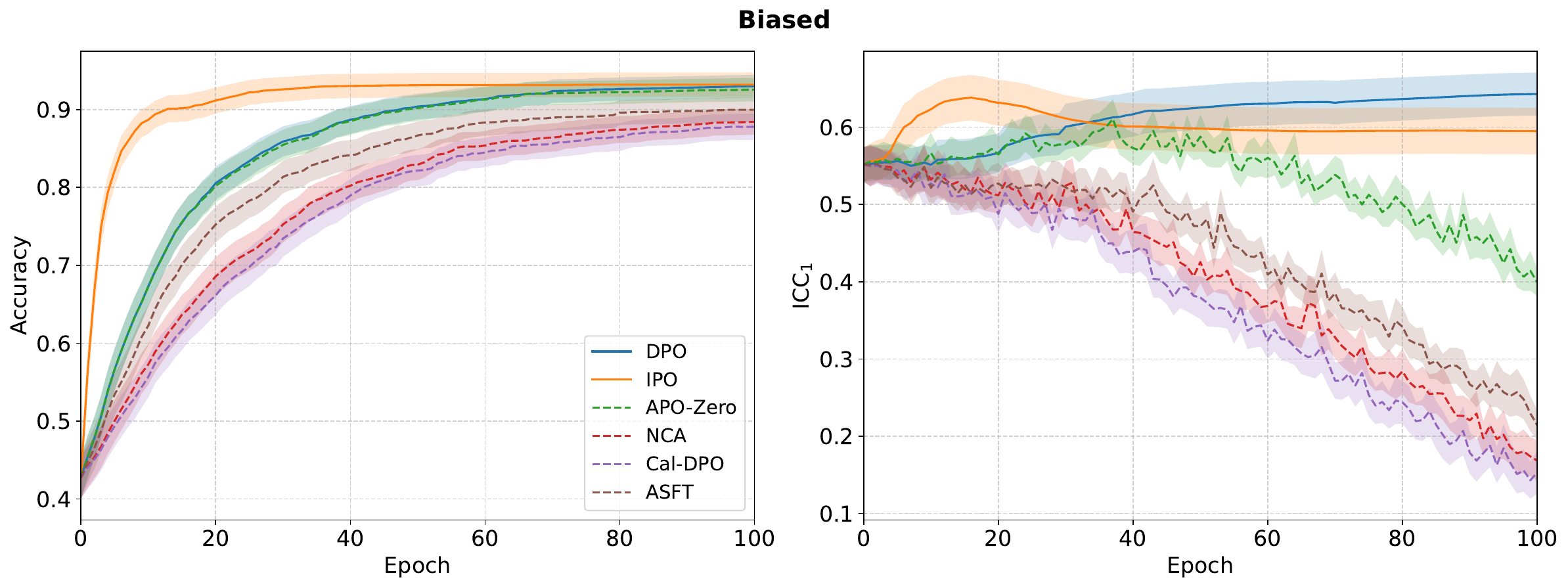}
    \caption*{(c) Hidden size = 3}
  \end{minipage}
  \vspace{0.5em}
  \begin{minipage}[t]{\textwidth}
    \centering
    \includegraphics[width=0.48\textwidth]{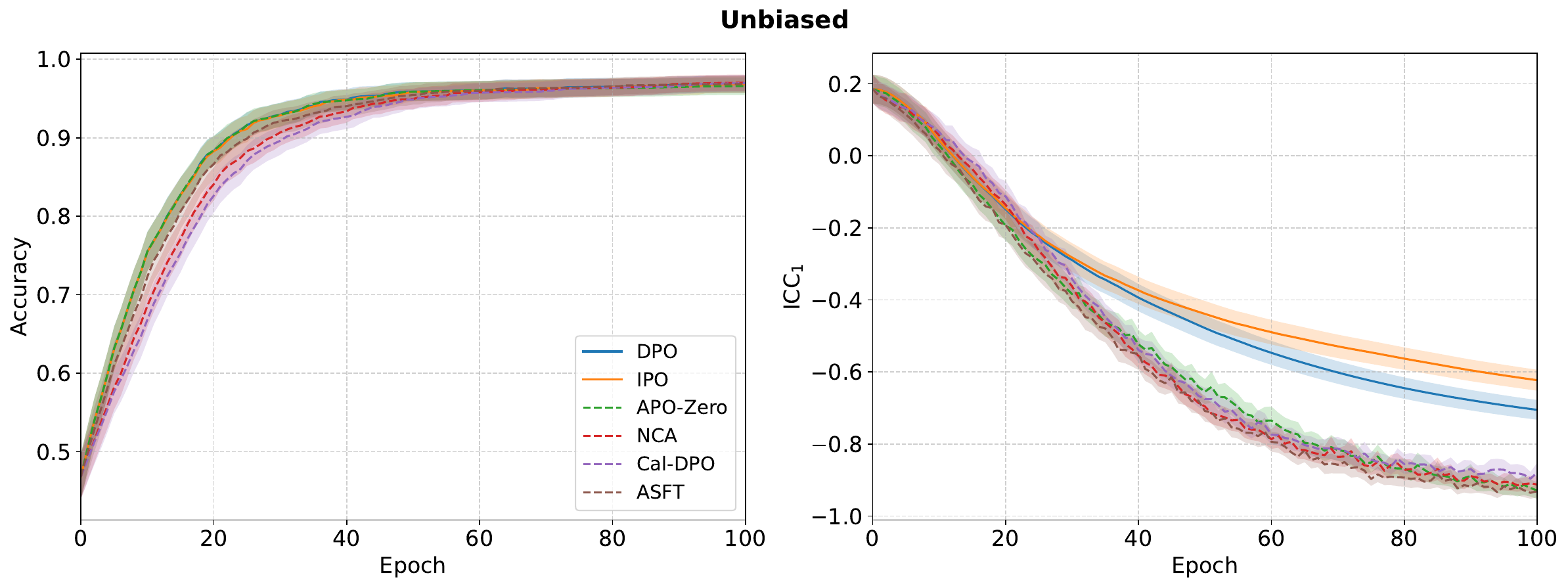}\hfill
    \includegraphics[width=0.48\textwidth]{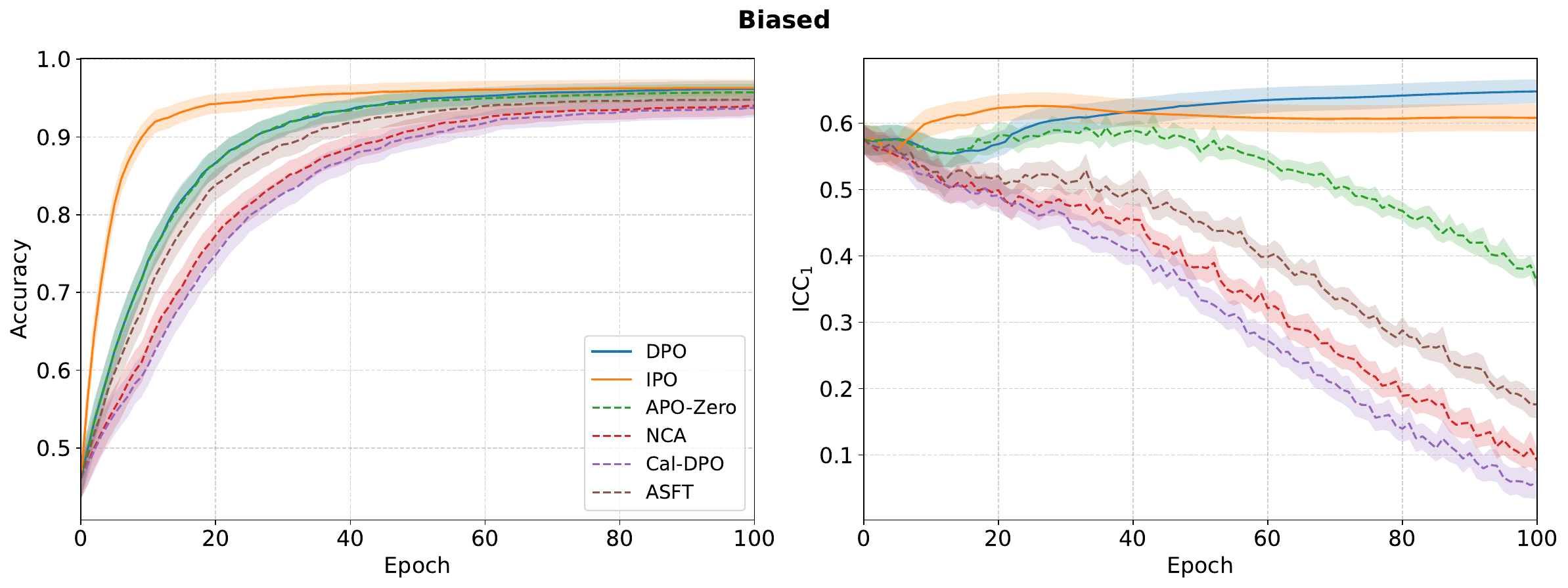}
    \caption*{(d) Hidden size = 4}
  \end{minipage}
  \caption{\textbf{Toy experiment: effect of model capacity ($h = 1,2,3,4$) on accuracy and prompt bias ($\mathrm{ICC}_1$).}
  Pairwise (solid) and pointwise (dashed) objectives compared under unbiased ($\mathrm{bias\_strength}=0.0$, left) and biased ($\mathrm{bias\_strength}=0.9$, right) conditions. Results averaged over 1000 seeds; $95\%$ CI shown. See Section~\ref{sec:discussion} for details.}
  \label{fig:toy_losses_capacity_1_4}
\end{figure*}

When \(\beta \to 0\), \(\sigma(\beta \cdots) \approx \frac{1}{2}\), both methods aggressively improve the odds ratio (increasing for \(y_w\) and decreasing for \(y_l\)). As \(\beta\) increases, the updates become bounded by the factor \(\sigma(\beta \cdots)\) (similar to a reward threshold in DPO). Hence, once the model improves, further updates are limited, either individually for \(\mathcal{L}^\beta_{\mathrm{ASFT}_\mathrm{Align}}\) or by pairwise ranking in \(\mathcal{L}^\beta_{\mathrm{ORPO}_\mathrm{Align}}\).

\section{Experiment on Prompt Bias}
\label{app:toy_exp}
To further investigate our hypothesis from Section~\ref{sec:discussion} regarding how pairwise and pointwise objectives interact with prompt-specific biases, we designed a controlled toy experiment. The goal is to simulate the essential mechanics of DAA training and observe the behavior of different objectives under conditions with and without an artificially introduced prompt-specific bias.

\paragraph{Experimental Setup.} For each run, we generate a dataset of $N=2000$ samples. Each sample consists of a scalar prompt $x \sim \mathrm{U}(0,1)$ and two scalar responses $s_{1,\text{base}}, s_{2,\text{base}} \sim \mathrm{U}(0,1)$, representing the underlying "base quality" of the responses for that prompt.

Before introducing any bias, we center the base scores for each prompt:
\[
\tilde{s}_{1,\text{base}} = s_{1,\text{base}} - \frac{1}{2}(s_{1,\text{base}} + s_{2,\text{base}}), \quad
\tilde{s}_{2,\text{base}} = s_{2,\text{base}} - \frac{1}{2}(s_{1,\text{base}} + s_{2,\text{base}})
\]
so that $\tilde{s}_{1,\text{base}} + \tilde{s}_{2,\text{base}} = 0$ for every prompt. This ensures that, in the absence of further modifications, there is no prompt-specific baseline in the response scores.

Next, we introduce prompt-specific bias by adding $b_x = \text{bias\_strength} \times \mathbb{I}(x < \text{bias\_threshold})$ to both centered scores, with $\text{bias\_threshold} = 0.5$ and $\text{bias\_strength}$ set to $0.0$ (unbiased) or $0.9$ (biased). The observed scores are therefore:
\[
y_1 = \tilde{s}_{1,\text{base}} + b_x, \quad y_2 = \tilde{s}_{2,\text{base}} + b_x
\]

For each prompt, the preferred ($y_w$) and dispreferred ($y_l$) observed scores are determined by applying the Bradley-Terry model \citep{bt} to $(y_1, y_2)$ with a low temperature ($10^{-6}$), making the assignment nearly deterministic: the higher of $y_1$ or $y_2$ is almost always selected as $y_w$, and the lower as $y_l$.

\begin{figure*}[ht!]
  \centering
  \begin{minipage}[t]{\textwidth}
    \centering
    \includegraphics[width=0.48\textwidth]{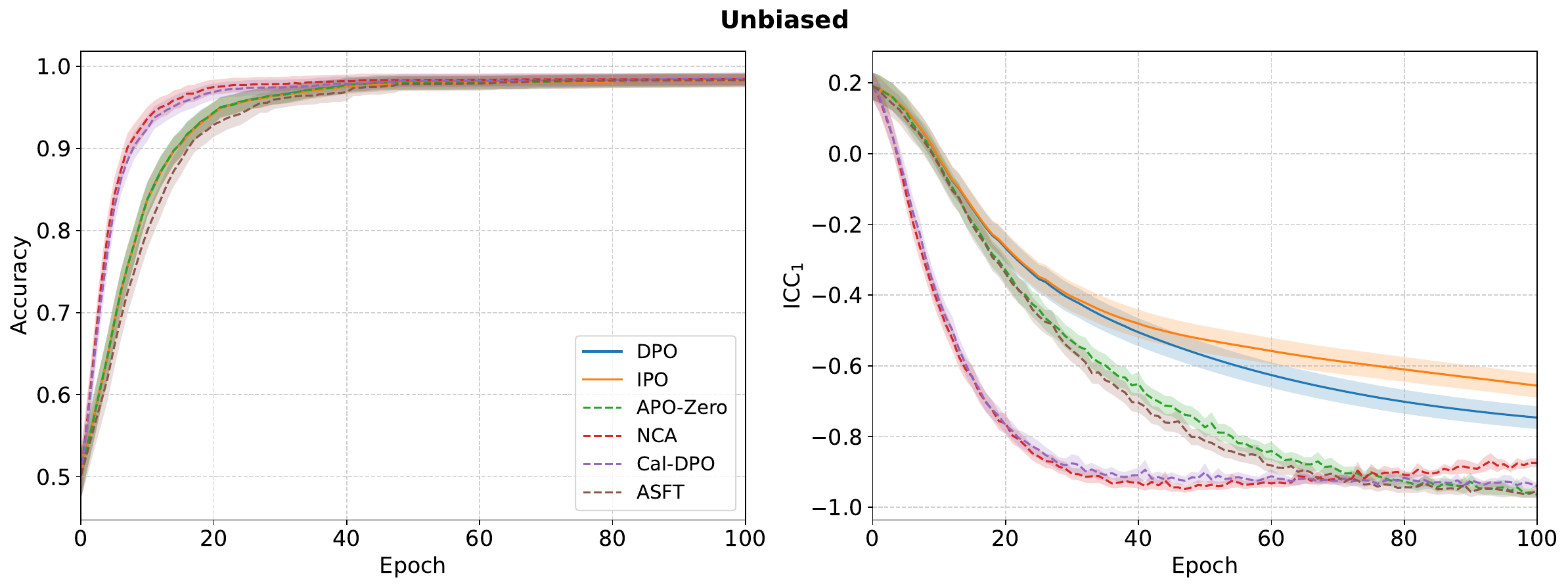}\hfill
    \includegraphics[width=0.48\textwidth]{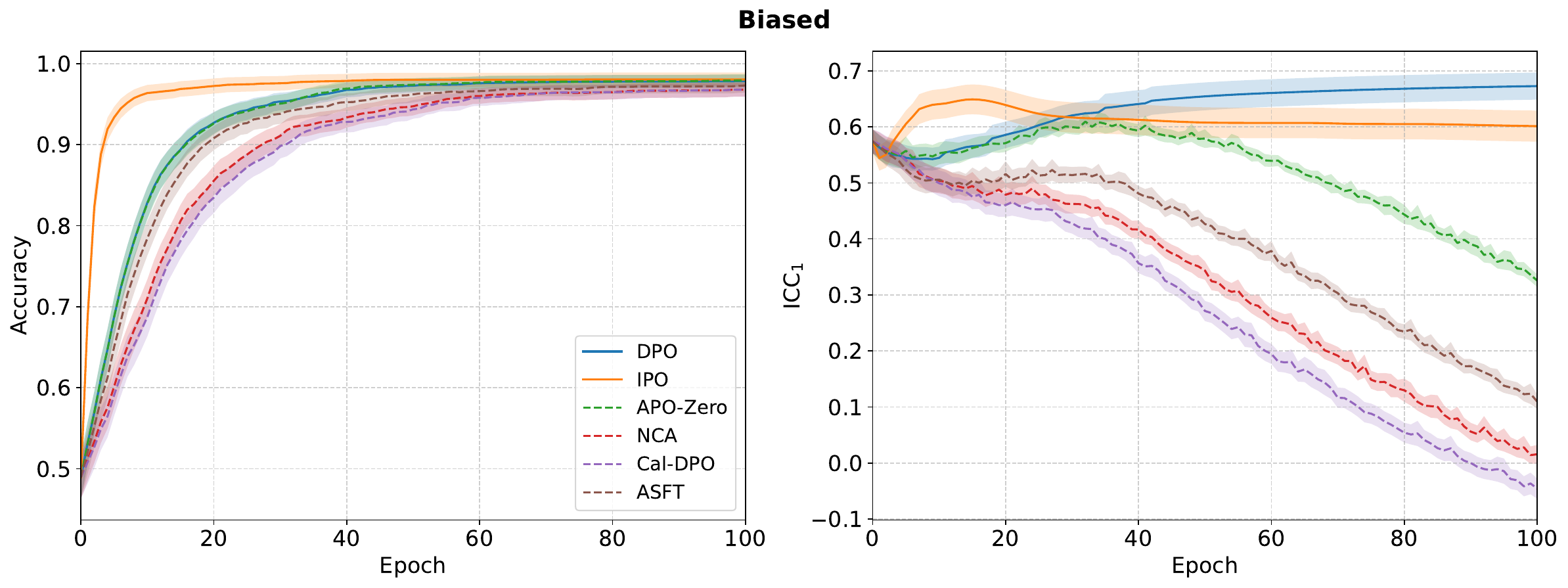}
    \caption*{(a) Hidden size = 5}
  \end{minipage}
  \vspace{0.5em}
  \begin{minipage}[t]{\textwidth}
    \centering
    \includegraphics[width=0.48\textwidth]{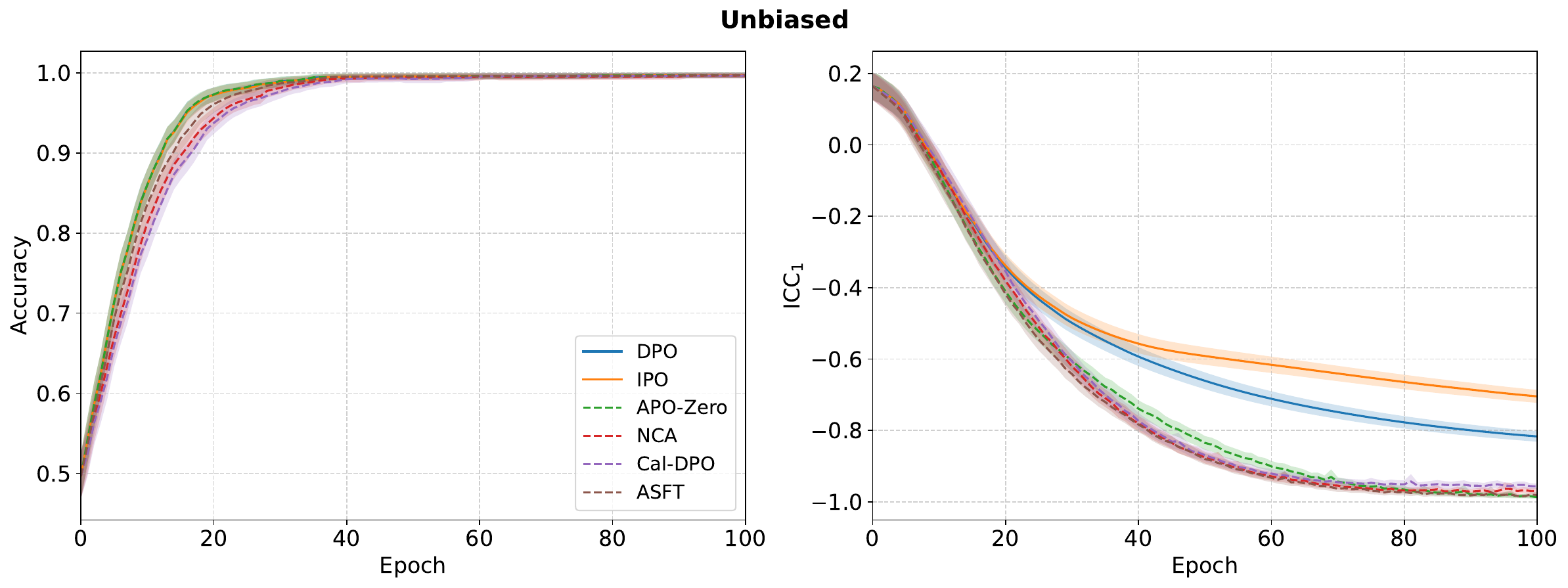}\hfill
    \includegraphics[width=0.48\textwidth]{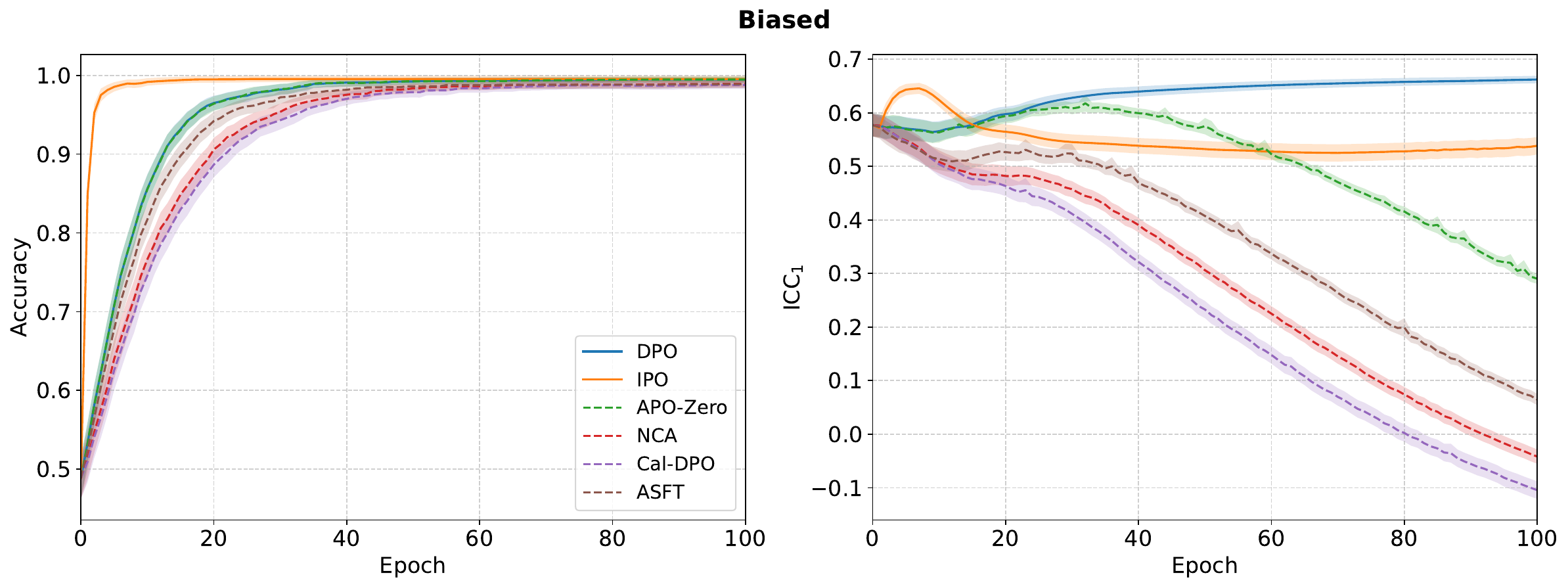}
    \caption*{(b) Hidden size = 6}
  \end{minipage}
  \vspace{0.5em}
  \begin{minipage}[t]{\textwidth}
    \centering
    \includegraphics[width=0.48\textwidth]{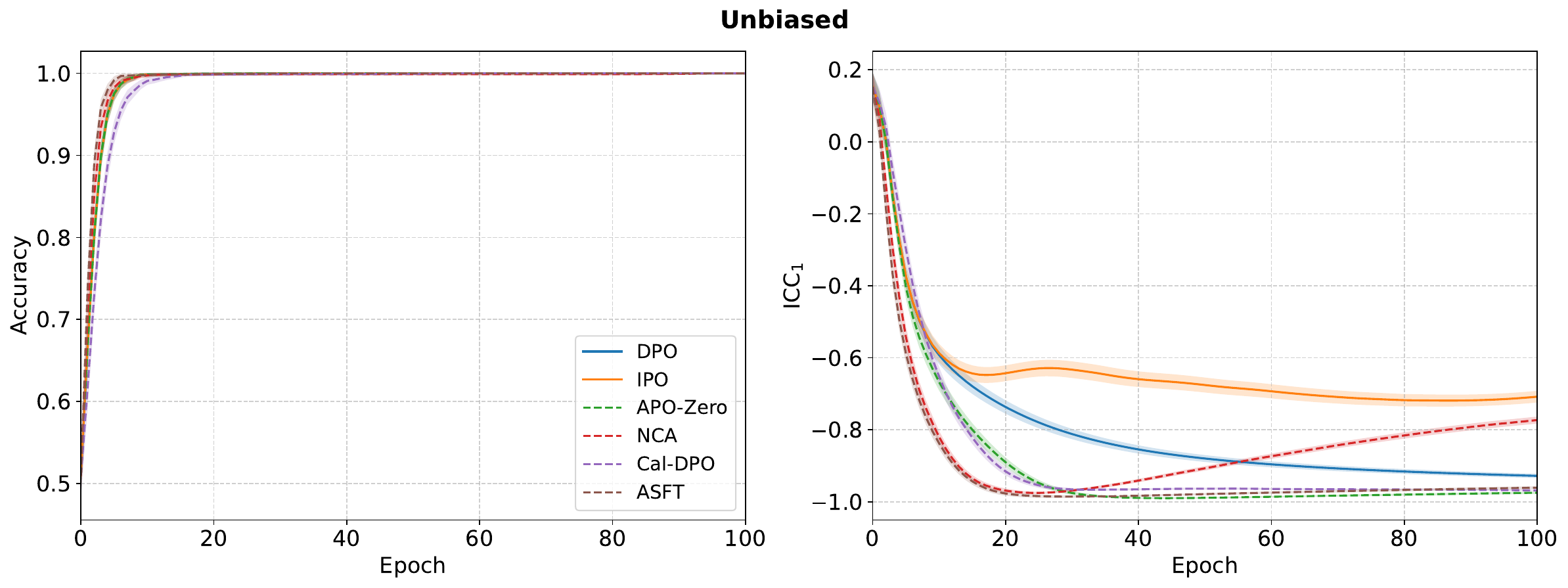}\hfill
    \includegraphics[width=0.48\textwidth]{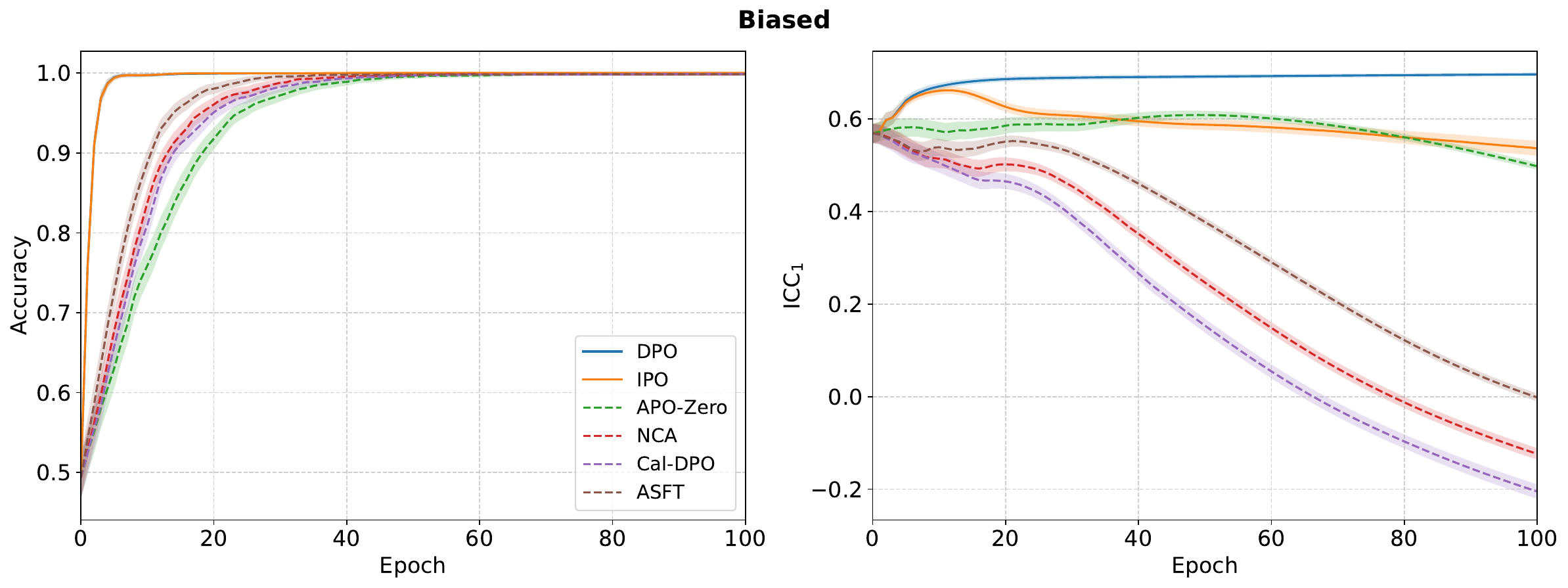}
    \caption*{(c) Hidden size = 8}
  \end{minipage}
  \caption{\textbf{Toy experiment: effect of model capacity ($h = 5,6,8$) on accuracy and prompt bias ($\mathrm{ICC}_1$).}
  Pairwise (solid) and pointwise (dashed) objectives compared under unbiased ($\mathrm{bias\_strength}=0.0$, left) and biased ($\mathrm{bias\_strength}=0.9$, right) conditions. Results averaged over 1000 seeds; $95\%$ CI shown. See Section~\ref{sec:discussion} for details.}
  \label{fig:toy_losses_capacity_5_8}
\end{figure*}

\paragraph{Model and Training.}
The model is a simple Multi-Layer Perceptron (MLP) with a single hidden layer and ReLU activation.
It takes a 2-dimensional input (concatenation of the scalar prompt \(x\) and scalar candidate response score \(y\)) and outputs a scalar score \(r_\theta(x, y)\).
We experiment with varying hidden layer sizes $h \in \{1, 2, 3, 4, 5, 6, 8\}$ to test different model capacities.

Since we focus solely on the bias-specific dependencies of each DAA objective, we do not investigate the differences between $r^\mathrm{ref}_\theta$ and $r^\mathrm{odds}_\theta$, operating exclusively with the scalar form $r_\theta(x, y)$.
As a result, some of the loss functions discussed in Section~\ref{sec:preliminaries} become equivalent in this context (for instance, DPO, SimPO, and ORPO) which we collectively refer to in this section as "DPO" for convenience.
Other losses, such as APO-Zero, NCA, Cal-DPO, and ASFT, retain their distinct formulations involving $r_\theta(x, y)$, and are therefore referred to by their original names.

We fix $\beta = 1$ throughout, so that the scale of the loss does not confound the comparison of objectives; tuning $\beta$ merely regularizes the strength of preference optimization.
This allows any differences in alignment to be attributed to the structural properties of the objectives.

Each configuration (objective, hidden size $h$, bias regime) is trained for 100 epochs, using 80\% of the data for training and 20\% for testing.
For each configuration, the learning rate is selected by hyperparameter search over $\{0.3, 0.1, 0.05, 0.03, 0.01, 0.005, 0.003\}$ to maximize test alignment accuracy.
All reported results are averaged over 1000 independent runs (with distinct random seeds for both data generation and model initialization).
Confidence intervals are reported as $\pm 1.96\,\text{SE}$, where $\text{SE}$ is the standard error across runs.

We report two metrics on the test set:
(i) accuracy, defined as the fraction of test pairs for which $r_\theta(x, y_w) > r_\theta(x, y_l)$;
and (ii) the Intraclass Correlation Coefficient ($\text{ICC}_1$) \cite{icc}, which quantifies prompt-specific bias in the model's learned scores (see Appendix~\ref{app:icc} for details).

\begin{figure*}[ht!]
  \centering
  % Row 1
  \begin{minipage}[t]{\textwidth}
    \centering
    \includegraphics[width=0.48\textwidth]{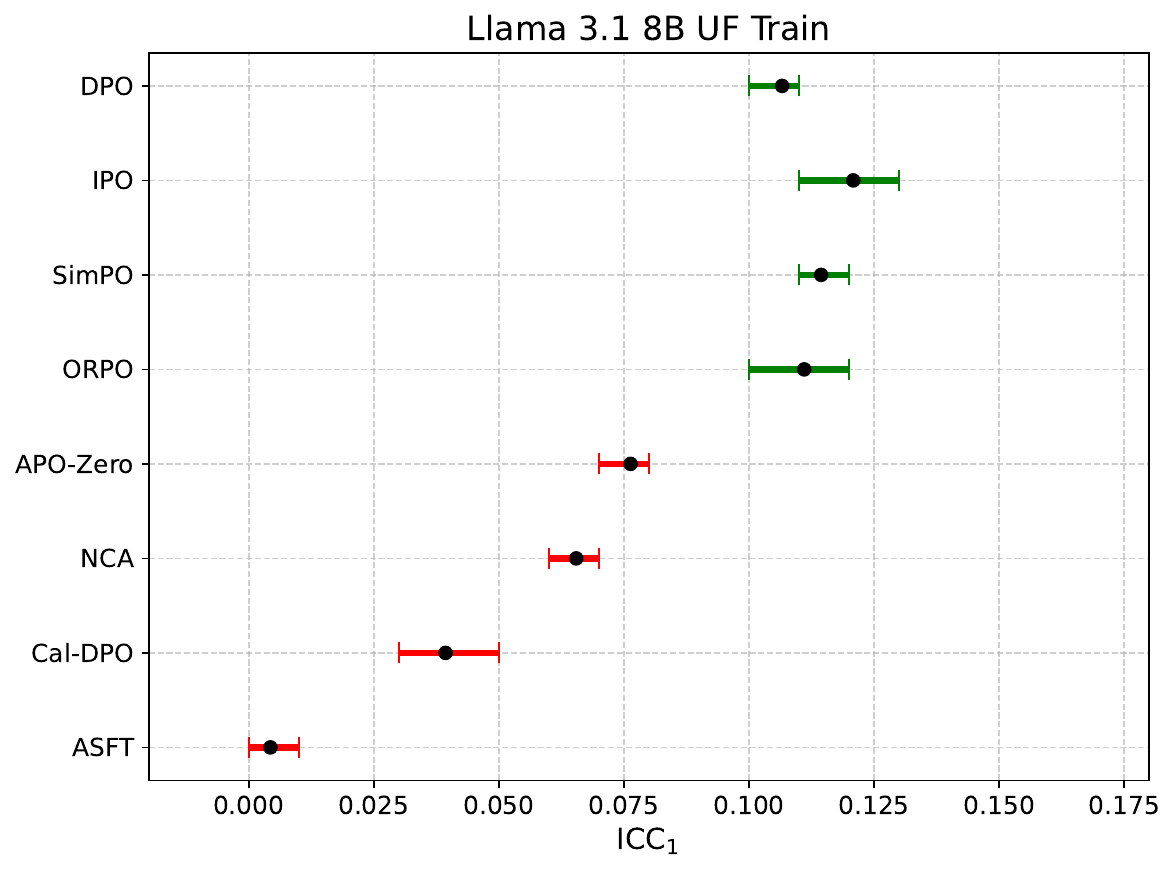}\hfill
    \includegraphics[width=0.48\textwidth]{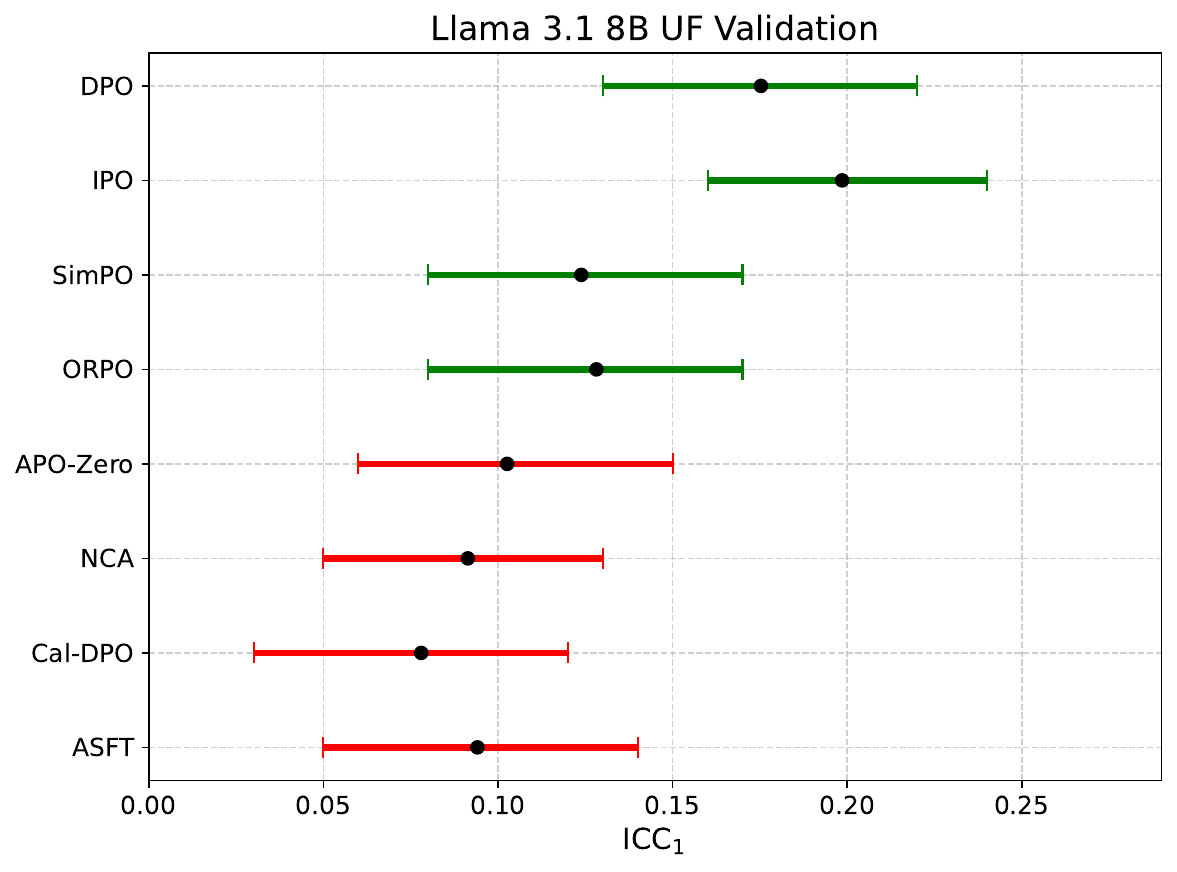}
    \caption*{(a) Llama 3.1 8B UF}
  \end{minipage}
  % \vspace{0.5em}
  % Row 2
  \begin{minipage}[t]{\textwidth}
    \centering
    \includegraphics[width=0.48\textwidth]{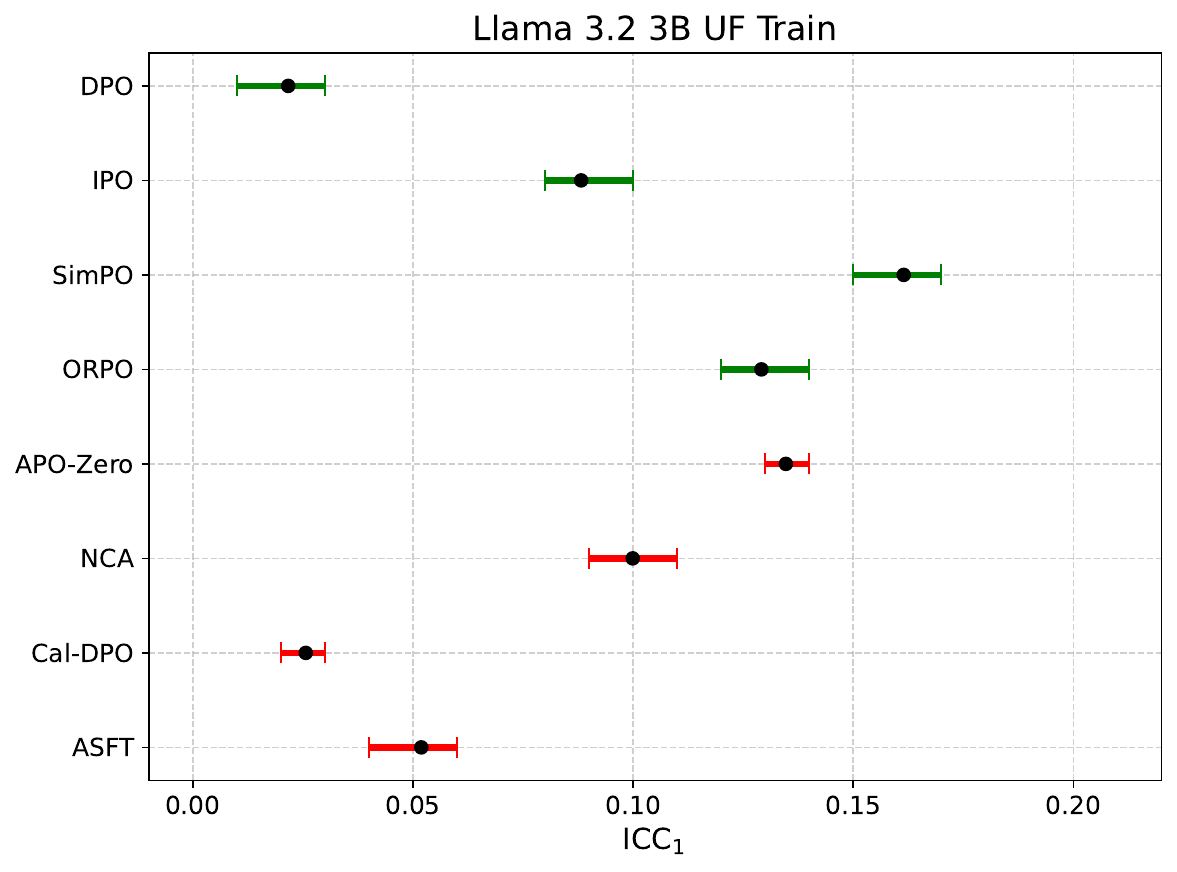}\hfill
    \includegraphics[width=0.48\textwidth]{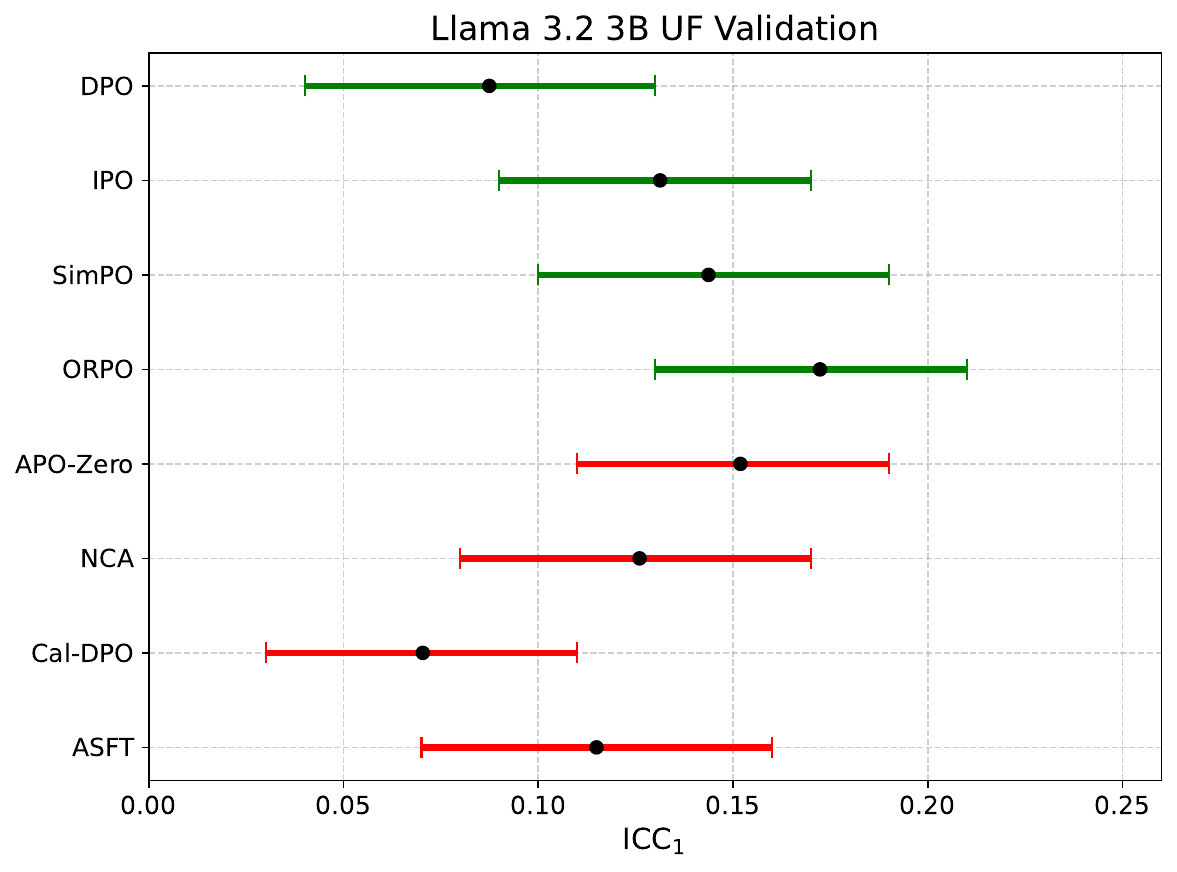}
    \caption*{(b) Llama 3.2 3B UF}
  \end{minipage}
  % \vspace{0.5em}
  % Row 3
  \begin{minipage}[t]{\textwidth}
    \centering
    \includegraphics[width=0.48\textwidth]{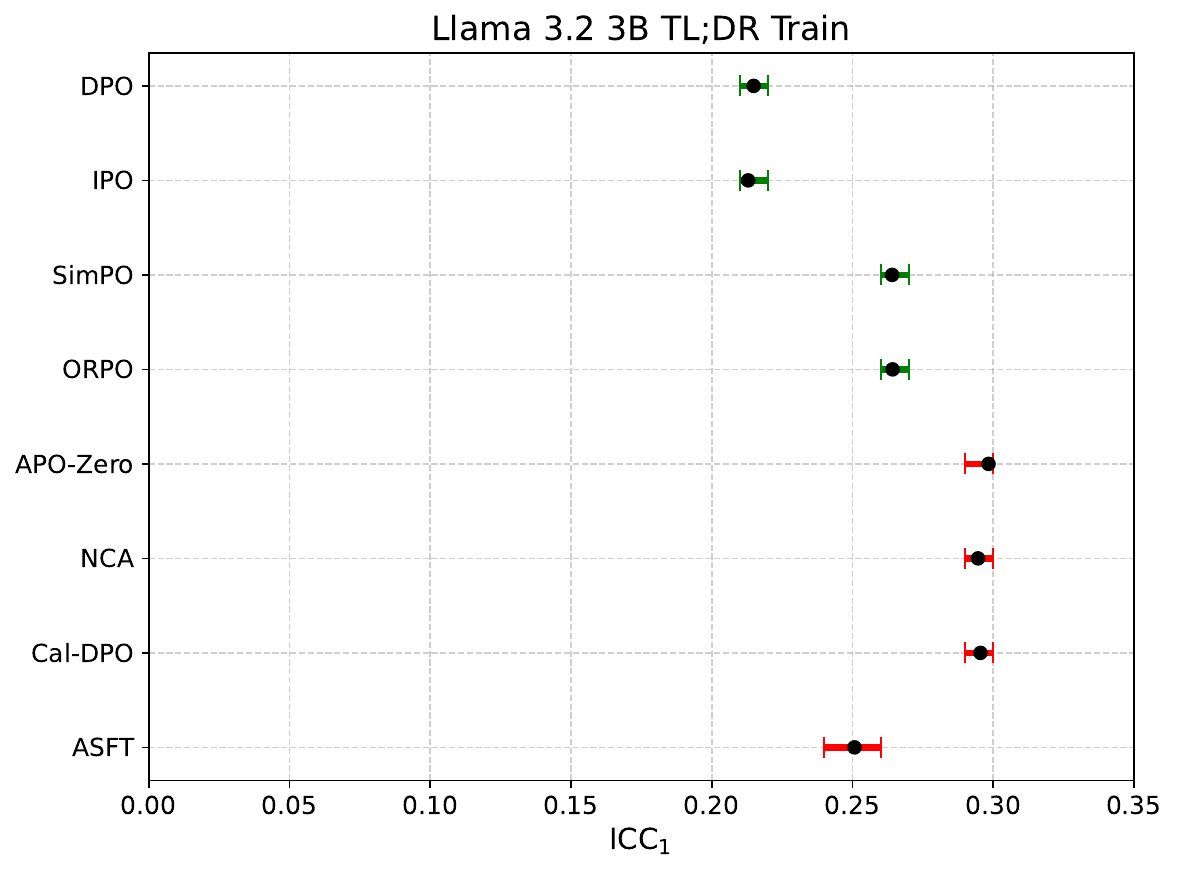}\hfill
    \includegraphics[width=0.48\textwidth]{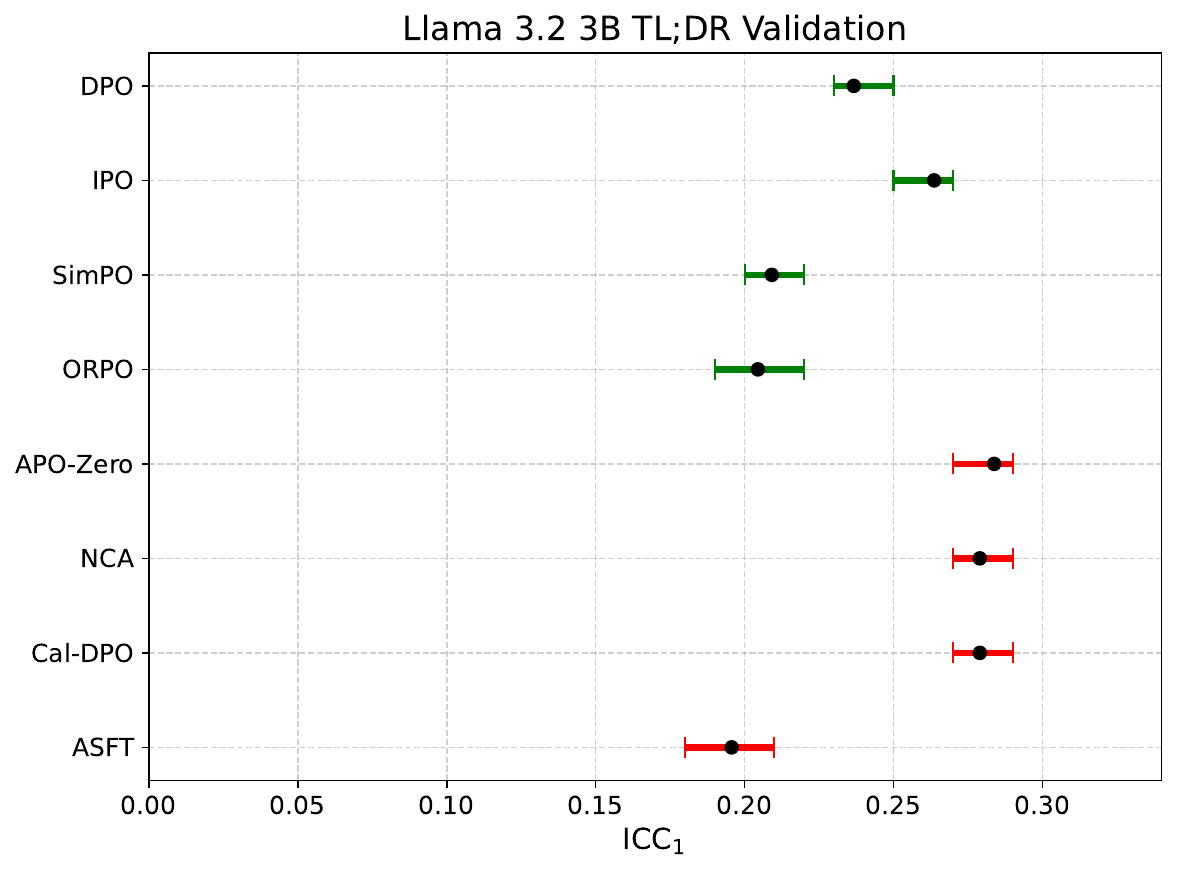}
    \caption*{(c) Llama 3.2 3B TLDR}
  \end{minipage}
  \caption{\textbf{$\mathrm{ICC}_1$ on real data.}
  $\mathrm{ICC}_1$ computed on the training and validation splits for the best model from each method, across \textbf{Llama 3.1 8B UF}, \textbf{Llama 3.2 3B UF}, and \textbf{Llama 3.2 3B TL;DR} setups. Error bars show $95\%$ confidence intervals. See Section~\ref{sec:discussion} for details.}
  \label{fig:icc_real}
\end{figure*}

\paragraph{Results.}
Figures~\ref{fig:toy_losses_capacity_1_4} and ~\ref{fig:toy_losses_capacity_5_8} present the results of the toy experiment, reporting test accuracy and $\text{ICC}_1$ across a range of model capacities (hidden dimension $h$), both for the unbiased ($\text{bias\_strength}=0.0$) and biased ($\text{bias\_strength}=0.9$) regimes.

In the unbiased condition (left panels), where the data contain no prompt-specific bias, all objectives -- pairwise (DPO, IPO) and pointwise (ASFT, NCA, Cal-DPO, APO-Zero) -- achieve identical accuracy for all $h$, and $\text{ICC}_1$ converges toward $-1$ as capacity increases. This confirms that when the underlying data are unbiased, neither class of objectives induces spurious prompt bias, and both are able to learn the quality structure of responses equally well.

In the biased condition (right panels), where prompt-specific bias is present in the data, the results partially mirror what we observe on real data. Examining $\mathrm{ICC}_1$, we see that our hypothesis is confirmed: pointwise methods reduce prompt bias, as indicated by lower $\mathrm{ICC}_1$, while for pairwise methods, $\mathrm{ICC}_1$ plateaus at a higher value. When comparing pointwise objective with $h=1$ and $h=3$, for $h=3$ the reduction in $\mathrm{ICC}_1$ is more pronounced than for $h=1$, indicating that a model with greater capacity is better able to reduce prompt bias.

If we examine the standard errors of accuracy, for $h=1$ (which is most analogous to the Llama 3.2 3B UF setup), there is substantial overlap in the SE intervals across all methods. This closely resembles the trends observed in the ArenaHard column of Table \ref{tab:alplaca}, where IPO, DPO, SimPO, ORPO, and APO-Zero tend to achieve higher mean performance, while ASFT, NCA, and Cal-DPO are lower on average; however, the confidence intervals for many methods overlap, indicating that the differences are not always statistically significant in this lower-capacity regime. For $h=3$, where in the pointwise case the model has more capacity to "spend" on removing bias, the gap between pairwise and pointwise objectives becomes more evident, mirroring the situation seen in Llama 3.1 8B UF. When $h > 4$, the task becomes trivial for the model, and the available capacity suffices both to minimize prompt bias and to achieve high ranking accuracy for all objectives; as a result, the performance of all methods converges. This parallels what we observe in the Llama 3.2 3B TL;DR setup.

These results are consistent with our hypothesis and provide strong evidence for why pairwise methods work better in certain regimes often encountered in real data - specifically, when the task is challenging enough that the model's capacity is insufficient to completely remove prompt bias. In such cases, differences between objectives are pronounced; for both very high and very low capacity, these differences vanish. 

Additionally, Figure~\ref{fig:icc_real} reports $\mathrm{ICC}_1$ with $95\%$ confidence intervals, computed for the best-trained model of each method (hyperparameters in Table~\ref{tab:best_hypers}) on the training and validation splits (the large CI on the UF validation split is due to the small data size; see Table~\ref{tab:dataset_summary}). Here, $r$ refers to $r_\theta^{\text{ref}}$ and $r_\theta^{\text{odds}}$ as appropriate for each method. These results also support our findings from the toy example and the hypothesis stated in Section~\ref{sec:discussion}: in Llama 3.1 8B UF, $\mathrm{ICC}_1$ is higher for pairwise methods, while for Llama 3.2 UF and TL;DR the results are mixed.

\section{Theoretical Analysis of Prompt-Specific Bias and Ranking Objectives}
\label{app:theory}

In Section~\ref{sec:discussion}, we attribute the performance gap between pairwise and pointwise methods to how they interact with prompt-specific biases, specifically, that pointwise methods expend model capacity ``unlearning'' biases that pairwise methods naturally ignore. In this appendix, we formalize this mechanism. We define the \emph{marginalized score} (prompt bias) for a general class of scalar score functions and analyze the gradient dynamics of pairwise and pointwise objectives with respect to this quantity.

\subsection{General Setup and Definitions}

Consider a conditional language model parameterized by $\theta$, denoting the probability of a response $y$ given prompt $x$ as $\pi_\theta(y|x)$. Let $\mathcal{D}$ be a dataset of preference pairs $(x, y_w, y_l)$.

\begin{definition}[Scalar Score Family]
\label{def:scalar_score}
We assume the scalar score used for alignment takes the general form:
$$
r_\theta(x, y) = F\big(\pi_\theta(y\mid x), \mathcal{C}(x, y)\big)
$$
where $F: [0, 1] \times \mathcal{Z} \to \mathbb{R}$ is differentiable with respect to its first argument (the probability $\pi_\theta(y\mid x)$). The term $\mathcal{C}(x,y)$ represents fixed context-dependent quantities (e.g., reference model probabilities $\pi_{\text{ref}}(y\mid x)$, sequence lengths $|y|$) that are independent of~$\theta$. The dependence on $\theta$ occurs only through $\pi_\theta(y\mid x)$. For notational convenience, when $\mathcal{C}(x,y)$ is fixed we will sometimes write $F(\pi_\theta(y\mid x))$ and leave the second argument implicit. This formulation covers $r_\text{ref}$, $r_\text{odds}$ and $r_\alpha$ by \cite{alpha_po} DAA families used in our experiments.
\end{definition}

\begin{definition}[Marginalized Score / Prompt Bias]
\label{def:marginalized_score}
For a fixed prompt $x$, let $q_x(y)$ be the empirical marginal distribution of candidate responses appearing in the dataset for that prompt (i.e., the set of all $y_w$ and $y_l$ associated with $x$). We define the \textbf{marginalized score} (or prompt-specific bias) as:
$$
b_\theta(x) := \mathbb{E}_{y \sim q_x} [r_\theta(x, y)] \;=\; \sum_{y \in \mathcal{Y}_x} q_x(y)\, F\big(\pi_\theta(y\mid x)\big).
$$
By construction, $q_x$ is a probability distribution over $\mathcal{Y}_x$, so $\sum_{y \in \mathcal{Y}_x} q_x(y) = 1$, and it is entirely determined by the dataset (hence independent of $\theta$).
\end{definition}

\subsection{Gradient Sensitivity of the Marginalized Score}

First, we must establish that $b_\theta(x)$ is indeed sensitive to model parameters. If $\nabla_\theta b_\theta(x) = 0$ everywhere, the distinction between objectives would be moot.

Let $z(x)$ denote the vector of unnormalized logits for the candidate set $\mathcal{Y}_x = \{y_1, \dots, y_K\}$ such that $\pi_\theta(y_k|x) = \text{softmax}(z(x))_k = \frac{e^{z_k}}{\sum_j e^{z_j}}$.

\begin{lemma}[Gradient of Marginalized Score w.r.t logits]
\label{lem:grad_marginalized}
The gradient of the prompt bias $b_\theta(x)$ with respect to the logit $z_m$ of a specific candidate $y_m$ is:
$$
\frac{\partial b_\theta(x)}{\partial z_m} = \pi_\theta(y_m|x) \left[ q_x(y_m) F'(\pi_m) - \sum_{y \in \mathcal{Y}_x} q_x(y) F'(\pi_y) \pi_\theta(y|x) \right]
$$
where $\pi_k \equiv \pi_\theta(y_k|x)$ and $F'(p) = \frac{\partial F}{\partial p}$.
\end{lemma}

\begin{proof}
Using the chain rule: $\frac{\partial b_\theta}{\partial z_m} = \sum_k q_x(y_k) F'(\pi_k) \frac{\partial \pi_k}{\partial z_m}$.
Recall the softmax derivative $\frac{\partial \pi_k}{\partial z_m} = \pi_k (\delta_{km} - \pi_m)$.
Substituting this:
\begin{align*}
\frac{\partial b_\theta}{\partial z_m} &= \sum_k q_x(y_k) F'(\pi_k) [\pi_k (\delta_{km} - \pi_m)] \\
&= q_x(y_m) F'(\pi_m) \pi_m - \pi_m \sum_k q_x(y_k) F'(\pi_k) \pi_k.
\end{align*}
This completes the proof.
\end{proof}

\begin{assumption}[Non-Degeneracy]
\label{ass:non_degeneracy}
We assume the score function $F$ and the data distribution $q_x$ are such that $\frac{\partial b_\theta(x)}{\partial z} \neq \mathbf{0}$.
\end{assumption}

\noindent\textit{Remark:} This holds generically. From Lemma~\ref{lem:grad_marginalized}, for the gradient to vanish for a fixed prompt $x$ and all logits $z_m$ we would need
\[
q_x(y_m)\,F'\!\big(\pi_\theta(y_m\mid x)\big) \;=\; \text{const} \quad \text{for all } y_m\in\mathcal{Y}_x.
\]
In other words, $q_x(y_m)$ would have to be exactly proportional to $1/F'(\pi_m)$, which imposes a highly specific compatibility between the fixed data distribution $q_x$ and the evolving model distribution $\pi_\theta(\cdot\mid x)$. Since $q_x$ is $\theta$-independent while $\pi_\theta$ changes throughout training, this condition defines at most a measure-zero set of parameter values and is not satisfied in generic training dynamics.

\begin{corollary}
The prompt bias $b_\theta(x)$ is a learnable functional of the model parameters. The model \emph{can} adjust it. The question is: \emph{do the objectives ask the model to adjust it?}
\end{corollary}

\subsection{Gradient Signal for Bias Update}

We define a general preference loss over a dataset $\mathcal{D}$ as $L(\theta) = \mathbb{E}_{(x, y_w, y_l) \sim \mathcal{D}} [\ell(r_w, r_l)]$, where $r_w, r_l$ are the scalar scores for $y_w, y_l$.

We can analyze how the loss generates gradient signals to update the prompt bias. Let $g_\theta(x, y) = \frac{\partial L}{\partial r_\theta(x, y)}$ be the backpropagated gradient signal w.r.t the score of a specific response $y$.

We define the \textbf{total score gradient} for prompt $x$ as:
$$
G_\theta(x) := \sum_{y \in \mathcal{Y}_x} g_\theta(x, y) = \sum_{y \in \mathcal{Y}_x} \frac{\partial L}{\partial r_\theta(x, y)}.
$$

\begin{theorem}[Gradient Signal for Bias]
\label{thm:grad_signal}
The total score gradient $G_\theta(x) = \sum_{y \in \mathcal{Y}_x} g_\theta(x, y)$ quantifies the sensitivity of the loss $L$ to a \emph{hypothetical} uniform shift in the scores for prompt $x$. Specifically, if it were possible to independently add a small constant $\varepsilon$ to $r_\theta(x, y)$ for all $y \in \mathcal{Y}_x$, the first-order change in the loss would be $\delta L = \varepsilon \cdot G_\theta(x)$.

This makes $G_\theta(x)$ a direct measure of the objective's incentive to alter the prompt bias $b_\theta(x)$. A non-zero $G_\theta(x)$ implies that the loss function, in isolation, generates a gradient signal that would drive such a uniform shift, and thus a change in the bias. Whether the model can perfectly satisfy this incentive depends on its parameterization and capacity.
\end{theorem}

\begin{proof}
Consider a uniform shift $\delta r = \varepsilon$ applied to the scores of all responses $y \in \mathcal{Y}_x$. The induced change in the loss $L$ is, to first order:
$$
\delta L = \sum_{y \in \mathcal{Y}_x} \frac{\partial L}{\partial r_\theta(x, y)} \cdot \delta r = \varepsilon \cdot \sum_{y \in \mathcal{Y}_x} g_\theta(x, y) = \varepsilon \cdot G_\theta(x).
$$

This shows that $G_\theta(x)$ is the derivative of $L$ with respect to a uniform shift in the scores for prompt $x$.

Now, note that a uniform shift $\delta r = \varepsilon$ for all $y \in \mathcal{Y}_x$ will change the prompt bias $b_\theta(x)$ by $\varepsilon$, because:
$$
\delta b_\theta(x) = \sum_{y \in \mathcal{Y}_x} q_x(y) \cdot \delta r = \varepsilon \cdot \sum_{y \in \mathcal{Y}_x} q_x(y) = \varepsilon.
$$
Therefore, the gradient of $L$ with respect to a change in the bias $b_\theta(x)$ (via a uniform shift) is exactly $G_\theta(x)$. Hence, $G_\theta(x) \neq 0$ implies that the objective function, in isolation, generates a gradient signal that would drive such a uniform shift, and thus a change in the bias.
We emphasize that this argument considers a \emph{hypothetical} perturbation in the space of scalar scores $r_\theta(x,y)$: it characterizes the structural sensitivity of the loss to per-prompt shifts, independently of whether the model parameterization allows implementing an exact uniform shift in score space.
\end{proof}

\subsection{Invariance of Pairwise Objectives}

We now prove that pairwise objectives are structurally invariant to prompt bias.

\begin{definition}[Pairwise Objective]
\label{def:pairwise}
A pairwise objective is defined as $\ell_{\text{pair}}(r_w, r_l) = \phi(r_w - r_l)$, for some differentiable $\phi: \mathbb{R} \to \mathbb{R}$. Examples include DPO ($\phi(u) = -\log \sigma(\beta u)$), IPO, SimPO, and ORPO (in the two-stage formulation).
\end{definition}

\begin{theorem}[Pairwise Invariance]
\label{thm:pairwise_invariance}
For any pairwise objective $\ell_{\text{pair}}$, the total score gradient $G_\theta(x)$ is zero for every prompt $x$:
$$
G_\theta(x) = \sum_{y \in \mathcal{Y}_x} g_\theta(x, y) = 0 \quad \forall x.
$$
Consequently, pairwise objectives do not generate a gradient signal for uniform shifts in scores (i.e., they are invariant to prompt bias).
\end{theorem}

\begin{proof}
Consider the set of pairs $\mathcal{P}_x = \{(y_w^{(i)}, y_l^{(i)})\}$ for prompt $x$.
The total loss is $L_x = \sum_i \phi(r_{w,i} - r_{l,i})$.
The derivative w.r.t. the score of a generic candidate $y$ is:
$$
g_\theta(x, y) = \sum_{i: y_w^{(i)}=y} \phi'(\Delta_i) + \sum_{i: y_l^{(i)}=y} (-\phi'(\Delta_i))
$$
where $\Delta_i = r_{w,i} - r_{l,i}$.
The total score gradient is:
\begin{align*}
G_\theta(x) &= \sum_{y \in \mathcal{Y}_x} g_\theta(x, y) \\
&= \sum_{y \in \mathcal{Y}_x} \left( \sum_{i: y_w^{(i)}=y} \phi'(\Delta_i) - \sum_{i: y_l^{(i)}=y} \phi'(\Delta_i) \right) \\
&= \sum_i \phi'(\Delta_i) - \sum_i \phi'(\Delta_i) = 0.
\end{align*}
This completes the proof.
\end{proof}

\subsection{Coupling of Pointwise Objectives}

\begin{definition}[Pointwise Objective]
\label{def:pointwise}
A pointwise objective decomposes into separate terms for winners and losers:
$$
\ell_{\text{point}}(r_w, r_l) = \psi_+(r_w) + \psi_-(r_l).
$$
Examples include NCA, Cal-DPO, and ASFT.
\end{definition}

\begin{theorem}[Pointwise Coupling]
\label{thm:pointwise_coupling}
For pointwise objectives of the form $\ell_{\text{point}}(r_w, r_l) = \psi_+(r_w) + \psi_-(r_l)$, the total score gradient $G_\theta(x)$ is:
$$
G_\theta(x) = \sum_{i} \left[ \psi'_+(r_{w,i}) + \psi'_-(r_{l,i}) \right],
$$
where the sum is over all preference pairs $(y_w^{(i)}, y_l^{(i)})$ for prompt $x$.

This sum is not constrained to be zero by the structure of the loss. For any individual pair, the condition for zero contribution,
$\psi'_+(r_w) + \psi'_-(r_l) = 0$, imposes a specific relationship between $r_w$ and $r_l$; requiring this to hold simultaneously for all pairs defines a non-generic equilibrium. Hence, for generic score configurations $(r_{w,i}, r_{l,i})$, pointwise objectives generate a non-zero gradient signal $G_\theta(x)$, explicitly incentivizing the model to change the prompt bias $b_\theta(x)$.
\end{theorem}

\begin{proof}
The total score gradient is derived as follows. For a pointwise objective, the loss for a single pair is $\ell = \psi_+(r_w) + \psi_-(r_l)$. The gradients with respect to the scores are:
\begin{align*}
\frac{\partial \ell}{\partial r_w} &= \psi'_+(r_w), \\
\frac{\partial \ell}{\partial r_l} &= \psi'_-(r_l).
\end{align*}
Summing over all pairs for prompt $x$, we obtain:
$$
G_\theta(x) = \sum_{i} \left[ \psi'_+(r_{w,i}) + \psi'_-(r_{l,i}) \right].
$$

This sum is not structurally constrained to be zero. For example, ASFT with $\psi_+(r) = -\log \sigma(r)$ and $\psi_-(r) = -\log \sigma(-r)$, we have $\psi'_+(r) = \sigma(r) - 1$ and $\psi'_-(r) = \sigma(r)$, giving a sum of $\sigma(r_w) - 1 + \sigma(r_l)$ which is zero only when $\sigma(r_w) + \sigma(r_l) = 1$.
\end{proof}

\subsection{Summary}

This analysis shows that \textbf{pairwise objectives are invariant, at the loss level, to uniform shifts in the scalar scores} for a given prompt: their total score gradient satisfies $G_\theta(x) = 0$, so they do not create a first-order incentive to change $b_\theta(x)$ as such. Throughout this appendix, we work in the standard offline alignment setting with static binary preference data, matching the scope of our experiments in the main paper; our conclusions are not claimed to automatically extend to online or iterative preference optimization regimes. Any evolution of the marginalized score under pairwise training arises only as a side effect of updates that modify score \emph{differences}, i.e., the ranking structure.

In contrast, \textbf{pointwise objectives are structurally sensitive to such shifts}: for generic score configurations they yield $G_\theta(x) \neq 0$ and therefore explicitly encourage the model to adjust not only relative scores but also their absolute level for each prompt. In capacity-limited regimes, this additional requirement to manipulate the prompt bias $b_\theta(x)$ competes with the primary ranking task, providing a mechanistic explanation for the performance gap we observe between pointwise and pairwise methods in our experiments. Finally, we note that this argument is formulated in terms of first-order gradient signals in score space: a sufficiently expressive model could still realize bias removal under pairwise training as a side effect, but the loss itself provides no direct incentive to do so, which is the key distinction we draw here.

\section{Pareto Fronts for Llama 3.2 Setups}
\label{app:pareto}

The results presented in this section correspond to the best hyperparameter configurations identified during the hyperparameter search described in Section~\ref{sec:par:beta-sens}, including the optimal learning rate for each method. This ensures that the Pareto fronts reflect the upper performance limits for alignment quality.

\begin{figure*}[h!]
  \centering
  \begin{subfigure}[t]{0.48\textwidth}
    \centering
    \includegraphics[width=\textwidth]{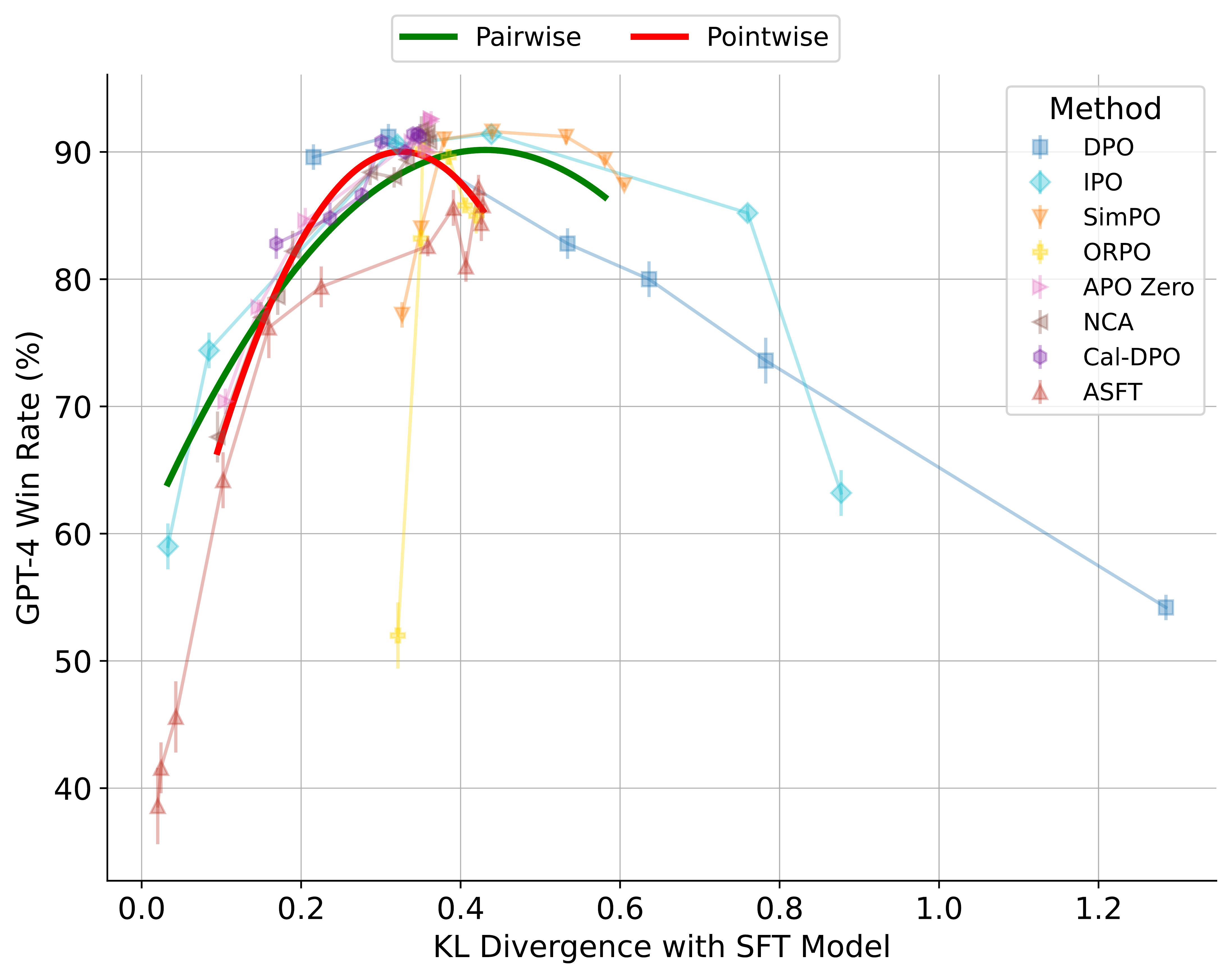}
    \caption{Llama 3.2 3B TL;DR}
    \label{fig:pareto-3b-tldr}
  \end{subfigure}\hfill
  \begin{subfigure}[t]{0.48\textwidth}
    \centering
    \includegraphics[width=\textwidth]{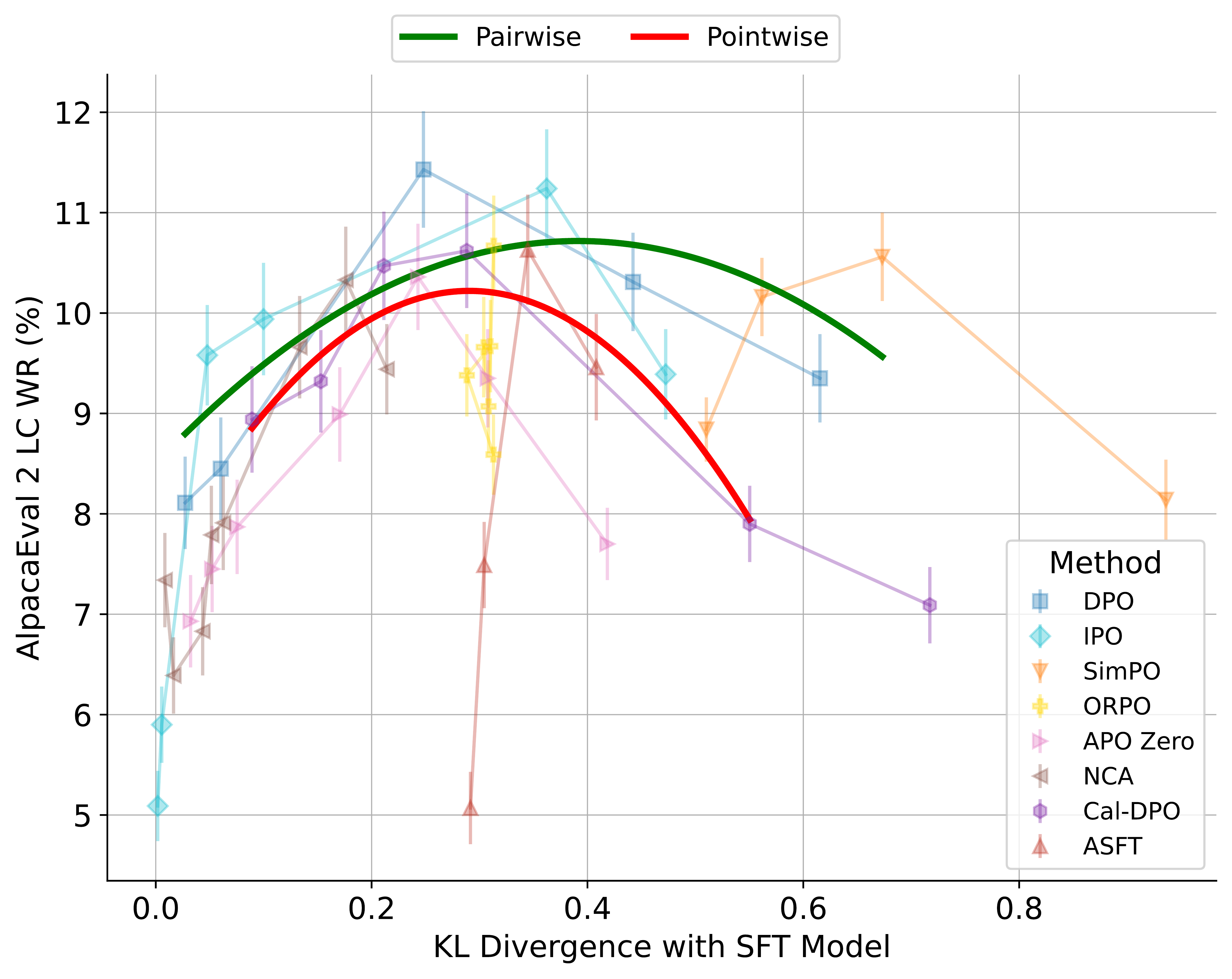}
    \caption{Llama 3.2 3B UF}
    \label{fig:pareto-3b-uf}
  \end{subfigure}
  \caption{\textbf{Pareto front for alignment quality and KL divergence.} Results for Llama 3.2 3B TL;DR and UF setups on GPT-4 Win Rate vs. "golden" validation subset and AlpacaEval 2 LC respectively with different \(\beta\) values. Methods are grouped into pairwise and pointwise categories. For the summarization task (Llama 3.2 3B TL;DR), both pointwise and pairwise methods achieve strong overall results. For the UF setup, methods also perform similarly within overlapping confidence intervals, indicating no clear separation.}
  \label{fig:3b-paretos}
\end{figure*}

\section{Llama MT-Bench Results and Permutation Tests}
\label{app:mtbench_stats}

We additionally evaluate the Llama UF-trained models on MT-Bench~\citep{mt_bench} using Kimi K2.5~\citep{kimi} as the judge. The scores in Table~\ref{tab:mtbench_results} follow the same pairwise/pointwise grouping as Table~\ref{tab:alplaca}.

\begin{table}[H]
\centering
\small
\setlength{\tabcolsep}{5pt}
\begin{tabular}{lcc}
\toprule
\textbf{Method} & \textbf{Llama 3.2 3B UF} & \textbf{Llama 3.1 8B UF} \\
\midrule
SFT & 3.33 & 4.14 \\
DPO & 3.81 & 5.31 \\
IPO & 3.71 & 5.22 \\
SimPO & \textbf{3.82} & 5.31 \\
ORPO & 3.67 & \textbf{5.52} \\
APO-Zero & 3.74 & 5.16 \\
NCA & 3.81 & 5.09 \\
Cal-DPO & 3.70 & 5.14 \\
ASFT & 3.73 & 4.96 \\
\bottomrule
\end{tabular}
\vspace{0.3em}
\caption{\textbf{MT-Bench results for Llama UF setups.} Scores are computed using Kimi K2.5 as the judge. Bold values indicate the best score for each setup.}
\label{tab:mtbench_results}
\end{table}

\begin{table}[H]
\centering
\small
\setlength{\tabcolsep}{3.4pt}
\begin{tabular}{lcc}
\toprule
\textbf{Metric} & \textbf{Llama 3.2 3B: \(\Delta\) (p)} & \textbf{Llama 3.1 8B: \(\Delta\) (p)} \\
\midrule
AE LC & +0.49 (0.057) & +5.13 (0.014) \\
AE WR & +1.51 (\textbf{0.014}) & +7.36 (\textbf{0.014}) \\
ArenaHard & +1.40 (\textbf{0.014}) & +4.85 (\textbf{0.014}) \\
MT-Bench & +0.01 (0.443) & +0.25 (\textbf{0.014}) \\
\bottomrule
\end{tabular}
\vspace{0.3em}
\caption{\textbf{Pairwise vs.\ pointwise permutation tests on Llama UF setups.} \(\Delta\) is pairwise mean minus pointwise mean in the native metric units (percentage points except MT-Bench). We use one-sided permutation tests over all \(\binom{8}{4}=70\) assignments of eight methods into two groups of four. Bold \(p\)-values denote \(p=0.014\), the minimum attainable value under this exact test.}
\label{tab:pairwise_permutation}
\end{table}
\FloatBarrier

\section{Learning-rate–to–$\beta$ ratio for different model sizes}
\label{app:beta_lr}

\begin{figure*}[h!]
  \centering
  \begin{subfigure}[t]{0.99\textwidth}
    \centering
    \includegraphics[width=\textwidth]{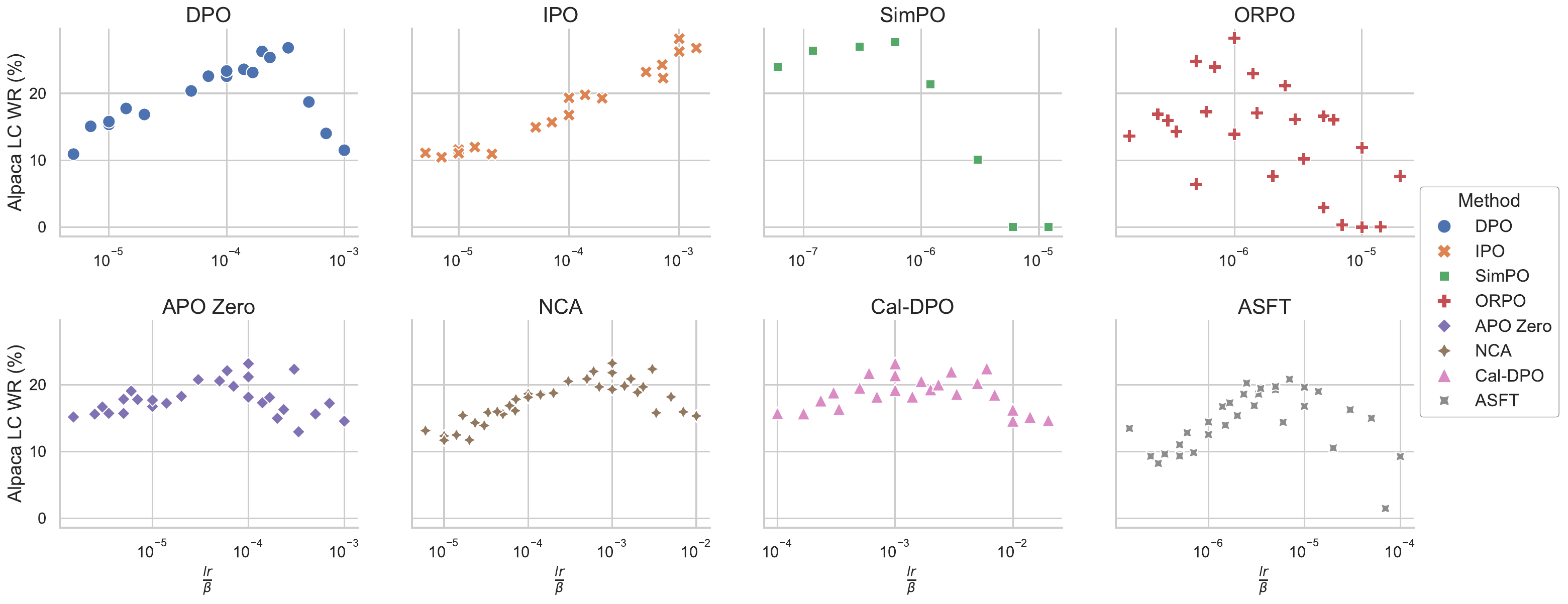}
    \caption{Llama 3.1 8B UF}
    \label{fig:lr_beta-8b-uf}
  \end{subfigure}\hfill
  \begin{subfigure}[t]{0.99\textwidth}
    \centering
    \includegraphics[width=\textwidth]{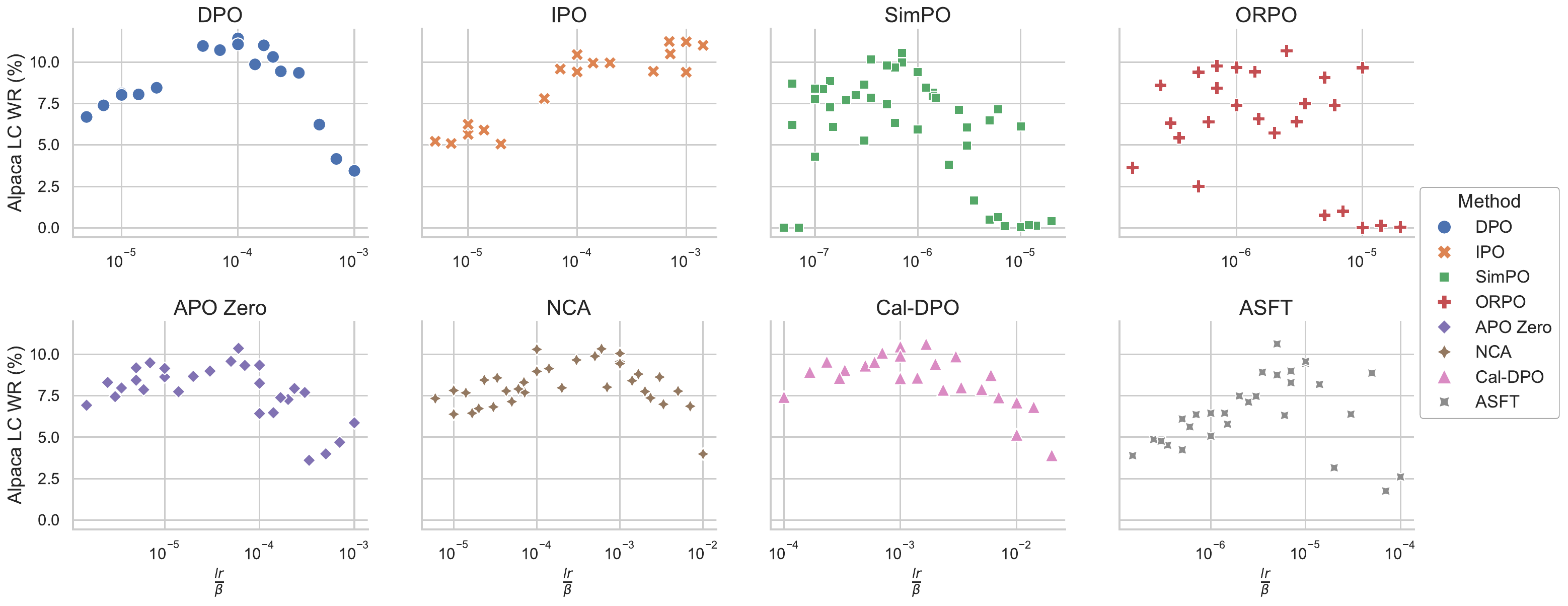}
    \caption{Llama 3.2 3B UF}
    \label{fig:lr_beta-3b-uf}
  \end{subfigure}
  \caption{\textbf{Alignment quality versus \(\tfrac{\text{lr}}{\beta}\).} Each point is a single run from the grid described in Section~\ref{sec:par:beta-sens}. The \(x\)-axis shows the ratio \(\text{lr}/\beta\); the \(y\)-axis is AlpacaEval 2 LC WR.}
  \label{fig:lr_beta_ratio}
\end{figure*}

\noindent

Figure~\ref{fig:lr_beta_ratio} shows that the relationship between alignment quality and the \(\text{lr}/\beta\) ratio remains consistent across Llama 3B and 8B model scales for each method, despite minor shifts along the $x$-axis. The only noticeable trend is that $r_\theta^{\mathrm{ref}}$-based methods are generally more stable and less prone to quality degradation due to the presence of a reference policy, but this does not affect the peak quality achievable by each method (see Table~\ref{tab:alplaca}).

\begin{table}[t]
\centering
\scriptsize
\setlength{\tabcolsep}{2.8pt}
\begin{tabular}{llccc}
\toprule
\textbf{Scale} & \textbf{Method} & \textbf{Best LC\%} & \textbf{Top-3 Mean} & \textbf{Top-3 Std} \\
\midrule
3B & \textbf{DPO} & 11.43 & 11.17 & 0.23 \\
3B & \textbf{IPO} & 11.24 & 11.16 & 0.13 \\
3B & \textbf{SimPO} & 10.56 & 10.17 & 0.39 \\
3B & \textbf{ORPO} & 10.67 & 10.04 & 0.55 \\
3B & APO-Zero & 10.36 & 9.81 & 0.48 \\
3B & NCA & 10.33 & 10.23 & 0.15 \\
3B & Cal-DPO & 10.62 & 10.39 & 0.27 \\
3B & ASFT & 10.63 & 9.89 & 0.65 \\
\midrule
8B & \textbf{DPO} & 26.82 & 26.16 & 0.73 \\
8B & \textbf{IPO} & 28.18 & 27.07 & 1.00 \\
8B & \textbf{SimPO} & 27.65 & 26.99 & 0.65 \\
8B & \textbf{ORPO} & 28.25 & 25.67 & 2.28 \\
8B & APO-Zero & 23.15 & 22.52 & 0.56 \\
8B & NCA & 23.21 & 22.51 & 0.63 \\
8B & Cal-DPO & 23.19 & 22.75 & 0.39 \\
8B & ASFT & 20.82 & 20.26 & 0.55 \\
\bottomrule
\end{tabular}
\vspace{0.3em}
\caption{\textbf{Top-3 hyperparameter robustness on AlpacaEval~2 LC.} For each method, the mean and standard deviation are computed over its top three configurations from the learning-rate/\(\beta\) grid. Bold method names denote pairwise objectives. At 8B, every pairwise top-3 mean exceeds every pointwise top-3 mean.}
\label{tab:top3_hp_robustness}
\end{table}

\section{Intraclass Correlation Coefficient (\(\text{ICC}_1\)) in the Toy Experiment}
\label{app:icc}

The Intraclass Correlation Coefficient (\(\text{ICC}_1\)) \cite{icc, shrout1979intraclass} is a statistical measure used to quantify how much of the total variance in a set of observations is attributable to differences between groups (here, values of the context variable $x$), as opposed to random variation within each group (here, pairs of candidate scores for the same $x$).

\paragraph{Purpose in Our Setting.}  
In our toy experiment, the goal is to assess the extent to which the model's learned scoring function $r_\theta(x, r)$ exhibits prompt-specific bias: that is, systematic differences in the average score assigned to different contexts $x$, independent of differences between candidate completions for the same $x$.

\paragraph{Mathematical Formulation.}  
Given that for each value of $x$ we have two completions with model scores $r_\theta(x, r_w)$ and $r_\theta(x, r_l)$, we define the prompt-specific baseline as the average score for $x$:
\[
\hat{b}(x) = \frac{r_\theta(x, r_w) + r_\theta(x, r_l)}{2}.
\]
We are interested in the variance of $\hat{b}(x)$ across contexts, $\mathrm{Var}_x[\hat{b}(x)]$, which captures how much the model's scores "shift" between different values of $x$. The total variance in the model's scores is $\mathrm{Var}_{x, y}[r_\theta(x, r)]$, computed over all context-candidate pairs.

For the case of $k=2$ candidates per context, the $\text{ICC}_1$ is given by:
\[
\text{ICC}_1 = 2 \cdot \frac{\mathrm{Var}_x\left[\hat{b}(x)\right]}{\mathrm{Var}_{x, y}\left[r_\theta(x, r)\right]} - 1
\]
This is a standard algebraic form of the one-way random effects ICC estimator for the case of $k=2$ repeated measurements per group, as detailed in \cite{shrout1979intraclass, mcgraw1996forming, searle1992variance}.

\paragraph{Interpretation.}  
$\text{ICC}_1 \approx -1$: Virtually all variance is within each context (i.e., between the two candidate scores for the same $x$), and the model assigns no systematic bias per context. In our unbiased data condition, where the true input baseline is zero, a well-trained model should yield $\text{ICC}_1$ close to $-1$.
\noindent

$\text{ICC}_1 \approx 0$: About half the variance is due to differences between contexts, and half is within contexts.
\noindent

$\text{ICC}_1 \to 1$: Most of the variance is between contexts, i.e., the model's output scores strongly reflect context-specific bias.

\paragraph{Connection to Data Generation and Model Behavior.}
In our experiment, the input scores to the model are centered so that (in the absence of injected bias) the true baseline for each context $x$ is zero. When a context bias is present in the data (nonzero $b_x$), a model that captures this bias will have $\mathrm{Var}_x[\hat{b}(x)] > 0$, yielding a higher $\text{ICC}_1$. If the learning objective (e.g., pointwise) suppresses or removes this context bias, $\mathrm{Var}_x[\hat{b}(x)]$ will decrease, and $\text{ICC}_1$ will approach $-1$.

Conversely, pairwise objectives, which focus only on differences between candidates for the same context, do not penalize nor remove such baseline shifts, and thus tend to preserve the bias structure of the data.

Thus, $\text{ICC}_1$ is a direct measure of whether the model's learned scores have inherited context-specific bias (structure) from the training data, or have been actively normalized to remove such bias. This distinction is crucial for demonstrating how pairwise and pointwise objectives interact differently with data-induced biases in our toy experiment.

\section{GPT-4 Side-By-Side Evaluation Prompt}
\label{app:gpt_prompts}

For our Side-By-Side evaluations with \texttt{GPT-4o}, we designed a prompt tailored to the Reddit TL;DR dataset to assess \textit{accuracy}, \textit{completeness}, \textit{relevance}, and \textit{conciseness}. The full prompt used in our experiments is detailed below.

\noindent\rule{\textwidth}{0.5pt}

\begin{verbatim}
Act as an impartial judge and evaluate the quality of the summaries provided 
by two AI assistants for the text displayed below. Your evaluation should 
consider accuracy, completeness, relevance, and conciseness.

You will be given a text, Assistant A's summary, and Assistant B's summary. 
Your job is to evaluate which assistant's summary is better based on the 
text provided.

Begin your evaluation by comparing both assistants' summaries with the 
original text. Identify and correct any inaccuracies.
Ensure the summaries are complete, capturing all essential information 
from the text without introducing fabricated details.
Assess the relevance of the information each assistant chose to include 
in their summary, ensuring it reflects the core message of the text.
Evaluate the conciseness of the summaries, favoring those that efficiently 
convey the necessary information without unnecessary verbosity.
Avoid any position biases and ensure the order in which the summaries 
were presented does not influence your decision.
Do not allow the length of the summaries to influence your evaluation, 
except in the context of conciseness and efficiency.
Do not favor certain names of the assistants.
Be as objective as possible.
You should only evaluate the summaries provided by both assistants 
and NOT the original text itself.
If both summaries are irrelevant, contain hallucinations, or are 
inconsistent with the original text, mark the comparison as inconclusive 
and choose option "C".

After providing your explanation, output your final verdict by strictly 
following this format:

"""
Comparison: <One-sentence comparison>
Winner: <A if assistant A is better, B if assistant B is better, and C for a tie.>
"""
\end{verbatim}

\noindent\rule{\textwidth}{0.5pt}

\end{document}